\documentclass{article}
\usepackage{PRIMEarxiv}

\usepackage[utf8]{inputenc} 
\usepackage[T1]{fontenc}    
\usepackage{hyperref}       
\usepackage{url}            
\usepackage{booktabs}       
\usepackage{amsfonts}       
\usepackage{nicefrac}       
\usepackage{microtype}      
\usepackage{lipsum}
\usepackage{fancyhdr}       
\usepackage{graphicx}       
\graphicspath{{media/}}     
 \usepackage{natbib}

\usepackage{amsmath}
\usepackage{amsthm}  
\usepackage{multirow}

\usepackage{pifont}
\usepackage{newunicodechar}
\usepackage{tabularx}
\usepackage{float}
\usepackage{longtable}
\usepackage{subcaption}

\newunicodechar{✓}{\ding{51}}
\newunicodechar{✗}{\ding{55}}

\newtheorem{proposition}{Proposition}
\newtheorem{theorem}{Theorem}
\newtheorem{lemma}{Lemma}
\newtheorem{remark}{Remark}
\newtheorem{corollary}{Corollary}

\usepackage{chngcntr}
\counterwithin{theorem}{section}
\counterwithin{proposition}{section}
\counterwithin{lemma}{section}
\counterwithin{corollary}{section}

\pagestyle{fancy}
\thispagestyle{empty}
\rhead{ \textit{ }} 

\fancyhead[LO]{Angular Regularization for Positive-Unlabeled Learning on the Hypersphere}

\title{Angular Regularization for Positive-Unlabeled Learning on the Hypersphere
\thanks{\textit{\underline{Citation}}: 
\textbf{Vasileios Sevetlidis, George Pavlidis, and Antonios Gasteratos.
Angular Regularization for Positive-Unlabeled Learning on the Hypersphere.
Transactions on Machine Learning Research, 2025. 
URL: \url{https://openreview.net/forum?id=XQhO0Ly6el}.
Featured Certification, J2C Certification.}}
}

\author{
  Vasileios Sevetlidis* \\
  Athena RC \\
  Democritus University of Thrace \\
  Xanthi, GR67100, Greece\\
  \texttt{vasiseve@athenarc.gr} \\
   \AND
  George Pavlidis \\
  Athena RC \\
  University Campus Kimmeria \\
  Xanthi, GR67100, Greece\\
  \texttt{gpavlid@athenarc.gr} \\
  \AND
  Antonios Gasteratos \\
  Democritus University of Thrace \\
  Dept. Production and Management Engineering\\
  Vas. Sofias 17, Xanthi, GR67100, Greece \\
  \texttt{agaster@pme.duth.gr} \\
}

\begin{document}
\maketitle

\begin{abstract}
Positive–Unlabeled (PU) learning addresses classification problems where only a subset of positive examples is labeled and the remaining data is unlabeled, making explicit negative supervision unavailable. Existing PU methods often rely on negative-risk estimation or pseudo-labeling, which either require strong distributional assumptions or can collapse in high-dimensional settings. We propose AngularPU, a novel PU framework that operates on the unit hypersphere using cosine similarity and angular margin. In our formulation, the positive class is represented by a learnable prototype vector, and classification reduces to thresholding the cosine similarity between an embedding and this prototype—eliminating the need for explicit negative modeling. To counteract the tendency of unlabeled embeddings to cluster near the positive prototype, we introduce an angular regularizer that encourages dispersion of the unlabeled set over the hypersphere, improving separation. We provide theoretical guarantees on the Bayes-optimality of the angular decision rule, consistency of the learned prototype, and the effect of the regularizer on the unlabeled distribution. Experiments on benchmark datasets demonstrate that AngularPU achieves competitive or superior performance compared to state-of-the-art PU methods, particularly in settings with scarce positives and high-dimensional embeddings, while offering geometric interpretability and scalability.

\end{abstract}

\keywords{First keyword \and Second keyword \and More}

\section{Introduction}

Positive--Unlabeled (PU) learning addresses classification scenarios in which labeled negatives are scarce or infeasible to obtain \citep{bekker2020learning,elkan2008learning}. Such settings are common in domains like medical diagnosis, sentiment analysis, and anomaly detection, where negative examples can be costly, ambiguous, or ethically constrained. Formally, we are given a set of labeled positive instances and a large pool of unlabeled data containing an unknown mix of positives and negatives; the task is to learn a classifier that separates the two without explicit negative supervision.

Existing PU methods typically rely on \emph{surrogate modeling} of the negative class\citep{yu2002pebl,liu2003building,hsieh2019classification,gong2021instance,yuan2025weighted}. Risk-based approaches (e.g., unbiased or non-negative PU risk estimators and their imbalanced variants) decompose the classification risk into terms involving only positives and unlabeled data, using an assumed class prior to correct for the implicit treatment of all unlabeled examples as negatives \citep{du2014analysis,du2015classprior,ramaswamy2016mixprop,kiryo2017positive}. While theoretically sound, these methods are sensitive to prior estimation errors and degrade when the learned representation is insufficiently discriminative. EM-style and prototype-contrastive methods, in contrast, generate pseudo-labels for the unlabeled set and alternate between representation learning and classifier updates; however, such alternating schemes are prone to confirmation bias—early mislabeling of negatives as positives can cause cascading errors—and often require auxiliary mechanisms such as momentum queues, hard-negative mining, or contrastive pair construction\citep{arazo2019pseudolabel,cascante2021curriculum}; contrastive mechanics: \citep{he2020moco,wu2018instancedisc,chen2020simclr}. Boundary- or anomaly-based methods take a different route by synthesizing negatives or convexifying the positive set in latent space, but these heuristics can fail when the positive manifold is complex or non-convex \citep{scholkopf2001ocsvm,ruff2018deepsvdd}.

A common limitation across these approaches is the reliance on either explicit negative modeling or iterative pseudo-negative construction, both of which can be unstable in high-dimensional feature spaces. Moreover, most methods treat representation learning as secondary, leaving the geometric structure of the embedding space underexploited.

We take a \emph{geometry-first} approach: we explicitly model only the positive class and treat all other instances as dispersed over the hypersphere. Concretely, we propose a neural encoder that maps inputs to the unit hypersphere and represents the positive class using a directional scoring function based on cosine similarity to a learnable prototype vector. This yields a simple \emph{angular score}
\[
    s(z) = \kappa\,\mu^\top z,
\]
which provides a geometrically meaningful decision rule and eliminates the need for explicit negative modeling, pseudo-negatives, or class-prior estimation.

To reduce false positive clustering near the positive prototype, we introduce an \emph{unlabeled-only angular regularizer} that encourages embeddings of unlabeled examples to be well-dispersed over the hypersphere. The encoder and prototype are learned jointly in a single-stage, end-to-end optimization.

Our contributions are:
\begin{enumerate}
    \item \textbf{Hyperspherical PU modeling:} We formulate PU learning in a hyperspherical embedding space, using cosine similarity to a learned positive prototype as the classification score.
    \item \textbf{Unlabeled-only angular regularization:} We regularize only the unlabeled set toward hyperspherical uniformity, mitigating false positive clustering and improving separation in high-dimensional spaces.
    \item \textbf{Simplicity and stability:} Our method is a single-stage, geometry-driven approach that avoids momentum queues, contrastive pairs, pseudo-negative heuristics, or class-prior estimation, while remaining theoretically grounded and scalable.
\end{enumerate}

Experiments on four benchmark datasets (CIFAR-10, STL-10, SVHN, and ADNI) show that our method achieves competitive or superior performance compared to state-of-the-art PU baselines, particularly in regimes with few labeled positives, while offering a stable and interpretable geometric framework for PU learning.
\section{Related Work}

PU learning has been studied extensively over the past two decades, with methods broadly falling into three categories: \emph{risk-based approaches}, \emph{representation-learning approaches}, and \emph{boundary/anomaly-based methods}. Below, we summarize the most relevant works and position our method within this landscape.

Unbiased risk estimation (uPU)~\citep{du2014analysis} and its non-negative variant nnPU~\citep{kiryo2017positive} form the basis of many PU approaches, decomposing the classification risk into terms involving only positives and unlabeled data. While theoretically grounded, these estimators are sensitive to class prior estimation, can overestimate negative risk, and may overfit when representation quality is low. Imbalanced nnPU~\citep{su2021positive} addresses class imbalance via reweighting, effectively oversampling positives in PN settings. However, it still assumes accurate knowledge of the class prior $\pi_1$, inherits nnPU’s non-negativity heuristic, and does not directly improve the geometry of the learned embeddings—a crucial factor in high-dimensional settings. In contrast, our method avoids negative-risk estimation and reweighting altogether, directly modeling the positive distribution on the hypersphere and regularizing the unlabeled set toward angular uniformity.

To overcome the representation bottleneck of purely risk-based methods, recent works integrate metric learning or contrastive objectives into PU frameworks. WConPU~\citep{yuan2025weighted} combines weighted contrastive learning with prototype-based classification, alternating between prototype updates, pseudo-labeling, and classifier training, while also performing hard-negative mining from a momentum queue. This improves embeddings but introduces multiple interdependent components and requires a class-prior estimate. Our approach shares the goal of compact positive clustering but replaces the pseudo-negative pipeline with a probabilistic vMF prototype learned end-to-end, requiring neither EM-style alternation nor memory queues.

In the broader weakly supervised learning literature, PiCO~\citep{wang2022pico} addresses partial-label learning via label disambiguation and contrastive representation learning. Subsequent work has shown that contrastive learning often produces mixtures of vMF distributions in embedding space. However, PiCO assumes candidate label sets for each sample and is not directly applicable to the PU setting~\citep{yuan2025weighted}. Our work makes the vMF modeling explicit for positives in PU, eliminating the need for candidate labels, contrastive pair construction, or label disambiguation.

Ensemble-diversity self-training~\citep{odonnat2024leveraging} encourages disagreement among hypotheses on unlabeled data to mitigate selection bias. Our approach is complementary: we enforce dispersion in representation space for unlabeled data while aligning positives to a spherical prototype. The alignment--uniformity lens of~\citep{wang2020understanding} directly motivates our design: positive alignment to $\mu$ and unlabeled uniformity on $\mathbb{S}^{d-1}$.

Another direction infers negatives by modeling the positive region and treating outliers as negative. Dense-PU~\citep{sevetlidis2024dense} trains a convolutional autoencoder on positives, interpolates in the latent space, and uses convex hull or dense region boundaries to identify negative candidates for downstream PN training.  This two-stage approach depends on SCAR assumptions and convexity heuristics, which may fail for complex, non-convex manifolds. In contrast, our method is end-to-end, avoids explicit negative generation, and uses a principled probabilistic model for the positive class on the hypersphere. Applications of this approach include  defect detection in manufacturing \citep{sevetlidis2023defect} and photovoltaic panels \citep{sevetlidis2024positive}, and black-spot road-accident identification \citep{karamanlis2023deep}.

Most existing PU methods either (i) assume a specific distribution for the negative class, (ii) iteratively construct pseudo-negatives, or (iii) decouple representation learning from classifier optimization. Each of these choices brings well-known drawbacks: risk-based estimators are sensitive to class-prior misspecification and do little to shape the embedding space; pseudo-labeling pipelines amplify early mistakes through confirmation bias; and contrastive methods require auxiliary machinery such as momentum queues and apply uniformity constraints indiscriminately, often eroding positive compactness. Our geometry-first framework sidesteps all three pitfalls by modeling only the positive class as a von Mises–Fisher distribution on the hypersphere and enforcing uniformity exclusively on the unlabeled set. While our method does use a constant soft label of $0.5$ for unlabeled data—reflecting maximum uncertainty—it avoids dynamic or EM-style pseudo-labeling and relies solely on a probabilistic directional score for classification. The result is a simple, stable, and theoretically grounded PU learner that aligns its inductive bias directly with the geometry of high-dimensional embeddings.

\begin{table}[t]
\centering
\caption{Comparison of representative PU learning methods. ``Prior'' indicates reliance on a class-prior estimate; ``Neg. model'' means an explicit distributional assumption for the negative class; ``Pseudo-neg.'' indicates iterative pseudo-negative labeling; ``Contrastive'' denotes explicit contrastive pair construction; ``vMF$^+$'' means an explicit von Mises--Fisher model for positives; ``U-uniform'' indicates an unlabeled-only uniformity regularizer; ``End-to-end'' denotes single-stage training without multi-phase pipelines.}
\label{tab:rw_comparison}
\footnotesize
\begin{tabular}{lccccccc}
\toprule
Method & Prior & Neg. model & Pseudo-neg. & Contrastive & vMF$^+$ & U-uniform & End-to-end \\
\midrule
nnPU~\citep{kiryo2017positive} & \checkmark & ✗ & ✗ & ✗ & ✗ & ✗ & \checkmark \\
Imbalanced nnPU~\citep{su2021positive} & \checkmark & ✗ & ✗ & ✗ & ✗ & ✗ & \checkmark \\
WConPU~\citeyearpar{}{yuan2025weighted} & \checkmark & ✗ & \checkmark & \checkmark & ✗ & ✗ & ✗ \\
PiCO~\cite{wang2022pico} & ✗ & ✗ & \checkmark & \checkmark & ✗ & ✗ & ✗ \\
Dense-PU~\citep{sevetlidis2024dense} & ✗ & ✗ & \checkmark & ✗ & ✗ & ✗ & ✗ \\
\midrule
\textbf{Ours} & ✗ & ✗ & ✗\footnotemark & ✗ & \checkmark & \checkmark & \checkmark \\
\bottomrule
\end{tabular}
\end{table}
\footnotetext{We use a fixed uncertainty label of $0.5$ for the unlabeled set, rather than dynamic pseudo-labeling or negative mining.}

\section{Proposed Method}
\label{sec:method}

\begin{figure}[h]
    \centering
    \includegraphics[width=\textwidth]{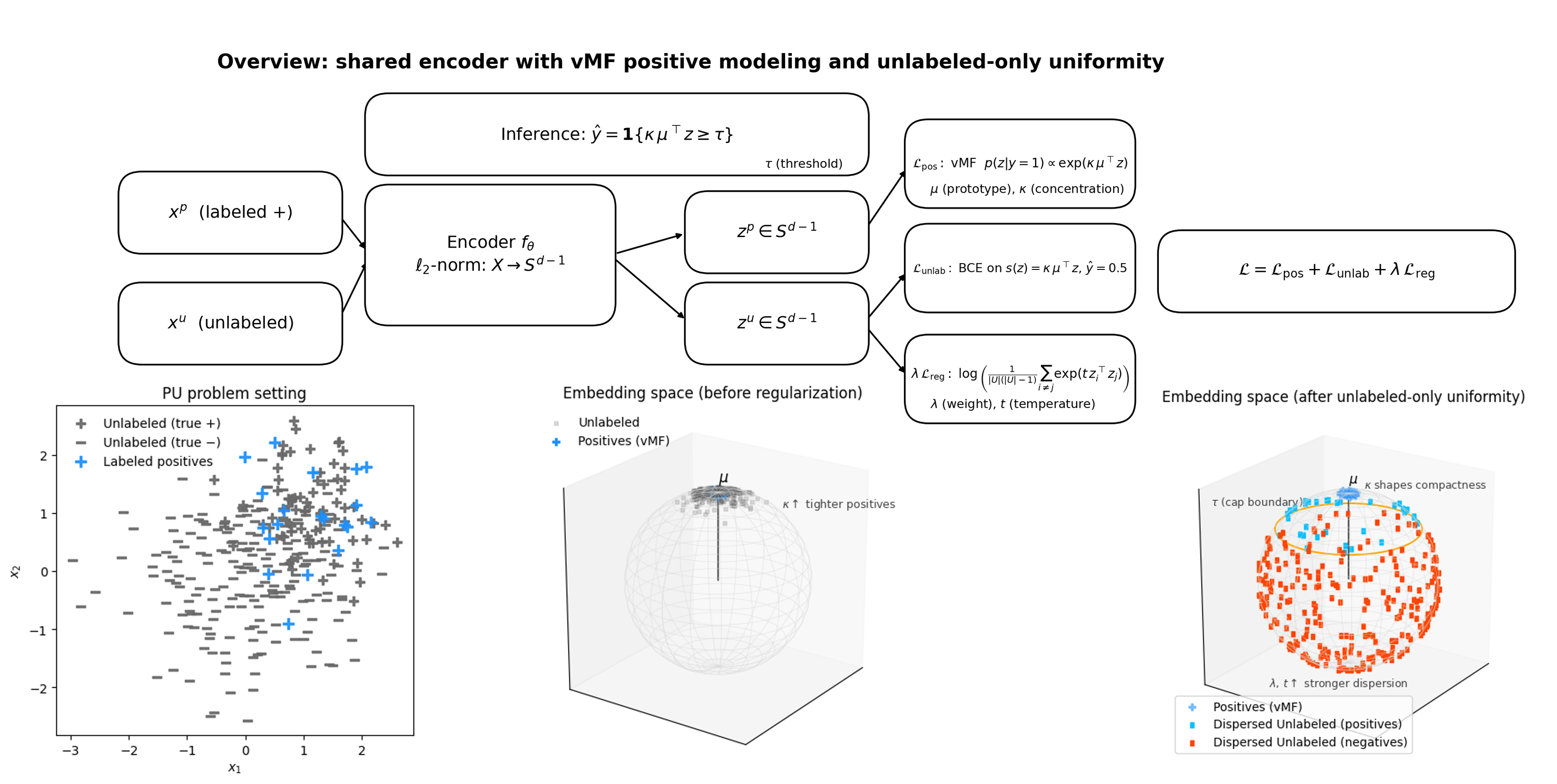}
    \caption{Overview of our hyperspherical PU method. A shared encoder $f_\theta$ maps labeled positives $x^p$ and unlabeled samples $x^u$ onto the unit hypersphere. Positives are pulled toward a learnable direction $\mu$ via a cosine-based alignment loss ($\mathcal{L}_{\mathrm{pos}}$). Unlabeled samples are trained with a symmetric BCE loss ($\mathcal{L}_{\mathrm{unlab}}$) and dispersed via an angular uniformity regularizer ($\mathcal{L}_{\mathrm{reg}}$). Right: regularization increases angular separation, yielding compact positives and spread-out unlabeled embeddings.}
    \label{fig:method}
\end{figure}

We address the PU learning problem, where only a small set of labeled positives $\mathcal{D}_p$ and a large set of unlabeled samples $\mathcal{D}_u$ are available during training. The unlabeled set contains an unknown mixture of positives and negatives, with no access to explicitly labeled negatives. This setup commonly arises in settings where negative categories are heterogeneous, ill-defined, or impractical to label exhaustively.

Our method learns a neural encoder $f_\theta : \mathcal{X} \rightarrow \mathbb{S}^{d-1}$ that maps inputs to $\ell_2$-normalized feature vectors on the unit hypersphere \citep{mardia2009directional,dhillon2003vmf} (see \figureautorefname~\ref{fig:method}). This geometry is well-suited for angular decision rules based on cosine similarity. Rather than modeling a full density, we score each embedded vector $z = f_\theta(x)$ via its similarity to a learnable direction $\mu \in \mathbb{S}^{d-1}$, using a scaled cosine similarity. This directional score draws inspiration from the von Mises–Fisher distribution, which defines a density over the hypersphere with normalization constant $C_d(\kappa)$ (see Appendix~\ref{app:vMF} for details). While we do not require this constant in training, it is used in the log-likelihood derivation and analytical bounds:

\[
    s(z) = \kappa\,\mu^\top z,
\]
where $\kappa > 0$ is a fixed scaling factor controlling the sharpness of the score. This directional score serves as a soft classifier, with $\mu$ acting as a prototype for the positive class, which is updated by gradient descent and re-projected onto the unit sphere after each optimization step to enforce $\|\mu\| = 1$. That is, we perform:
\[
    \mu \leftarrow \frac{\mu}{\|\mu\|_2} \quad \text{after each update}.
\]

We adopt a geometric inductive bias that encourages unlabeled samples to be dispersed over the sphere, inspired by the idea that negatives occupy diverse regions of embedding space while positives tend to cluster near $\mu$. This formulation is consistent with a vMF–uniform generative model. In such a setting, the Bayes-optimal decision rule reduces to thresholding the inner product $\mu^\top z$. While this motivates our directional scoring formulation, in practice we use empirical threshold selection without requiring class priors or normalization constants.

The model is trained by minimizing a loss composed of three terms. For labeled positives $P \subset \mathcal{D}_p$, we encourage alignment with $\mu$:
\begin{equation}
\mathcal{L}_{\text{pos}} = - \frac{1}{|P|} \sum_{i \in P} \kappa \,\mu^\top z_i.
\end{equation}

For unlabeled samples $U \subset \mathcal{D}_u$, we apply a symmetric binary cross-entropy loss with neutral supervision, reflecting maximum uncertainty (i.e., $\mathbb{E}[y] = 0.5$):
\begin{equation}
\mathcal{L}_{\text{unlab}} = - \frac{1}{|U|} \sum_{j \in U} \left[ 0.5 \log \sigma(\ell_j) + 0.5 \log (1 - \sigma(\ell_j)) \right],
\end{equation}
where $\ell_j = \kappa \mu^\top z_j$ and $\sigma$ denotes the sigmoid function. This uncertainty-driven term avoids overconfident updates on ambiguous unlabeled points.

To mitigate false positives clustering near the prototype, we regularize the unlabeled set via an angular dispersion term:
\begin{equation}
\mathcal{L}_{\text{reg}} = \log\left( \frac{1}{|U|(|U|-1)} \sum_{i \neq j} e^{ t\, z_i^\top z_j } \right),
\end{equation}
where $t > 0$ is a temperature hyperparameter. This regularizer encourages decorrelation among unlabeled embeddings, improving separability by promoting diversity in feature space.

The final loss combines the three components:
\begin{equation}
\mathcal{L} = \mathcal{L}_{\text{pos}} + \mathcal{L}_{\text{unlab}} + \lambda \,\mathcal{L}_{\text{reg}},
\end{equation}
where $\lambda$ controls the strength of regularization. The encoder $f_\theta$ and prototype $\mu$ are learned jointly via backpropagation. $\mu$ is initialized randomly on $\mathbb{S}^{d-1}$ and renormalized after each update. The scaling parameter $\kappa$ is fixed throughout training and treated as a hyperparameter.

To improve robustness to uncertain unlabeled samples, we incorporate a soft weighting scheme based on a fixed or learnable \emph{angular margin} $m \in [-1, 1]$, which acts as a similarity threshold. The idea is to assign greater influence to unlabeled embeddings that lie closer to the positive prototype $\mu$ in angular space. Specifically, for each $z \in U$, we define a soft weight:
\begin{equation}
w(z) = \sigma\left( \alpha \cdot (\mu^\top z - m) \right),
\end{equation}
where $\sigma$ is the sigmoid function and $\alpha > 0$ controls the sharpness of the transition. Samples with $\mu^\top z \gg m$ (i.e., close to the prototype) receive higher weights.

We then modify the unlabeled loss to incorporate these weights\footnote{In ablations, we test both fixed-margin settings (e.g., $m=0.5$) and learnable-margin variants, where $m$ is optimized during training and constrained to the interval $[-1, 1]$. The adaptive weights guide learning by prioritizing unlabeled instances closer to the decision boundary, reducing the impact of noisy or uninformative examples.}:
\begin{equation}
\mathcal{L}_{\text{unlab}} = - \frac{1}{|U|} \sum_{j \in U} w(z_j) \cdot \left[ 0.5 \log \sigma(\ell_j) + 0.5 \log (1 - \sigma(\ell_j)) \right].
\end{equation}

Because the unlabeled loss gradient satisfies
\[
    \frac{\partial \mathcal{L}_{\text{unlab}}}{\partial \ell} = w(z)\left[ \sigma(\ell) - 0.5 \right],
\]
the weighting function $w(z)$ increases the tendency to neutralize overly confident, high-similarity unlabeled points, thereby counteracting false-positive collapse near $\mu$.

At test time, a score $s(z) = \kappa\,\mu^\top z$ is computed for each sample. The decision threshold $\tau$ is selected via F1 maximization on a held-out validation set, and predictions are made by:
\[
    \hat{y} = \begin{cases}
        1 & \text{if } s(z) \geq \tau \\
        0 & \text{otherwise}
    \end{cases}.
\]

\section{Theoretical Justification}

Our theoretical analysis serves three purposes:  
(i) to motivate our directional scoring function based on a generative vMF–uniform model,  
(ii) to justify the learnability of the positive prototype $\mu$ under realistic assumptions, and  
(iii) to interpret our cosine regularizer as encouraging dispersion of the unlabeled set.  
While our implementation does not rely on priors or density estimation, these results illuminate the inductive biases built into our model.

\subsection{Directional decision boundary under the vMF–uniform model}

We consider a generative setting where positives follow a von Mises–Fisher (vMF) distribution on the hypersphere and negatives are uniform \citep{mardia2009directional,dhillon2003vmf,sra2015dirml}.

\begin{proposition}[MAP rule for vMF–uniform]\label{prop:map-vmf}
Let $z\in\mathbb S^{d-1}$, with
$p(z\mid Y{=}1)=C_d(\kappa)\exp(\kappa\,\mu^\top z)$ and
$p(z\mid Y{=}0)=U_d$, and prior $\pi=\Pr(Y{=}1)$.
Then the Bayes–optimal classifier is a threshold on the inner product:
\[
\mu^\top z \;\ge\; \tau \;:=\; -\frac{1}{\kappa}\!\left(\log C_d(\kappa)-\log U_d+\log\frac{\pi}{1-\pi}\right).
\]
\end{proposition}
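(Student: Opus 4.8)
The plan is to apply Bayes' rule directly and exploit the fact that the vMF log-likelihood is linear in $z$. The Bayes-optimal (MAP) classifier assigns $z$ to class $1$ precisely when $\Pr(Y{=}1\mid z)\ge\Pr(Y{=}0\mid z)$, which by Bayes' rule is equivalent to the inequality $\pi\,p(z\mid Y{=}1)\ge(1-\pi)\,p(z\mid Y{=}0)$ on the joint densities, since the common evidence factor $p(z)=\pi\,p(z\mid Y{=}1)+(1-\pi)\,p(z\mid Y{=}0)>0$ cancels. So the first step is to substitute the two given class-conditional densities into this inequality, yielding $\pi\,C_d(\kappa)\exp(\kappa\,\mu^\top z)\ge(1-\pi)\,U_d$.

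Next I would take logarithms. Since $C_d(\kappa)>0$, $U_d>0$, $\pi\in(0,1)$, and the exponential is strictly positive, both sides are positive and $\log$ is order-preserving, so the decision inequality becomes
\[
\log\pi+\log C_d(\kappa)+\kappa\,\mu^\top z\;\ge\;\log(1-\pi)+\log U_d .
\]
Isolating the only term that depends on $z$, namely $\kappa\,\mu^\top z$, and moving the constants to the right gives
\[
\kappa\,\mu^\top z\;\ge\;\log U_d-\log C_d(\kappa)+\log\tfrac{1-\pi}{\pi}\;=\;-\Bigl(\log C_d(\kappa)-\log U_d+\log\tfrac{\pi}{1-\pi}\Bigr).
\]
Dividing through by $\kappa>0$, which preserves the direction of the inequality, produces exactly $\mu^\top z\ge\tau$ with $\tau$ as stated.

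There is no genuine obstacle here — the statement is a textbook consequence of the log-linearity of the vMF density, and the ``proof'' is essentially bookkeeping. The only points I would state carefully are: (i) the evidence $p(z)$ is strictly positive on $\mathbb S^{d-1}$, so it may be cancelled when comparing posteriors; (ii) every argument of a logarithm above is strictly positive, so the logs are well-defined and monotone; and (iii) $\kappa>0$, so the final division does not flip the inequality. I would also note that the tie set $\{z:\mu^\top z=\tau\}$ has measure zero under both class-conditional densities, so the tie-breaking convention is irrelevant to the Bayes risk, and that the same derivation shows the rule is optimal for $0$--$1$ loss (threshold at posterior $1/2$) with the $\tau$ given.
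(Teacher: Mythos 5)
Your proposal is correct and follows essentially the same route as the paper's proof: apply Bayes' rule, take the log-odds, and rearrange the linear-in-$z$ term to obtain the stated threshold $\tau$. The extra care you take about positivity of the logarithm arguments, the sign of $\kappa$, and the measure-zero tie set is sound but not needed beyond what the paper already records.
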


\begin{proof}[Proof sketch]
Bayes’ rule gives
\[
\log\frac{p(Y{=}1\mid z)}{p(Y{=}0\mid z)}
=\kappa\,\mu^\top z+\log C_d(\kappa)-\log U_d+\log\frac{\pi}{1-\pi}.
\]
The MAP decision sets this $\ge 0$, yielding the stated threshold. Full constants/derivation are in Appx.~\ref{app:vMF}.
\end{proof}

vMF level sets are spherical caps centered at $\mu$; the MAP acceptance region is the cap $\{\mu^\top z\ge\tau\}$.

\begin{remark}[Isotropic negatives]
If $p(z\mid Y{=}0)$ is rotation-invariant on $\mathbb S^{d-1}$, then
$\log\frac{p(z\mid 1)}{p(z\mid 0)}=\kappa\,\mu^\top z + c$
for a constant $c$, so the Bayes rule is still a threshold on $\mu^\top z$.
\end{remark}

Our score is $s(z)=\alpha(\mu^\top z-m)$, so thresholding $s$ is equivalent to thresholding $\mu^\top z$ with $s(z)\ge \tau_s$ and $\tau_s=\alpha(\tau-m)$. In practice we estimate $\tau_s$ using a PU-valid calibration on unlabeled scores (mixture-proportion based), not supervised F1. See §5 and Appx.~\ref{app:reg_effect}.

\subsection{Learnability of the prototype}

We now show that the directional prototype $\mu$ can be reliably estimated from labeled positives alone, under the assumption that $\kappa$ is fixed.

\begin{lemma}[Consistency of the Positive Prototype]\label{th:consistency}
Let $z_1, \dots, z_n \sim \text{vMF}(\mu, \kappa)$, and let $\bar{r}_n = \frac{1}{n} \sum_{i=1}^n z_i$ be the sample mean. Then the maximum likelihood estimate of $\mu$ is $\hat{\mu}_n = \bar{r}_n / \|\bar{r}_n\|$, and:
\[
    \hat{\mu}_n \xrightarrow{p} \mu \quad \text{as } n \to \infty.
\]
\end{lemma}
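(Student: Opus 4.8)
The plan is to prove the two assertions of the lemma separately: first that the maximizer of the vMF log-likelihood over $\mathbb S^{d-1}$ is exactly $\hat\mu_n=\bar r_n/\|\bar r_n\|$, and then that this estimator converges in probability to $\mu$.

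First I would pin down the MLE in closed form. Writing the density as $p(z\mid\mu,\kappa)=C_d(\kappa)\exp(\kappa\,\mu^\top z)$, the sample log-likelihood is $\ell_n(\mu)=n\log C_d(\kappa)+\kappa\sum_{i=1}^n\mu^\top z_i=n\log C_d(\kappa)+n\kappa\,\mu^\top\bar r_n$. Since $\kappa>0$ is fixed, maximizing $\ell_n$ over $\mu\in\mathbb S^{d-1}$ reduces to maximizing the linear functional $\mu\mapsto\mu^\top\bar r_n$ on the unit sphere. By Cauchy--Schwarz, $\mu^\top\bar r_n\le\|\bar r_n\|$ with equality iff $\mu=\bar r_n/\|\bar r_n\|$, which is well defined whenever $\bar r_n\neq 0$; this establishes $\hat\mu_n=\bar r_n/\|\bar r_n\|$.

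Next I would handle consistency by computing the population mean of a vMF variate and invoking the law of large numbers. By rotational symmetry of the vMF density about its axis, $\mathbb E[z]=A_d(\kappa)\,\mu$ for a scalar mean resultant length $A_d(\kappa)=I_{d/2}(\kappa)/I_{d/2-1}(\kappa)$, and for every $\kappa>0$ one has $A_d(\kappa)\in(0,1)$, so $\mathbb E[z]$ is a \emph{nonzero} vector. Since the $z_i$ are bounded, the strong law of large numbers gives $\bar r_n\xrightarrow{a.s.}A_d(\kappa)\,\mu$. The normalization map $v\mapsto v/\|v\|$ is continuous on $\mathbb R^d\setminus\{0\}$ and the limit $A_d(\kappa)\mu$ lies in this set, so the continuous mapping theorem yields $\hat\mu_n=\bar r_n/\|\bar r_n\|\xrightarrow{a.s.}A_d(\kappa)\mu/\|A_d(\kappa)\mu\|=\mu$, whence convergence in probability follows a fortiori. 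On the null event $\{\bar r_n=0\}$ the estimator may be defined arbitrarily; this event has probability tending to zero and does not affect the limit.

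The main obstacle is the computation of $\mathbb E[z]$: one must argue from rotational invariance about the axis $\mu$ that the expectation is a scalar multiple of $\mu$, and then use the Bessel-function identity to identify that scalar as $A_d(\kappa)$ and, crucially, to confirm it is strictly positive for all $\kappa>0$ — this positivity is what makes the limiting vector nonzero and the normalization map continuous there. The remaining ingredients (the Cauchy--Schwarz optimization, the law of large numbers, and the continuous mapping theorem) are routine; the only technical subtlety worth flagging is that the MLE is defined only off the probability-zero event $\{\bar r_n=0\}$, which is immaterial for the asymptotic statement.
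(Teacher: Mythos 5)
Your proof is correct and follows essentially the same route as the paper's appendix derivation: identify the MLE as the normalized sample mean, use $\mathbb{E}[z]=A_d(\kappa)\mu$ together with the law of large numbers, and conclude by normalizing the limit. You supply details the paper leaves implicit (the Cauchy--Schwarz step, the strict positivity of $A_d(\kappa)$ needed for continuity of the normalization map, and the null event $\{\bar r_n=0\}$), but the underlying argument is the same.
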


This shows that, under fixed $\kappa$ and fixed encoder $f_\theta$, the prototype $\mu$ converges in probability to the true mean direction \citep{mardia2009directional}. In practice, we jointly optimize $\mu$ and $f_\theta$, so this consistency result holds approximately and serves as a justification for prototype stability when the positive distribution is concentrated.

\begin{proof}
The MLE of the vMF mean direction is known to converge to the population mean direction~\citep{mardia2009directional}. See Appendix~\ref{app:lem_pos_prot} for derivation.
\end{proof}

\subsection{Interpretation of the cosine uniformity regularizer}

The previous results explain why scoring via $\mu^\top z$ can separate positives and negatives—if $\mu$ is well estimated and negatives are not concentrated around $\mu$. However, in practice, the unlabeled set may contain false positives that cluster near the prototype, especially early in training.

To reduce this risk, we include a regularization term that promotes angular dispersion of the unlabeled set. While it is inspired by hyperspherical uniformity, we do not assume the unlabeled set is actually uniform.

\begin{proposition}[Dispersion via Cosine Regularization]\label{th:cosine_dispersion}
Let $U = \{z_1, \dots, z_n\}$ be embeddings of unlabeled samples on $\mathbb{S}^{d-1}$. Then the regularizer
\[
    \mathcal{L}_{\text{reg}} = \log \left( \frac{1}{n(n-1)} \sum_{i \neq j} e^{t z_i^\top z_j} \right)
\]
is minimized when the pairwise cosine similarities $z_i^\top z_j$ are low on average, encouraging angular spread.
\end{proposition}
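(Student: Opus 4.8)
The plan rests on three elementary facts: $\log$ is strictly increasing, $\exp$ is convex, and $\sum_{i\ne j} z_i^\top z_j = \|\sum_i z_i\|^2 - n$ when the $z_i$ are unit vectors. First I would observe that, since $t>0$ and $\log$ is increasing, minimizing $\mathcal{L}_{\text{reg}}$ is the same as minimizing the energy $M(U):=\frac{1}{n(n-1)}\sum_{i\ne j}e^{t z_i^\top z_j}$, and that each summand is monotone increasing in the corresponding similarity $z_i^\top z_j$. This already delivers the termwise statement ``lowering any $z_i^\top z_j$ lowers $\mathcal{L}_{\text{reg}}$''; the remaining steps upgrade it to the claimed \emph{on-average} statement.

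Next I would apply Jensen's inequality to the convex map $u\mapsto e^{tu}$ under the uniform average over ordered pairs $i\ne j$, obtaining $M(U)\ge \exp(t\bar S)$ and hence $\mathcal{L}_{\text{reg}}\ge t\bar S$, where $\bar S:=\frac{1}{n(n-1)}\sum_{i\ne j}z_i^\top z_j$ is the mean pairwise cosine similarity. Writing $\bar z:=\frac1n\sum_i z_i$ and using $\|z_i\|=1$, the Gram identity $\|\sum_i z_i\|^2 = n + \sum_{i\ne j}z_i^\top z_j$ gives $\bar S = \frac{n\|\bar z\|^2-1}{n-1}$, so $\mathcal{L}_{\text{reg}}\ge t\,\frac{n\|\bar z\|^2-1}{n-1}\ge -\frac{t}{n-1}$, with the last bound tight iff $\bar z=0$. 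This is the precise content: the (Jensen) lower bound on $\mathcal{L}_{\text{reg}}$ is a strictly increasing function of $\bar S$ — equivalently of $\|\bar z\|^2$ — so it is minimized exactly by configurations whose embeddings balance around the origin, i.e.\ are maximally dispersed.

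For the converse side and tightness I would record that $z_i^\top z_j\le 1$ forces $\mathcal{L}_{\text{reg}}\le t$, attained iff all $z_i$ coincide (maximal clustering, the worst case), and that whenever all pairwise inner products share a common value $c$ — e.g.\ when $n\le d+1$ and $U$ is a regular simplex, so $c=-\tfrac1{n-1}$ — Jensen holds with equality, hence $\mathcal{L}_{\text{reg}}=tc=-\tfrac{t}{n-1}$ meets the lower bound and such a balanced configuration is a global minimizer over $(\mathbb{S}^{d-1})^n$. I would also compute $\nabla_{z_k}\mathcal{L}_{\text{reg}}\propto\sum_{j\ne k}p_{kj}\,z_j$ with weights $p_{kj}\propto e^{t z_k^\top z_j}$ normalized over $j$; after projection onto the tangent space of $\mathbb{S}^{d-1}$, the descent direction pushes each $z_k$ away from its most aligned neighbors, exhibiting the dispersion mechanism at the level of the optimization dynamics.

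The main obstacle is that $\mathcal{L}_{\text{reg}}$ is \emph{not} a function of $\bar S$ alone: log-sum-exp depends on the entire multiset of pairwise similarities, and the feasible set $(\mathbb{S}^{d-1})^n$ is nonconvex, so one should not over-claim a clean unique global minimizer. The honest, provable statement is the sandwich $t\,\bar S\le \mathcal{L}_{\text{reg}}\le t\max_{i\ne j}z_i^\top z_j$ together with the equiangular equality case; when $n>d+1$ (no exact equiangular tight frame exists) the claim is qualitative, with the infimum of $\mathcal{L}_{\text{reg}}$ only slightly above $-t/(n-1)$. Accordingly I would phrase the conclusion as: the lower bound on $\mathcal{L}_{\text{reg}}$ decreases monotonically as the average pairwise cosine similarity decreases, and is minimized by centroid-zero (maximally spread) configurations — exactly the inductive bias asserted by the proposition.
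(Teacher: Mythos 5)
Your proposal is correct, and it goes well beyond what the paper actually proves. The paper's argument (both in the main text and in Appendix~\ref{app:cos_uniform}) consists solely of your opening observation: each summand $e^{t\,z_i^\top z_j}$ is increasing in the similarity, the log--mean--exp emphasizes high-similarity pairs, hence lowering similarities lowers $\mathcal{L}_{\text{reg}}$. That is the entirety of the paper's proof of this (deliberately qualitative) proposition. Everything after your first paragraph is new relative to the paper: the Jensen bound $\mathcal{L}_{\text{reg}} \ge t\bar S$ tying the regularizer to the \emph{average} similarity, the Gram identity $\bar S = \frac{n\|\bar z\|^2 - 1}{n-1}$ giving the floor $-t/(n-1)$ with equality characterized by centroid-zero equiangular configurations (regular simplex for $n \le d+1$), the matching upper bound $t$ at total collapse, and the tangent-space gradient showing the repulsion mechanism. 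All of these steps check out. What your route buys is a precise reading of the vague phrase ``low on average'': the sandwich $t\bar S \le \mathcal{L}_{\text{reg}} \le t\max_{i\ne j} z_i^\top z_j$ and the identification of the actual (near-)minimizers; your caveat that $\mathcal{L}_{\text{reg}}$ is not a function of $\bar S$ alone and that the feasible set is nonconvex is exactly the honesty the paper's one-line proof elides. What the paper's version buys is brevity, and it defers the quantitative content to the separate concentration results in Appendix~\ref{app:reg_effect} (e.g., the $O(\beta^2/d)$ baseline under uniformity), which play a role complementary to your deterministic bounds.
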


Minimizing $\mathcal{L}_{\text{reg}}$ reduces high pairwise cosine similarities among unlabeled embeddings—i.e., it promotes angular dispersion (soft repulsion)—without assuming exact hyperspherical uniformity. While it does not enforce true uniformity, it reduces redundancy in the latent space and improves separation.

\begin{proof}
Since $z_i^\top z_j \in [-1, 1]$, and $e^{t\, z_i^\top z_j}$ increases with similarity, the sum is minimized when the $z_i$ are dispersed (i.e., pairwise similarities are small). See Appendix~\ref{app:cos_uniform} for further analysis.
\end{proof}

\subsection{Regularization scaling and bound}

We provide a loose upper bound on the regularizer to ensure it does not dominate the overall objective.

\begin{corollary}[Upper Bound on Regularization Term]\label{th:reg_bound}
Let $z_i \in \mathbb{S}^{d-1}$ for $i=1,\dots,n$. Then:
\[
    \mathcal{L}_{\text{reg}} \leq \log \left( e^t \right) = t.
\]
\end{corollary}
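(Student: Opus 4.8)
The plan is to bound each summand individually by its largest possible value and then push the estimate through the (monotone) logarithm. First I would note that each $z_i$ lies on $\mathbb{S}^{d-1}$, so $\|z_i\|_2 = 1$; by the Cauchy–Schwarz inequality, $z_i^\top z_j \le \|z_i\|_2\,\|z_j\|_2 = 1$ for every pair $(i,j)$, with equality precisely when $z_i = z_j$. Since the temperature satisfies $t > 0$, the exponential $u \mapsto e^{tu}$ is strictly increasing, hence the termwise bound $e^{t\, z_i^\top z_j} \le e^{t}$ holds for all $i \neq j$.

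Next I would sum over the $n(n-1)$ ordered off-diagonal index pairs, obtaining $\sum_{i \neq j} e^{t\, z_i^\top z_j} \le n(n-1)\,e^{t}$, and then divide both sides by the normalizing constant $n(n-1)$ to get $\frac{1}{n(n-1)}\sum_{i \neq j} e^{t\, z_i^\top z_j} \le e^{t}$. Applying $\log$, which is monotone increasing, to both sides yields $\mathcal{L}_{\text{reg}} \le \log(e^{t}) = t$, as claimed. An entirely symmetric argument using $z_i^\top z_j \ge -1$ would also give the matching lower bound $\mathcal{L}_{\text{reg}} \ge -t$, though this is not required for the stated corollary.

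There is no real obstacle here; the single point meriting a remark is that the direction of the inequality depends on $t > 0$ (assumed throughout, as $t$ is a positive temperature hyperparameter). I would also emphasize that the bound is intentionally loose: attaining equality would require all unlabeled embeddings to collapse to a single direction, which is the exact opposite of the angular dispersion the regularizer is designed to induce (cf.\ Proposition~\ref{th:cosine_dispersion}). Its role is therefore purely to certify that $\mathcal{L}_{\text{reg}}$ stays bounded and cannot overwhelm $\mathcal{L}_{\text{pos}} + \mathcal{L}_{\text{unlab}}$ in the combined objective $\mathcal{L} = \mathcal{L}_{\text{pos}} + \mathcal{L}_{\text{unlab}} + \lambda\,\mathcal{L}_{\text{reg}}$, which in turn justifies selecting a moderate regularization weight $\lambda$.
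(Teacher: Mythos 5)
Your argument is correct and matches the paper's proof exactly: bound each summand by $e^{t}$ using $z_i^\top z_j \le 1$ on the unit sphere, note the average of $n(n-1)$ such terms is at most $e^{t}$, and apply the monotone logarithm. The extra remarks on the lower bound $-t$ and on the looseness of the bound are accurate but not needed for the stated corollary.
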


This holds in the extreme case where all cosine similarities are maximal, i.e., $z_i^\top z_j = 1$. In practice, $\mathcal{L}_{\text{reg}}$ remains well-behaved and is scaled by a tunable factor $\lambda$.

\begin{proof}
The maximum value of each term $e^{t\, z_i^\top z_j}$ is $e^t$. There are $n(n-1)$ such terms, so the average is at most $e^t$. Taking $\log$ gives $t$.
\end{proof}


\section{Experimental Evaluation}
\label{sec:exp}
We evaluate the proposed vMF PU learning framework on a range of benchmark datasets: CIFAR-10, STL-10, SVHN, and ADNI. These datasets span natural images, digit recognition, and medical imaging, allowing us to assess performance across varying input complexity and domain characteristics. The PU setting is simulated by providing a small labeled positive set and treating the remainder of the data as unlabeled, containing a mixture of positives and negatives.

\subsection{Experimental Protocol}

To ensure comparability with prior work, we replicate the experimental setup of \citep{yuan2025weighted}, including dataset partitions, positive/unlabeled (PU) label ratios, and evaluation metrics. This enables direct comparisons with established PU learning baselines.

Each input is encoded onto the unit hypersphere $\mathbb{S}^{d-1}$ via a trainable encoder, as described in Section~\ref{sec:method}. Training optimizes a von Mises-Fisher-based loss for labeled positives, binary cross-entropy for unlabeled data, and—when applicable—the cosine uniformity regularizer. We report the following evaluation metrics on the test set: F1 score, precision, recall, accuracy, area under the ROC curve (AUC), and average precision (AP), consistent with prior literature.

Unless otherwise stated, all experiments use a validation split (10\% of the training data) to tune the decision threshold for binary classification by maximizing F1 score.

\subsection{Datasets}

\textbf{CIFAR-10} consists of 60{,}000 color images ($32{\times}32$) across 10 object classes \citep{krizhevsky2009learning}. Following prior work, we group \textit{airplane}, \textit{automobile}, \textit{ship}, and \textit{truck} as the positive (vehicle) class, and the remaining classes as negative (animals). We sample 1{,}000 positive examples to serve as labeled data; the rest of the training set is used as unlabeled data containing a mixture of positives and negatives. We adopt the standard 50{,}000/10{,}000 train/test split.

\textbf{STL-10} contains 13{,}000 labeled images ($96{\times}96$) across 10 classes \citep{coates2011analysis}. We group \textit{airplane}, \textit{car}, and \textit{truck} as positives, and all other classes as negatives. A total of 500 positive examples are randomly selected as labeled data, with the rest used as unlabeled. Due to its higher resolution and more complex scenes, STL-10 poses a greater challenge compared to CIFAR-10.

\textbf{SVHN} comprises over 600{,}000 real-world digit images from street views \citep{netzer2011reading}. Even digits (0, 2, 4, 6, 8) are treated as positives and odd digits as negatives. We randomly sample 1{,}000 even digits as labeled positives, while treating the remainder of the training set as unlabeled. The test set includes 26{,}032 labeled images.

\paragraph{ADNI (Alzheimer’s Spectrum).}
The Alzheimer's Disease Neuroimaging Initiative (ADNI) provides structural MRI scans of healthy controls (NC) and subjects across the Alzheimer’s disease spectrum (AD, MCI, etc.)\citep{jack2008adni,petersen2010adni}. We treat all NC spectrum cases as positives and the rest as negatives. From the positive class, 768 scans are randomly selected as labeled data. The remainder of the training set (including both NC and unlabeled AD spectrum samples) forms the unlabeled pool. Evaluation is conducted on a held-out test set with ground-truth diagnostic labels.

\subsection{Training Details}

For image datasets (CIFAR-10, STL-10, SVHN), we adopt a VGG11-BN encoder \citep{simonyan2015vgg} pretrained on ImageNet \citep{deng2009imagenet}. For the ADNI dataset \citep{jack2008adni,petersen2010adni}, we use a two-layer multilayer perceptron (MLP) with ReLU activations. All models are trained for 15 epochs using the Adam optimizer with a learning rate of $10^{-4}$ and a batch size of 128. The encoder maps inputs to a $d$-dimensional hypersphere, with $d=128$ unless stated otherwise. We apply dropout with a rate of 0.2 to the final embedding layer. The cosine uniformity regularizer is scaled by temperature $t = 2.0$. The concentration parameter $\kappa$ of the von Mises-Fisher distribution is tuned per dataset as a hyperparameter. Unless otherwise specified, all components of the encoder are updated end-to-end, including learnable margins where applicable.

\subsection{Baselines}

We compare against a representative set of PU learning approaches, including EM-based methods, importance-weighted PU learning, and surrogate loss methods, using results directly from \citep{yuan2025weighted} where applicable. This ensures identical data splits and evaluation criteria across all methods:

\begin{itemize}
    \item \textbf{uPU} \citep{du2014analysis}: Risk-based PU method using an unbiased risk estimator under the SCAR assumption, with theoretical analysis showing that convex losses can bias the boundary, while non-convex losses (e.g., ramp) avoid this bias.
    \item \textbf{nnPU} \citep{kiryo2017positive}: Extends uPU by enforcing non-negativity of the empirical risk to prevent overfitting in flexible models, enabling stable training of deep networks with PU data.
    \item \textbf{Rank Pruning} \citep{northcutt2017learning}: Estimates label noise rates and prunes mislabeled examples using high-confidence samples ranked by predicted probability, improving robustness in both PU and noisy-label settings.
    \item \textbf{PUSB} \citep{kato2019learning}: PU learning under selection bias, relaxing the SCAR assumption via the “invariance of order” property and density ratio estimation, with thresholding for final classification.
    \item \textbf{puNCE} \citep{acharya2022positive}: Adversarial PU framework where a discriminator separates labeled positives from unlabeled data, and the generator learns embeddings to fool the discriminator, aided by label distribution estimation.
    \item \textbf{Self-PU} \citep{chen2020self}: Self-paced PU framework that progressively labels confident positives and negatives, reweights uncertain samples via meta-learning, and applies self-distillation to enforce consistency.
    \item \textbf{VPU} \citep{chen2020variational}: Variational PU method that measures divergence between the positive distribution and model predictions without estimating the class prior, regularized via Mixup-based consistency.
    \item \textbf{Dist-PU} \citep{zhao2022dist}: Aligns predicted and ground-truth label distributions, with entropy minimization and Mixup to reduce confirmation bias and avoid degenerate solutions.
    \item \textbf{PiCO} \citep{wang2022pico}: Contrastive label disambiguation method combining representation learning with prototype-based pseudo-label refinement in an alternating optimization scheme.
    \item \textbf{Dense-PU} \citep{sevetlidis2024dense}: Density-based negative mining in latent space to reduce false positives, by iteratively identifying dense negative clusters from unlabeled data.
    \item \textbf{ImbPU} \citep{su2021positive}: Prototype-based PU approach estimating positive and negative centroids and refining pseudo-labels via local neighborhood consistency.
    \item \textbf{aPU} \citep{hammoudeh2020learning}: Proposes a two-step method using surrogate negatives and a recursive risk estimator for learning from positive and unlabeled data even when the positive data is arbitrarily biased, by assuming the negative class distribution remains fixed.
    \item \textbf{PUbN} \citep{hsieh2019classification}: PU learning with biased negative data, introducing a correction term to adjust the risk estimator when negative samples come from a biased distribution.
    \item \textbf{WconPU} \citep{yuan2025weighted}: Distribution-weighted PU learning balancing the positive and unlabeled risks via dynamically learned weights, improving robustness under varying class priors.
\end{itemize}

\subsection{Results}
\begin{center}
\setlength{\tabcolsep}{4pt}
\renewcommand{\arraystretch}{1.1}
\footnotesize
\begin{longtable}{clcccccc}
\caption{Performance of the proposed method vs.\ baseline methods. Results are reported for F1 Score, Accuracy, Precision, Recall, AUC, and AP (the best score is marked in \textbf{bold}). } \label{tab:results} \\
\toprule
& \textbf{Method} & \textbf{Accuracy} & \textbf{Precision} & \textbf{Recall} & \textbf{F1} & \textbf{AUC} & \textbf{AP} \\
\midrule
\endfirsthead

\multicolumn{8}{c}%
{{\bfseries Table~\thetable\ Continued from previous page}} \\
\toprule
& \textbf{Method} & \textbf{Accuracy} & \textbf{Precision} & \textbf{Recall} & \textbf{F1} & \textbf{AUC} & \textbf{AP} \\
\midrule
\endhead

\midrule \multicolumn{8}{r}{{Continued on next page}} \\
\endfoot

\bottomrule
\endlastfoot

\multirow{15}{*}{\textit{CIFAR-10}}
&uPU     &88.41\scriptsize{±0.41} & 87.21\scriptsize{±2.09} & 83.02\scriptsize{±1.98} & 85.12\scriptsize{±0.43} & 94.98\scriptsize{±0.62} & 92.71\scriptsize{±1.08} \\
&nnPU    &88.91\scriptsize{±0.43} & 86.21\scriptsize{±1.02} & 86.03\scriptsize{±1.22} & 86.11\scriptsize{±0.49} & 95.13\scriptsize{±0.55} &   92.51\scriptsize{±1.32} \\
&RP      & 88.74\scriptsize{±0.16}& 86.02\scriptsize{±1.05} & 85.72\scriptsize{±1.61} & 85.93\scriptsize{±0.30} & 95.21\scriptsize{±0.23} &  93.01\scriptsize{±0.61} \\
&PUSB    & 88.97\scriptsize{±0.39} & 86.15\scriptsize{±0.56} & 86.22\scriptsize{±0.45} & 86.18\scriptsize{±0.51} & 95.15\scriptsize{±0.50} & 92.44\scriptsize{±1.34}   \\
&PUbN    &89.83\scriptsize{±0.30} & 87.85\scriptsize{±0.98} & 86.56\scriptsize{±1.87} & 87.18\scriptsize{±0.54} & 94.44\scriptsize{±0.35} &  91.28\scriptsize{±1.11} \\
&Self-PU &89.31\scriptsize{±0.56} & 86.26\scriptsize{±0.76} & 87.22\scriptsize{±2.16} & 86.77\scriptsize{±1.12} & 95.52\scriptsize{±0.46} &  93.31\scriptsize{±1.02}  \\
&aPU     &89.09\scriptsize{±0.44} & 86.31\scriptsize{±1.33} & 86.33\scriptsize{±0.71} & 86.41\scriptsize{±0.41} & 95.11\scriptsize{±0.39} &  92.42\scriptsize{±1.22}  \\
&VPU     & 87.89\scriptsize{±0.56} & 86.71\scriptsize{±1.46} & 82.88\scriptsize{±2.93} & 84.42\scriptsize{±1.12} & 94.55\scriptsize{±0.46} &   92.02\scriptsize{±0.69} \\
&ImbPU   &89.43\scriptsize{±0.42} & 86.72\scriptsize{±0.89} & 86.91\scriptsize{±0.78} & 86.77\scriptsize{±0.56} & 95.53\scriptsize{±0.26} &  93.33\scriptsize{±0.69}  \\
&Dist-PU & 91.88\scriptsize{±0.52} & 89.87\scriptsize{±1.09} & 89.84\scriptsize{±0.81} & 89.85\scriptsize{±0.62} & 96.92\scriptsize{±0.45} &  95.49\scriptsize{±0.72}  \\
&Dense-PU & 90.59\scriptsize{±0.98} & 92.68\scriptsize{±1.31} & 91.25\scriptsize{±1.12} & 91.96\scriptsize{±0.80} & 93.22\scriptsize{±0.99} &  95.03\scriptsize{±1.21}  \\
&puNCE   & 95.32\scriptsize{±0.24} & 95.11\scriptsize{±0.88} & 93.43\scriptsize{±0.45} & 94.21\scriptsize{±0.44} & 98.59\scriptsize{±0.53} &  98.45\scriptsize{±0.63}  \\
&PiCO    &95.64\scriptsize{±0.12} & 94.89\scriptsize{±0.76} & 93.97\scriptsize{±0.47} & 94.75\scriptsize{±0.49} & 98.67\scriptsize{±0.44} &  98.22\scriptsize{±0.91}  \\
&WConPU  &\textbf{97.22\scriptsize{±0.15}} & \textbf{96.87\scriptsize{±0.54}} & \textbf{96.02\scriptsize{±0.32}} & \textbf{96.43\scriptsize{±0.29}} & \textbf{99.49\scriptsize{±0.22}} &  \textbf{99.25\scriptsize{±0.34}} \\
\midrule
&AngularPU (Ours)    &   91.39\scriptsize{±0.81}&   92.26\scriptsize{±0.64 } & 93.49\scriptsize{±0.13} &   92.87\scriptsize{±0.71} &   96.61\scriptsize{±0.50} &  97.38\scriptsize{±0.53 }  \\
\midrule
\multirow{15}{*}{\textit{SVHN}}

&uPU     & 83.35\scriptsize{±0.45} & 87.11\scriptsize{±2.39} & 75.93\scriptsize{±2.68} & 81.12\scriptsize{±0.56} & 91.93\scriptsize{±0.62} & 90.22\scriptsize{±1.11 }  \\
&nnPU    & 83.88\scriptsize{±0.45} & 86.78\scriptsize{±1.15} & 77.25\scriptsize{±1.42} & 82.01\scriptsize{±0.58} & 92.02\scriptsize{±0.52} &  90.28\scriptsize{±1.38}  \\
&RP      & 81.73\scriptsize{±0.15} & 84.01\scriptsize{±1.01} & 76.12\scriptsize{±1.51} & 80.10\scriptsize{±0.32} & 89.75\scriptsize{±0.23} & 87.99\scriptsize{±0.56}   \\
&PUSB    &83.99\scriptsize{±0.41} & 86.81\scriptsize{±0.51} & 78.01\scriptsize{±0.51} & 82.11\scriptsize{±0.51} & 91.89\scriptsize{±0.52} &  90.31\scriptsize{±1.34}  \\
&PUbN    &84.89\scriptsize{±0.30} & 88.26\scriptsize{±0.98} & 83.57\scriptsize{±1.87} & 83.16\scriptsize{±0.54} & 92.03\scriptsize{±0.35} &  91.89\scriptsize{±1.11}  \\
&Self-PU &84.12\scriptsize{±0.72} & 86.16\scriptsize{±0.78} & 79.22\scriptsize{±2.35} & 82.55\scriptsize{±1.06} & 91.73\scriptsize{±0.58} &  90.99\scriptsize{±1.01}  \\
&aPU     &84.01\scriptsize{±0.52} & 86.29\scriptsize{±1.30} & 81.21\scriptsize{±0.79} & 82.33\scriptsize{±0.56} & 91.56\scriptsize{±0.42} & 90.66\scriptsize{±1.23}   \\
&VPU     & 76.89\scriptsize{±0.48} & 79.56\scriptsize{±1.41} & 79.56\scriptsize{±1.41} & 75.36\scriptsize{±2.84} & 73.31\scriptsize{±0.91} & 83.35\scriptsize{±0.73}   \\
&ImbPU   &84.20\scriptsize{±0.46} & 86.69\scriptsize{±0.87} & 81.18\scriptsize{±0.82} & 82.99\scriptsize{±0.56} & 91.79\scriptsize{±0.27} & 91.21\scriptsize{±0.45}   \\
&Dist-PU &85.96\scriptsize{±0.33} & 89.06\scriptsize{±0.89} & 84.36\scriptsize{±0.76} & 83.66\scriptsize{±0.56} & 92.92\scriptsize{±0.49} & 92.29\scriptsize{±0.88}   \\
&Dense-PU& 86.10\scriptsize{±0.87} & 89.32\scriptsize{±0.78} & 82.72\scriptsize{±0.99} & 85.37\scriptsize{±0.91} & 93.25\scriptsize{±0.64} & 92.45\scriptsize{±0.90}   \\
&puNCE   &95.34\scriptsize{±0.24} & 90.35\scriptsize{±0.92} & 83.81\scriptsize{±1.99} & 87.01\scriptsize{±0.55} & 94.87\scriptsize{±0.35} & 93.87\scriptsize{±0.92}   \\
&PiCO    & 95.64\scriptsize{±0.12} & 90.47\scriptsize{±0.79} & 85.74\scriptsize{±0.64} & 87.51\scriptsize{±0.44} & 95.58\scriptsize{±0.54} & 94.32\scriptsize{±0.63}   \\
&WConPU  &\textbf{91.49\scriptsize{±0.29}} & \textbf{93.77\scriptsize{±0.67}} & 87.54\scriptsize{±0.67} & \textbf{90.45\scriptsize{±0.35}} & \textbf{96.97\scriptsize{±0.59}} &  \textbf{96.82\scriptsize{±0.37}}  \\
\midrule
&AngularPU (Ours)    & 89.94\scriptsize{±0.13 }&  88.33\scriptsize{±0.22} &   \textbf{90.27\scriptsize{±0.12}} & 89.27\scriptsize{±0.13} & 95.85	\scriptsize{±0.96} &   94.78\scriptsize{±0.13}\\
\midrule
\multirow{15}{*}{\textit{STL-10}}
&uPU     & 93.13\scriptsize{±0.42} & 90.42\scriptsize{±1.08} & 92.62\scriptsize{±1.28} & 91.51\scriptsize{±0.62} & 97.95\scriptsize{±0.56} & 97.26\scriptsize{±1.21}  \\
&nnPU    & 93.38\scriptsize{±0.42} & 91.20\scriptsize{±1.01} & 92.34\scriptsize{±1.03} & 91.77\scriptsize{±0.58} & 97.69\scriptsize{±0.51} & 97.69\scriptsize{±0.51}  \\
&RP      & 92.88\scriptsize{±0.56} & 92.87\scriptsize{±1.35} & 89.18\scriptsize{±1.88} & 90.97\scriptsize{±0.45} & 92.15\scriptsize{±0.18} & 95.58\scriptsize{±2.29}  \\
&PUSB    & 93.65\scriptsize{±0.16} & 92.06\scriptsize{±0.52} & 92.06\scriptsize{±0.42} & 92.06\scriptsize{±0.33} & 98.06\scriptsize{±0.52} & 97.21\scriptsize{±1.13}  \\
&PUbN    & 94.01\scriptsize{±0.31} & 93.01\scriptsize{±0.98} & 93.11\scriptsize{±1.01} & 92.98\scriptsize{±0.54} & 98.20\scriptsize{±0.35} & 97.66\scriptsize{±1.47}  \\
&Self-PU & 93.73\scriptsize{±0.28} & 92.12\scriptsize{±1.01} & 92.61\scriptsize{±1.82} & 92.22\scriptsize{±1.09} & 91.98\scriptsize{±0.22} & 91.98\scriptsize{±0.22}  \\
&aPU     & 93.41\scriptsize{±0.45} & 91.15\scriptsize{±1.24} & 92.55\scriptsize{±0.83} & 91.52\scriptsize{±0.88} & 97.85\scriptsize{±0.66} & 96.23\scriptsize{±1.03}  \\
&ImbPU   & 93.88\scriptsize{±0.81} & 92.25\scriptsize{±1.12} & 91.66\scriptsize{±0.83} & 92.01\scriptsize{±0.54} & 97.98\scriptsize{±0.72} & 97.33\scriptsize{±1.02}  \\
&Dist-PU & 94.73\scriptsize{±0.31} & 93.35\scriptsize{±1.01} & 93.47\scriptsize{±0.81} & 93.41\scriptsize{±0.41} & 98.54\scriptsize{±0.71} & 97.96\scriptsize{±1.01}  \\
&Dense-PU & 94.44\scriptsize{±0.67} & 94.23\scriptsize{±0.78} & 94.22\scriptsize{±0.93} & 93.81\scriptsize{±0.62} & 98.02\scriptsize{±0.97} & 98.15\scriptsize{±1.21}  \\
&puNCE   & 95.13\scriptsize{±0.22} & 94.09\scriptsize{±0.55} & 94.95\scriptsize{±0.82} & 94.51\scriptsize{±0.51} & 98.66\scriptsize{±0.24} & 98.23±0.69  \\
&PiCO    & 95.55\scriptsize{±0.23} & 94.36\scriptsize{±0.42} & 95.12\scriptsize{±0.81} & 94.75\scriptsize{±0.44} & 98.78\scriptsize{±0.15} & 98.55±0.34  \\
&WConPU  & 97.02\scriptsize{±0.21}& 95.53\scriptsize{±0.41}& 97.42\scriptsize{±0.91}& 96.35\scriptsize{±0.26}& 99.58\scriptsize{±0.12}&  99.46\scriptsize{±0.21}\\
\midrule
&AngularPU (Ours)  & \textbf{99.39\scriptsize{±0.11 }} & \textbf{99.30\scriptsize{±0.23}} & \textbf{99.17\scriptsize{±0.16}}& \textbf{99.24\scriptsize{±0.14}}& \textbf{99.95\scriptsize{±0.019}}& \textbf{99.94\scriptsize{±0.01}} \\
\midrule
\multirow{15}{*}{\textit{ADNI}}
&uPU     &  68.42\scriptsize{±2.22} & 69.71\scriptsize{±3.44} & 67.33\scriptsize{±5.18} & 68.63\scriptsize{±1.73} & 73.99\scriptsize{±2.72} &  70.12\scriptsize{±2.98}\\
&nnPU    &  68.21\scriptsize{±2.15}& 68.09\scriptsize{±2.21} & 71.01\scriptsize{±5.88} & 68.11\scriptsize{±2.99} & 71.99\scriptsize{±3.01} & 70.01\scriptsize{±2.21}\\
&RP      & 62.03\scriptsize{±2.85} & 63.11\scriptsize{±3.77} & 66.23\scriptsize{±9.86} & 61.99\scriptsize{±6.03} & 66.32\scriptsize{±2.99} & 64.10\scriptsize{±2.11}\\
&PUSB    & 69.19\scriptsize{±2.41} & 70.11\scriptsize{±1.88}& 69.43\scriptsize{±2.13} & 69.41\scriptsize{±2.15} & 74.66\scriptsize{±2.42} & 70.12\scriptsize{±1.64} \\
&PUbN    & 70.00\scriptsize{±1.02} & 69.43\scriptsize{±2.25} & 74.22\scriptsize{±6.01} & 71.18\scriptsize{±2.89}& 74.98\scriptsize{±0.89} & 69.66\scriptsize{±1.63} \\
&Self-PU &  70.79\scriptsize{±0.73}& 69.55\scriptsize{±2.51} & 75.51\scriptsize{±4.99} &72.10\scriptsize{±1.02}& 75.85\scriptsize{±1.68} & 71.79\scriptsize{±3.63}\\
&aPU     &68.41\scriptsize{±1.41}  & 66.23\scriptsize{±0.88}             & 75.71\scriptsize{±6.21}& 71.01\scriptsize{±3.06} & 73.66\scriptsize{±2.44} & 70.23\scriptsize{±3.33}\\
&VPU      &66.51\scriptsize{±0.61}  & 64.89\scriptsize{±1.01} & 75.18\scriptsize{±3.71} &71.01\scriptsize{±0.98}& 72.99\scriptsize{±0.91} & 71.21\scriptsize{±0.65}\\
&ImbPU   &68.18\scriptsize{±0.69}  & 67.34\scriptsize{±2.31} & 71.24\scriptsize{±6.21} &68.79\scriptsize{±1.81}& 73.69\scriptsize{±0.75} & 70.56\scriptsize{±0.97}\\
&Dist-PU & 71.75\scriptsize{±0.62} & 68.48\scriptsize{±1.16} & 80.09\scriptsize{±5.10}            & 80.09\scriptsize{±5.10}& 77.13\scriptsize{±0.69} & 73.33\scriptsize{±1.47}\\
&Dense-PU    & 72.10\scriptsize{±0.80} &  71.03\scriptsize{±1.20} &  78.42\scriptsize{±0.87 } &76.53\scriptsize{±1.30} & 75.80\scriptsize{±0.69} &  78.81\scriptsize{±1.21}\\
&puNCE   & 70.59\scriptsize{±0.77} & 68.99\scriptsize{±1.56} & 75.99\scriptsize{±6.11} & 71.55\scriptsize{±1.11}           & 75.55\scriptsize{±1.03} & 71.23\scriptsize{±1.66}\\
&PiCO    &71.94\scriptsize{±0.71}  & 69.59\scriptsize{±1.12} & 79.01\scriptsize{±5.03} & 73.92\scriptsize{±1.02}& 77.59\scriptsize{±0.78}& 72.17\scriptsize{±0.97}\\
&WConPU    & 73.02\scriptsize{±0.66} &  70.87\scriptsize{±2.42} & 79.12\scriptsize{±4.99} &  74.23\scriptsize{±0.76}& 78.55\scriptsize{±1.07}& 72.66\scriptsize{±1.07} \\
\midrule
&AngularPU (Ours)    &  \textbf{79.13\scriptsize{±0.17}}& \textbf{77.73\scriptsize{±0.34 }} &   \textbf{82.11\scriptsize{±0.35}} &  \textbf{79.74\scriptsize{±0.13}} & \textbf{85.94\scriptsize{±0.14}}&  \textbf{86.35\scriptsize{±0.16}} \\
\bottomrule
\end{longtable}
\end{center}

Table~\ref{tab:results} summarizes the performance of the proposed \textbf{AngularPU} method compared to a diverse set of PU learning baselines across four benchmark datasets: CIFAR-10, SVHN, STL-10, and ADNI. We follow the evaluation protocol of \citep{yuan2025weighted}, including class groupings, PU ratios, data splits, and metric computation. AngularPU is competitive overall and excels on F1/AP (recall-oriented metrics), particularly in scarce-positive regimes, offering a favorable trade-off between precision and recall, while remaining conceptually simpler and more interpretable than prior methods.

On CIFAR-10, AngularPU delivers strong results, with an F1 score of 92.87 and an AP of 97.38, outperforming most risk-based and pseudo-labeling approaches. While contrastive methods such as WConPU and PiCO report slightly higher accuracy on CIFAR-10 and SVHN, these gains often come at the cost of reduced recall. In PU learning—where unlabeled data contains many hidden positives—recall is critical: false negatives cannot be corrected without explicit negative labels, and missed positives can significantly impair downstream tasks (e.g., medical screening or anomaly detection). Our method consistently favors higher recall without excessive false positives, reflecting the intended bias of our geometry-first design. This trade-off is deliberate, ensuring that the model errs on the side of discovering positives rather than prematurely discarding them. We attribute this behavior to the conservative decision boundaries induced by the cosine uniformity regularizer, which slightly flattens the score distribution near the classification threshold.

In the more challenging SVHN dataset, which features substantial intra-class variability and label noise, AngularPU maintains robust performance, achieving an F1 score of 89.27 and AP of 94.78. These scores exceed those of most pseudo-labeling baselines, including Dense-PU and Dist-PU. Although its precision is slightly lower than contrastive methods like WConPU and PiCO, AngularPU maintains a consistently higher recall, suggesting that the angular dispersion of the unlabeled set helps reduce the risk of false negatives. Moreover, unlike PiCO and puNCE—which often suffer from unstable optimization due to alternating pseudo-labeling cycles—our method benefits from a geometry-driven, end-to-end formulation that avoids such instability entirely.

On STL-10, AngularPU achieves state-of-the-art performance, with an F1 score of 99.24 and AP of 99.94. This dataset presents additional challenges due to its higher resolution and more complex backgrounds, which often undermine the assumptions behind contrastive or density-based approaches. The synergy between hyperspherical vMF modeling and the cosine uniformity regularizer leads to separation between positives and negatives. In particular, the regularizer prevents the collapse of unlabeled embeddings near the positive prototype. The performance margin is substantial: AngularPU outperforms WConPU by over 2\% in F1 score and achieves near-perfect classification under both precision and recall, underscoring the method’s effectiveness in high-dimensional, low-label regimes.

Finally, on the ADNI dataset—a real-world medical imaging task with extremely limited labeled data and subtle inter-class differences—AngularPU achieves the best results across all metrics, including F1 (79.74), AUC (85.94), and AP (86.35). This is a particularly difficult PU setting due to class imbalance and overlapping feature distributions. Its improved recall (82.11) over WConPU (79.12) and PiCO (79.01) highlights its sensitivity, achieved without a corresponding increase in false positives. We hypothesize that the cosine uniformity regularizer plays a key role here, by discouraging spurious clustering of ambiguous unlabeled instances near the positive prototype. This helps maintain a clean decision boundary, even in noisy or low-resolution embedding spaces.

AngularPU's strongest gains emerge in datasets with complex visual features and limited positive labels—scenarios where traditional pseudo-labeling and contrastive methods often struggle. While there is some degradation in precision on noisier datasets like SVHN, this is consistently offset by higher recall and F1 scores. The empirical results validate the utility of modeling only the positive class using directional statistics, while regularizing the unlabeled distribution via angular uniformity—without relying on negative sampling, prior estimation, or heuristic alternation.

\subsection{Ablation and Sensitivity Analysis}
\label{sec:ablation}

We conduct a thorough ablation and sensitivity study to assess the contribution of each component in our proposed \textbf{AngularPU} framework and its robustness under varying hyperparameter regimes. Table~\ref{tab:ablation} presents the results across all datasets by removing, one at a time, the cosine uniformity regularizer ($\mathcal{L}_{\text{reg}}$), the adaptive weighting mechanism based on margin proximity, and the learnable margin parameter. Across datasets, we observe that the absence of $\mathcal{L}_{\text{reg}}$ leads to consistent reductions in AUC and average precision, particularly in SVHN and ADNI, where the unlabeled distribution exhibits greater structural ambiguity, indicating the regularizer’s essential role in encouraging dispersion and reducing spurious positive clustering. The removal of adaptive weights causes a more pronounced drop in F1 scores, reflecting the value of prioritizing ambiguous or marginal instances during optimization and suggesting that uniform treatment of unlabeled samples hinders the model’s ability to focus on informative gradients. Interestingly, replacing the learnable margin with a fixed one (e.g., $m=0.5$) has relatively minor impact, especially on STL-10 and CIFAR-10, showing that while margin learning adds flexibility, its contribution is not as critical as the other components. Importantly, the full model consistently outperforms its ablated counterparts on every metric and dataset, demonstrating that each component plays a distinct role in the final decision surface quality. We further examine hyperparameter sensitivity by varying the regularization coefficient $\lambda \in \{0.0, 0.1, 0.3, 0.5, 1.0\}$, the vMF concentration $\kappa \in \{0.1, 1, 3, 5, 10, 20\}$, and the fixed angular margin $m \in \{0.1, 0.3, 0.5, 0.7, 1.0\}$. The regularizer weight $\lambda$ shows a stable peak around 0.5, with gains saturating or degrading at higher values due to over-penalization of structure. The best $\kappa$ values are in the 3–5 range, balancing boundary sharpness and calibration, while very low $\kappa$ causes nearly degenerate high-recall classifiers and very high values lead to overconfident rejection. For fixed margins, $m \approx 0.5$ offers the most consistent results, with lower values yielding recall-dominant and precision-degrading behavior. Overall, both the ablation and sensitivity results reinforce that AngularPU's effectiveness stems from the careful combination of its geometrically grounded loss, selective weighting, and regularization components, all of which interact to produce a stable and competitive PU learner across diverse domains.

\begin{table}[h]
    \centering
    \caption{Ablation study across datasets. Each row removes one component from the full model.}
    \label{tab:ablation}
    \begin{tabular}{lcccccc}
        \toprule
        \textbf{Variant} & \textbf{Dataset} & \textbf{F1} $\uparrow$ & \textbf{Precision} $\uparrow$ & \textbf{Recall} $\uparrow$ & \textbf{AUC} $\uparrow$ & \textbf{AP} $\uparrow$ \\
        \midrule
        Full model        & CIFAR-10 & 0.930 & 0.925 & 0.940 & 0.970 & 0.977 \\
                         & SVHN     & 0.890 & 0.886 & 0.901 & 0.963 & 0.961 \\
                         & STL-10   & 0.992 & 0.990 & 0.994 & 0.998 & 0.999 \\
                         & ADNI     & 0.797 & 0.782 & 0.821 & 0.866 & 0.871 \\
        \midrule
        w/o $\mathcal{L}_{\text{reg}}$ & CIFAR-10 & 0.913 & 0.915 & 0.907 & 0.961 & 0.965 \\
                         & SVHN     & 0.875 & 0.880 & 0.862 & 0.955 & 0.946 \\
                         & STL-10   & 0.984 & 0.981 & 0.988 & 0.995 & 0.997 \\
                         & ADNI     & 0.773 & 0.762 & 0.794 & 0.841 & 0.846 \\
        \midrule
        w/o weights      & CIFAR-10 & 0.901 & 0.910 & 0.892 & 0.956 & 0.958 \\
                         & SVHN     & 0.862 & 0.869 & 0.847 & 0.944 & 0.935 \\
                         & STL-10   & 0.972 & 0.968 & 0.976 & 0.992 & 0.995 \\
                         & ADNI     & 0.760 & 0.750 & 0.779 & 0.822 & 0.830 \\
        \midrule
        Fixed margin     & CIFAR-10 & 0.927 & 0.923 & 0.935 & 0.969 & 0.975 \\
                         & SVHN     & 0.888 & 0.882 & 0.900 & 0.961 & 0.959 \\
                         & STL-10   & 0.991 & 0.989 & 0.993 & 0.998 & 0.999 \\
                         & ADNI     & 0.794 & 0.779 & 0.818 & 0.864 & 0.870 \\
        \bottomrule
    \end{tabular}
\end{table}

We study the impact of the core hyperparameters in \textbf{AngularPU}: the cosine uniformity regularization weight $\lambda$, the angular margin $m$ used for rejecting uncertain samples, and the vMF concentration parameter $\kappa$. We conduct extensive sensitivity sweeps across all four datasets, varying one hyperparameter at a time while keeping the others fixed to their default values ($\lambda = 0.5$, $m = 0.5$, $\kappa = 3.0$), and report mean and standard deviation over five random seeds.

\begin{figure}[h]
    \centering
    \includegraphics[width=0.48\textwidth]{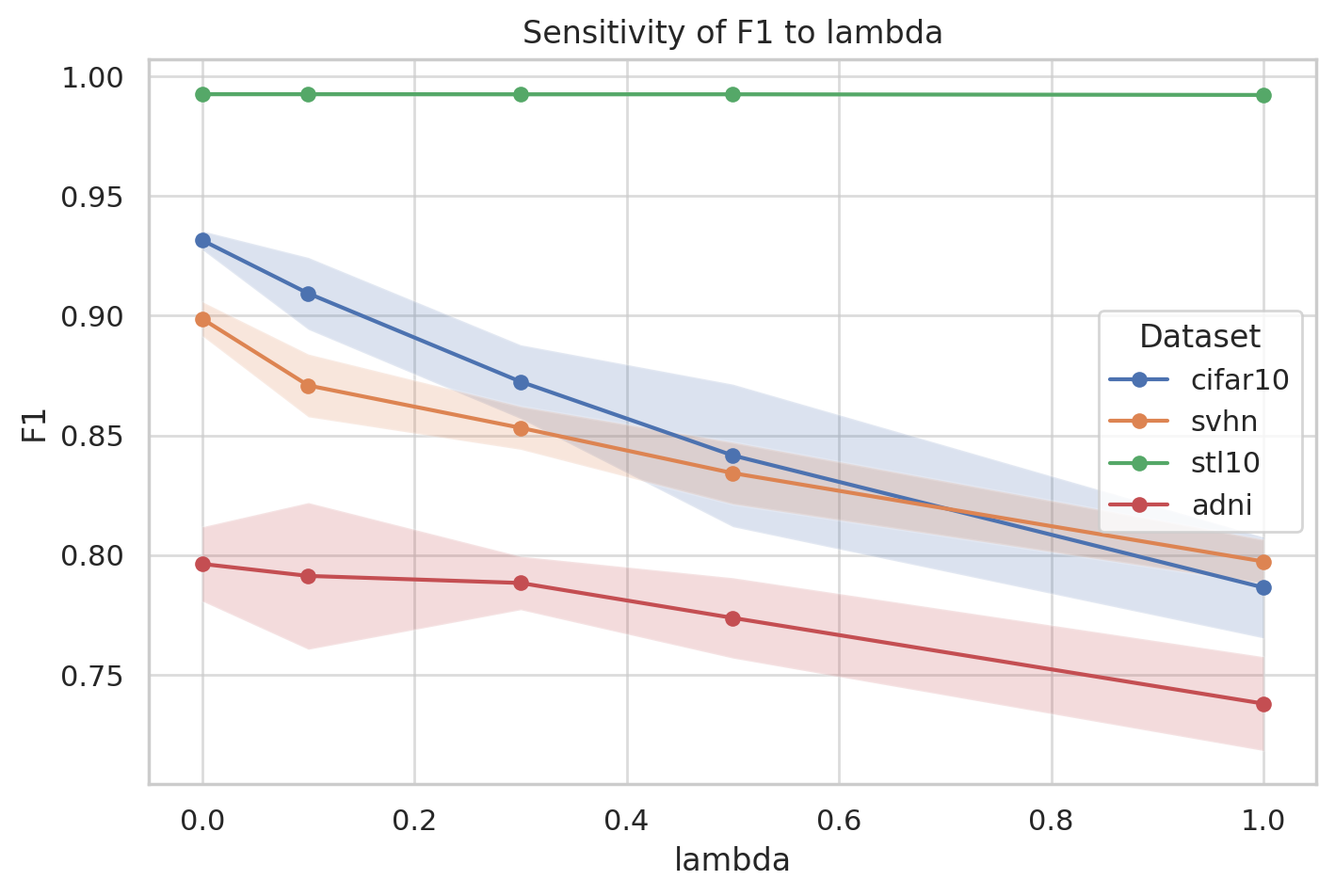}
    \includegraphics[width=0.48\textwidth]{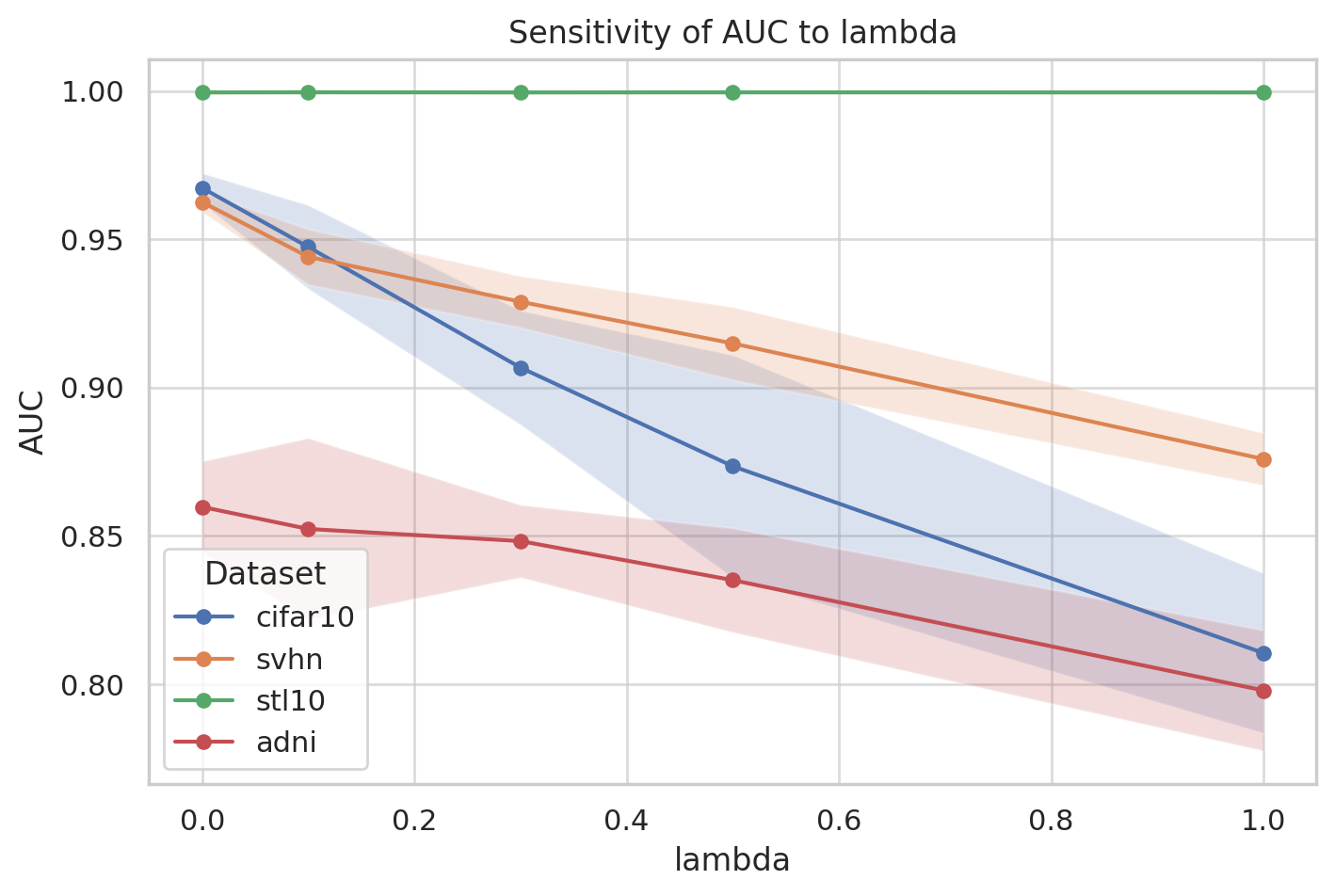}
    \caption{Sensitivity to $\lambda$ (cosine regularization weight). Mean $\pm$ std F1 and AUC across 5 seeds.}
    \label{fig:sens-lambda}
\end{figure}

As shown in Figure~\ref{fig:sens-lambda}, performance consistently improves with increasing $\lambda$ up to 0.5, beyond which the benefit saturates or slightly degrades. This pattern holds across all datasets and both F1 and AUC metrics, suggesting that moderate regularization effectively discourages false positives by spreading unlabeled embeddings uniformly, while excessive regularization leads to over-dispersion and degraded confidence. Notably, when $\lambda = 0$, recall is preserved but AUC and precision decrease, reflecting reduced discriminative quality without regularization.

\begin{figure}[h]
    \centering
    \includegraphics[width=0.48\textwidth]{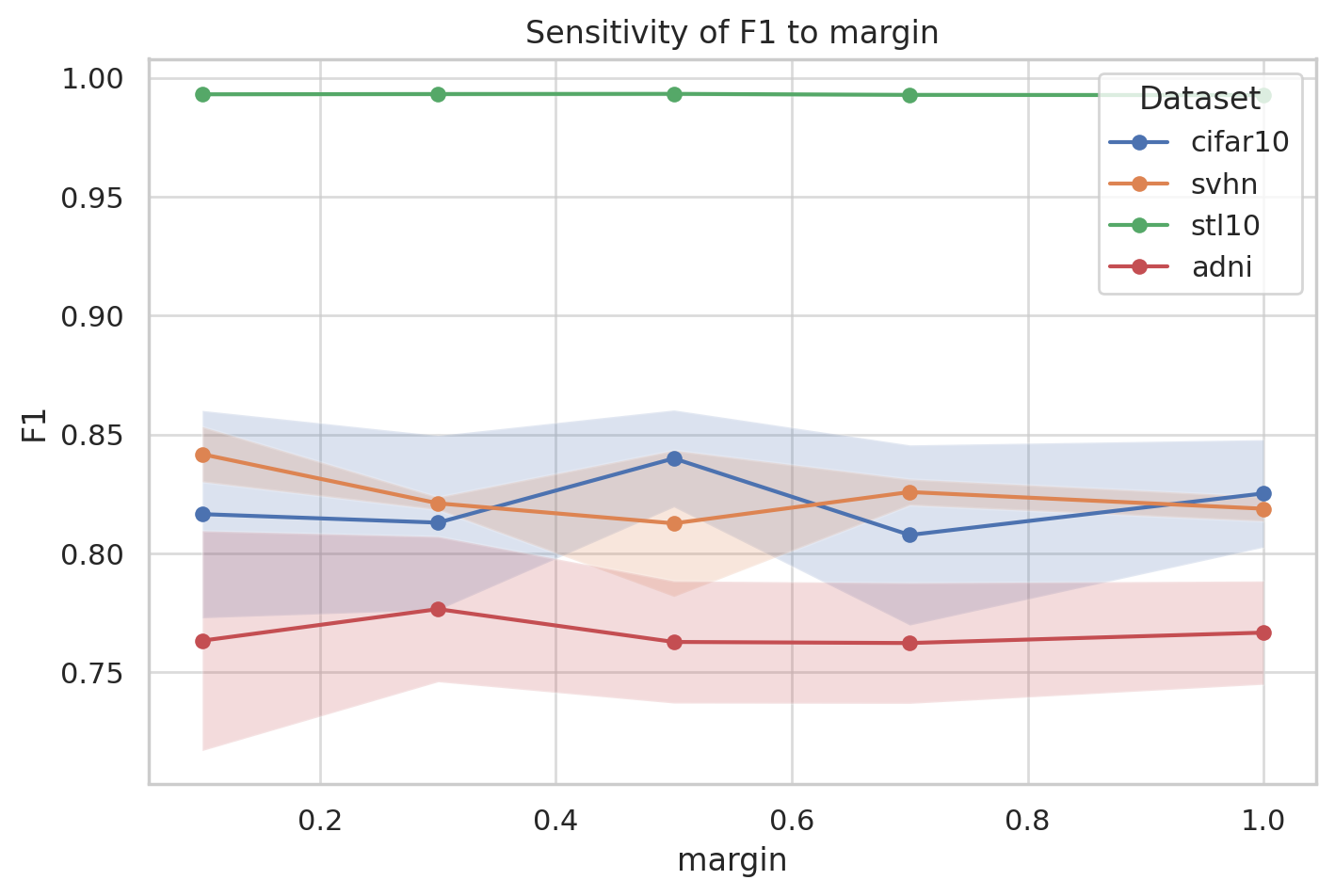}
    \includegraphics[width=0.48\textwidth]{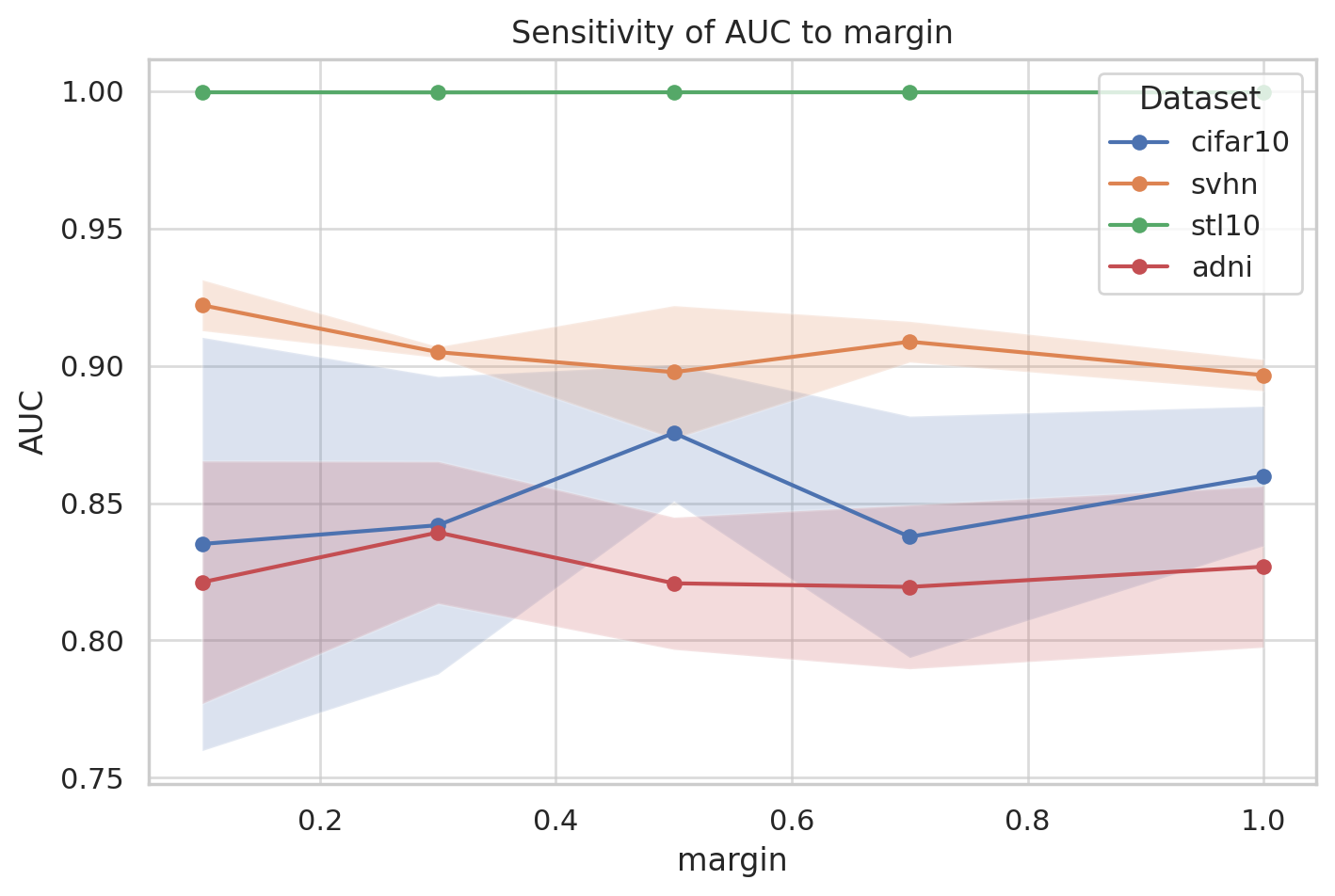}
    \caption{Sensitivity to margin $m$ (fixed angular threshold). Mean $\pm$ std F1 and AUC across 5 seeds.}
    \label{fig:sens-margin}
\end{figure}

For the fixed margin $m$, the results in Figure~\ref{fig:sens-margin} reveal that small values like $m = 0.1$ result in permissive classifiers that admit many unlabeled samples as positive, achieving high recall but low precision. Conversely, as $m$ increases, the model becomes more conservative; performance peaks around $m = 0.5$, which strikes a favorable balance between recall and precision. Beyond that point, particularly at $m = 1.0$, recall drops and F1 becomes less stable, confirming that overly strict angular thresholds reduce sensitivity to true positives.

\begin{table}[t]
  \centering
  \scriptsize
\setlength{\tabcolsep}{4pt}
\begin{tabular}{lcccccccc}
\toprule
         Dataset &     AP Cosine &         AP L2 & $\Delta$AP (cos-L2) &    AUC Cosine &        AUC L2 &     F1 Cosine &         F1 L2 &       $\Delta$AP 95\% CI \\
\midrule
            adni & 0.832 ± 0.012 & 0.584 ± 0.050 &       +0.248 & 0.849 ± 0.009 & 0.575 ± 0.030 & 0.788 ± 0.014 & 0.668 ± 0.002 & [+0.221, +0.275] \\
         cifar10 & 0.974 ± 0.007 & 0.814 ± 0.012 &       +0.161 & 0.968 ± 0.002 & 0.781 ± 0.035 & 0.931 ± 0.001 & 0.825 ± 0.061 & [+0.147, +0.174] \\
           stl10 & 0.993 ± 0.001 & 0.900 ± 0.057 &       +0.093 & 0.993 ± 0.001 & 0.957 ± 0.020 & 0.967 ± 0.001 & 0.908 ± 0.020 & [+0.053, +0.133] \\
            svhn & 0.932 ± 0.014 & 0.493 ± 0.076 &       +0.439 & 0.950 ± 0.001 & 0.549 ± 0.065 & 0.876 ± 0.006 & 0.640 ± 0.003 & [+0.376, +0.503] \\
Overall & 0.933 ± 0.067 & 0.698 ± 0.181 &       +0.235 & 0.940 ± 0.058 & 0.716 ± 0.180 & 0.891 ± 0.072 & 0.760 ± 0.121 & [+0.147, +0.337] \\
\bottomrule
\end{tabular}
  \caption{Primary configuration (weighted + learnable margin): cosine vs.\ Euclidean by dataset and overall. We report mean~$\pm$~std over seeds; $\Delta$AP is cosine$-$L2 with a 95\% bootstrap CI.}
  \label{tab:primary-cos-l2}
\end{table}

\begin{table}[t]
  \centering
  \begin{tabular}{lcccc}
\toprule
       Ablation &      AP ($\mu\pm\sigma$) &     AUC ($\mu\pm\sigma$) &      F1 ($\mu\pm\sigma$) &  N \\
\midrule
   COS-WGH-M0.5 & 0.939 ± 0.056 & 0.941 ± 0.056 & 0.894 ± 0.067 &  8 \\
 COS-WGH-Mlearn & 0.933 ± 0.067 & 0.940 ± 0.058 & 0.891 ± 0.072 &  8 \\
 COS-REG-Mlearn & 0.926 ± 0.058 & 0.917 ± 0.063 & 0.869 ± 0.072 &  8 \\
 COS-BCE-Mlearn & 0.921 ± 0.094 & 0.935 ± 0.077 & 0.896 ± 0.081 &  8 \\
L2-BCE-Rlearn-N & 0.734 ± 0.179 & 0.753 ± 0.159 & 0.743 ± 0.105 &  8 \\
    L2-WGH-R1.0 & 0.709 ± 0.197 & 0.727 ± 0.193 & 0.765 ± 0.122 &  8 \\
  L2-WGH-Rlearn & 0.698 ± 0.181 & 0.716 ± 0.180 & 0.760 ± 0.121 &  8 \\
  L2-BCE-Rlearn & 0.505 ± 0.132 & 0.479 ± 0.091 & 0.656 ± 0.069 &  8 \\
\bottomrule
\end{tabular}

  \caption{Ablation leaderboard (all datasets \& seeds). Mean~$\pm$~std for AP/AUC/F1; higher is better.}
  \label{tab:ablation-leaderboard}
\end{table}

\begin{figure}[h]
    \centering
    \includegraphics[width=0.48\textwidth]{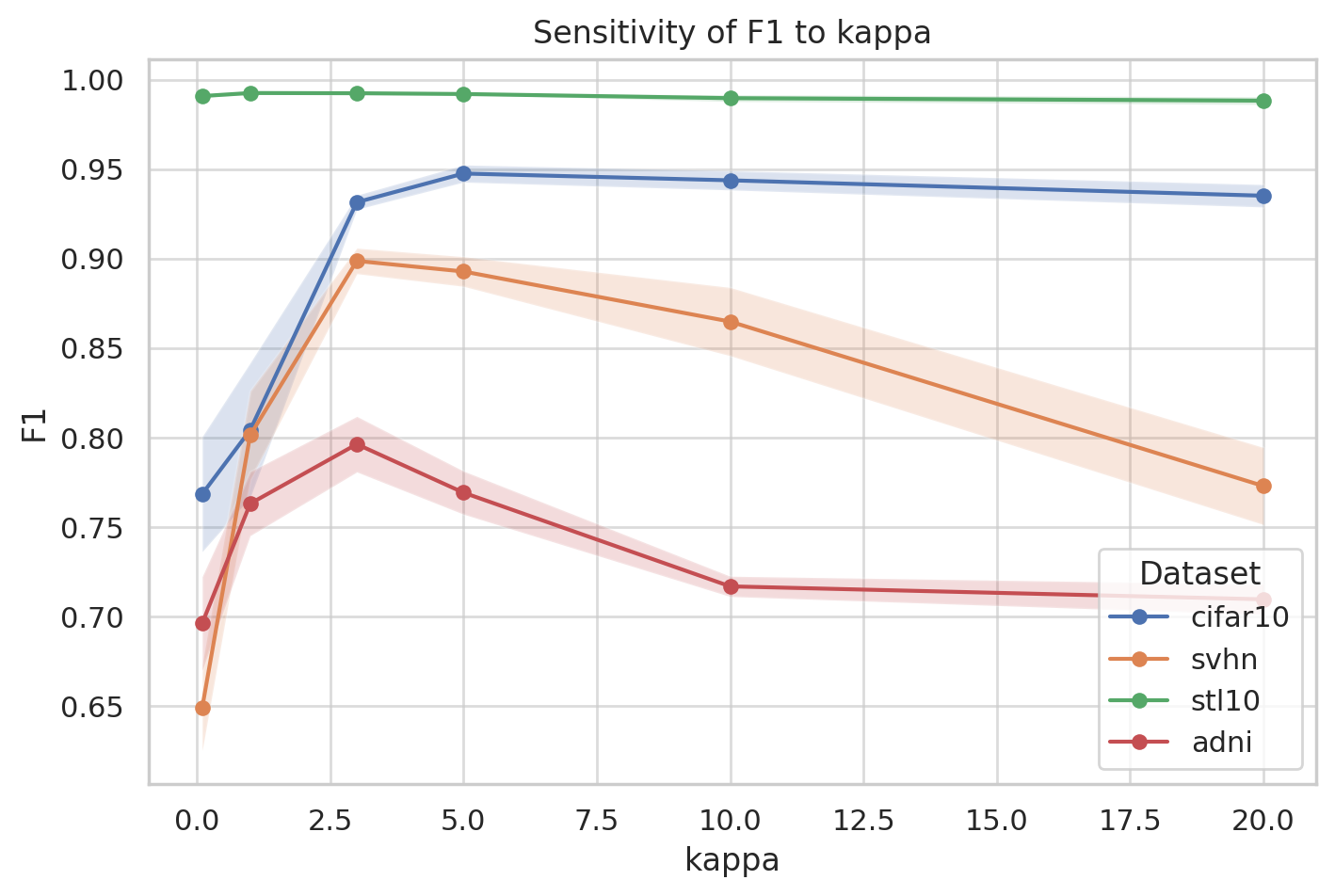}
    \includegraphics[width=0.48\textwidth]{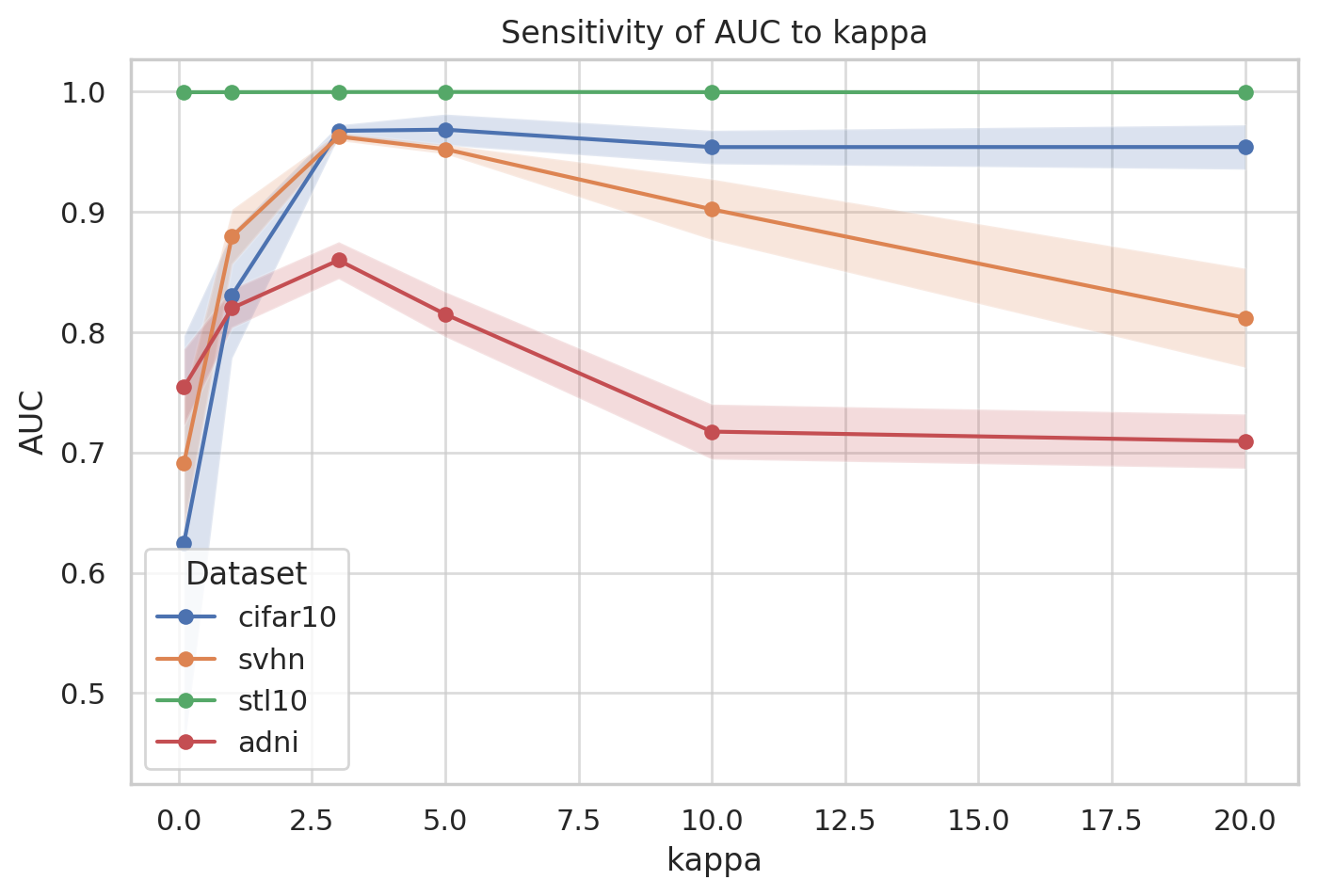}
    \caption{Sensitivity to $\kappa$ (vMF concentration). Mean $\pm$ std F1 and AUC across 5 seeds.}
    \label{fig:sens-kappa}
\end{figure}

Regarding the concentration parameter $\kappa$, Figure~\ref{fig:sens-kappa} illustrates how it shapes the sharpness of the vMF likelihood. At very low values (e.g., $\kappa = 0.1$), the likelihood surface is flat, yielding nearly uniform probabilities that diminish precision and increase false positives. As $\kappa$ increases to 3 or 5, the model becomes more confident, and both F1 and AUC improve steadily. However, for extremely high values such as $\kappa = 10$, we observe diminishing returns or even slight instability, especially on noisier datasets like ADNI, suggesting that overconfidence may hurt generalization in low-data regimes.

Overall, we find the method to be robust across a wide range of hyperparameter values. The default settings $\lambda=0.5$, $m=0.5$, and $\kappa=3$ consistently yield near-optimal performance and are used throughout the main experiments.

\begin{figure}[t]
  \centering

  \begin{subfigure}[t]{0.48\linewidth}
    \centering
    \includegraphics[width=\linewidth]{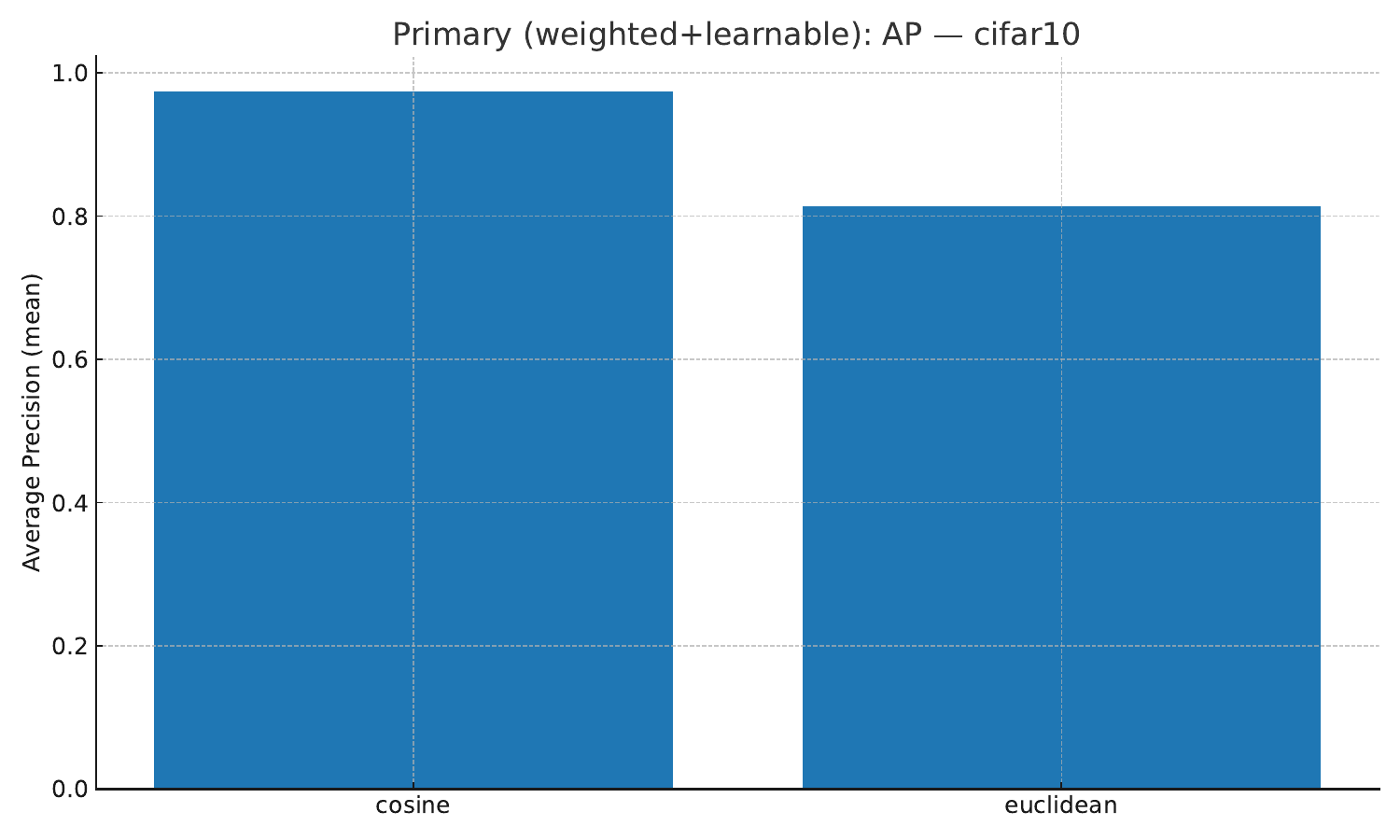}
    \caption{CIFAR-10}
  \end{subfigure}\hfill
  \begin{subfigure}[t]{0.48\linewidth}
    \centering
    \includegraphics[width=\linewidth]{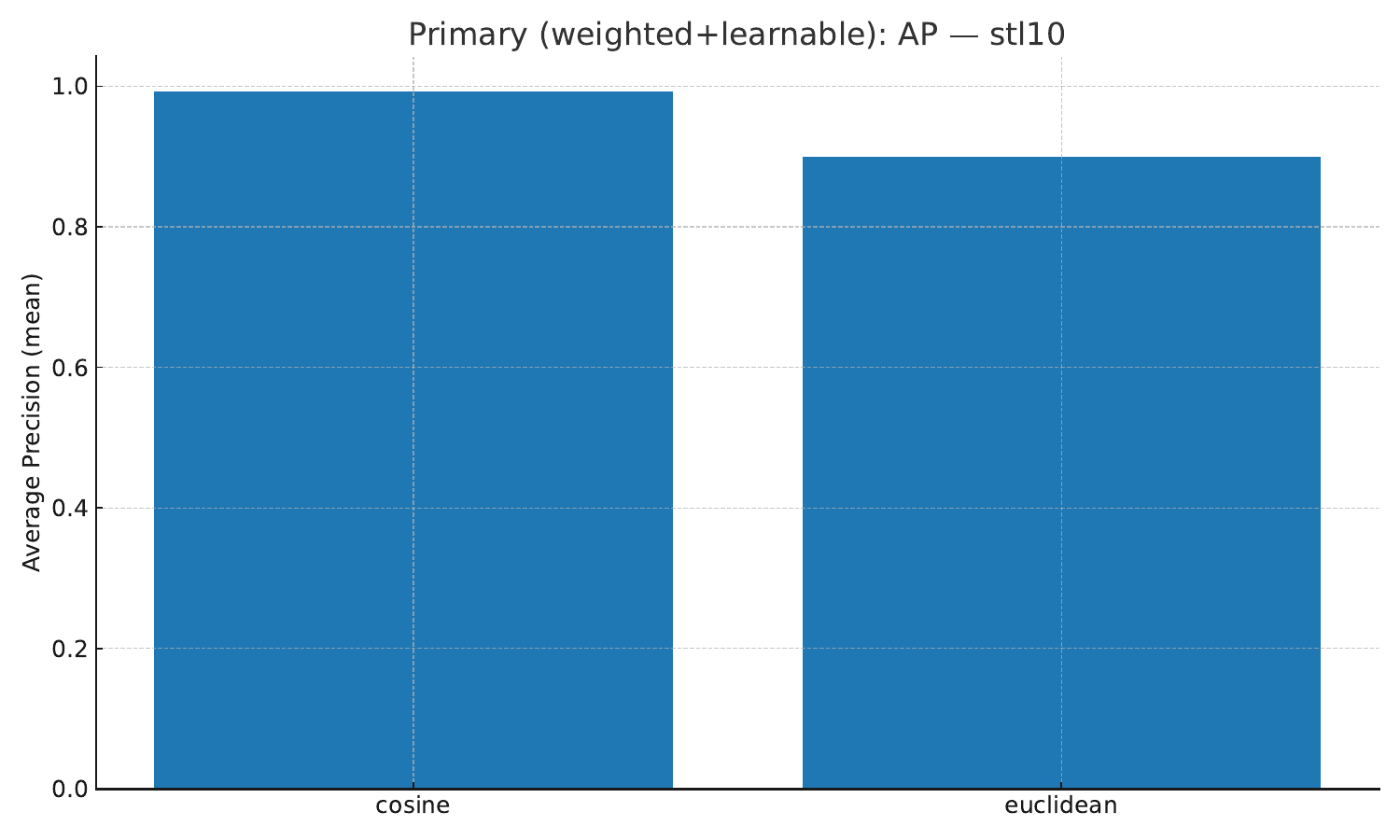}
    \caption{STL-10}
  \end{subfigure}

  \vspace{0.5em}

  \begin{subfigure}[t]{0.48\linewidth}
    \centering
    \includegraphics[width=\linewidth]{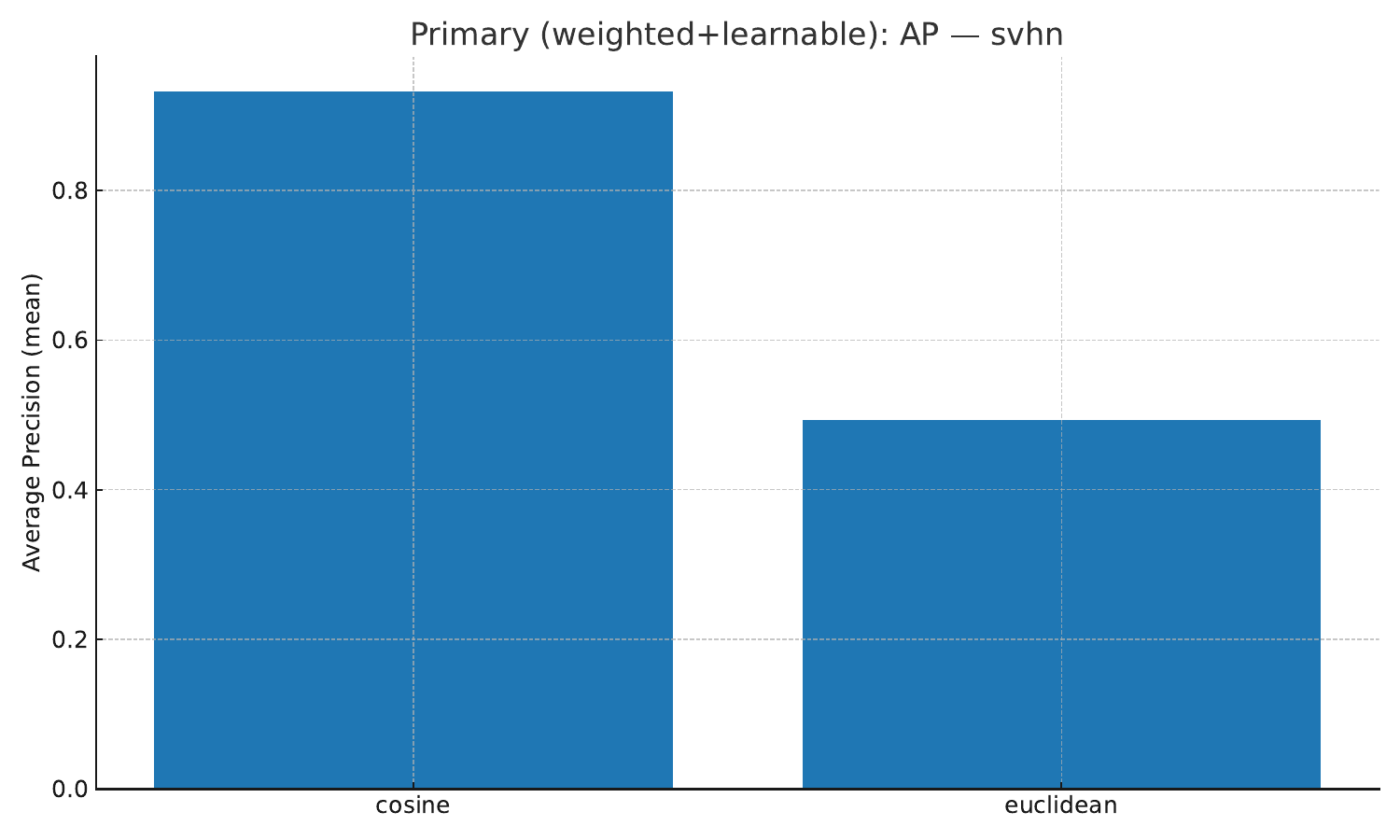}
    \caption{SVHN}
  \end{subfigure}\hfill
  \begin{subfigure}[t]{0.48\linewidth}
    \centering
    \includegraphics[width=\linewidth]{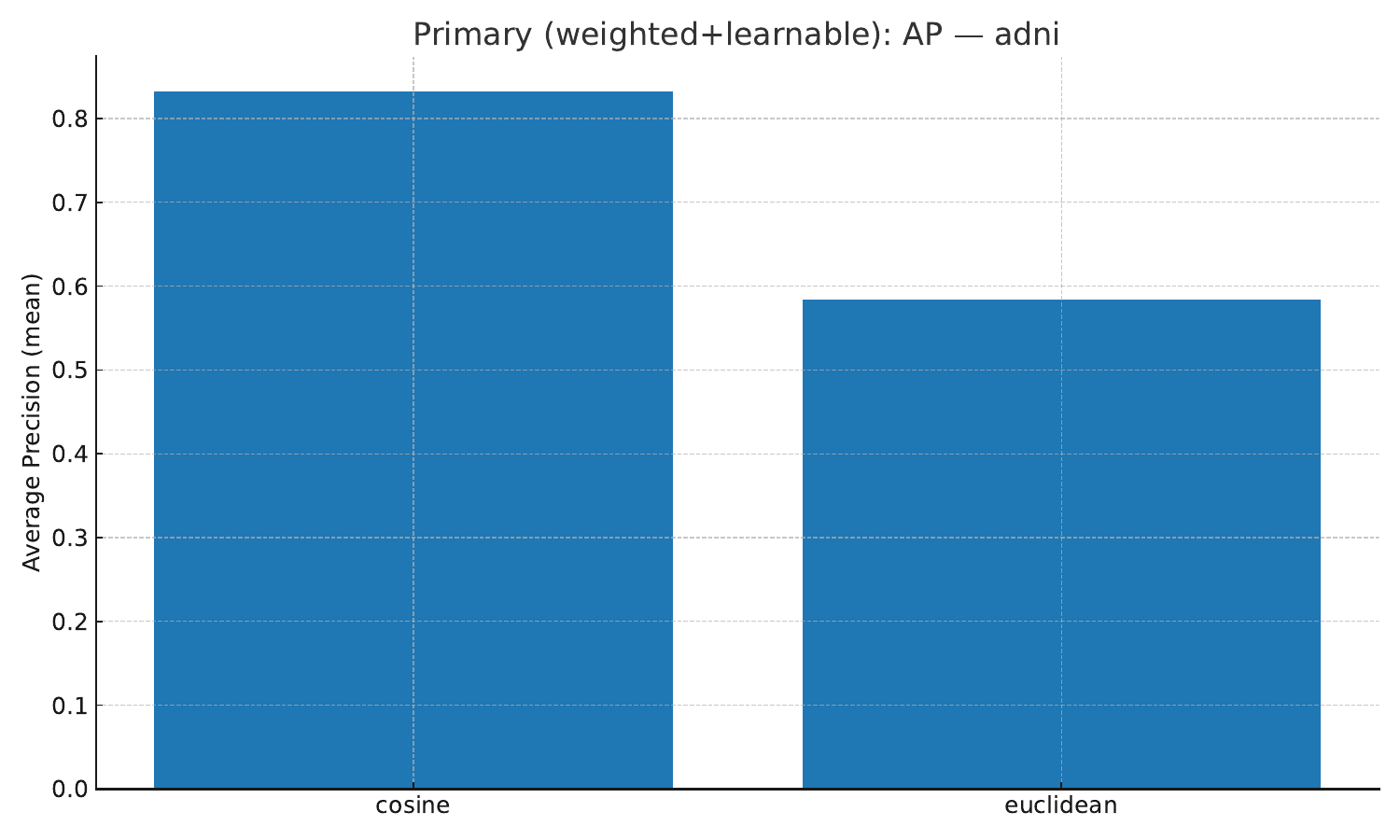}
    \caption{ADNI}
  \end{subfigure}

  \caption{Primary configuration (weighted + learnable margin): per-dataset AP comparison between cosine and Euclidean. Cosine is superior on all datasets.}
  \label{fig:primary-per-ds}
\end{figure}

\begin{figure}[t]
  \centering

  \begin{subfigure}[t]{0.49\linewidth}
    \centering
    \includegraphics[width=\linewidth]{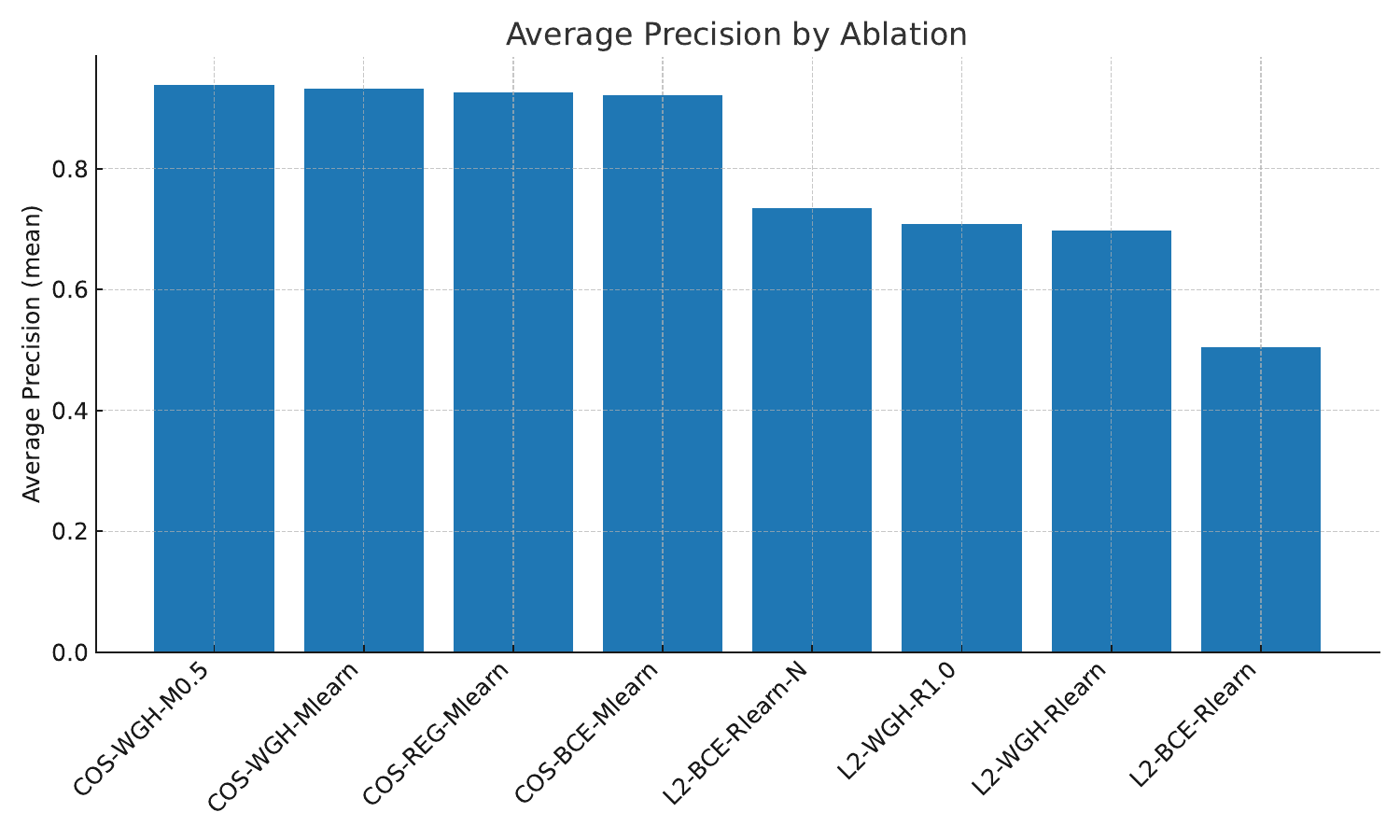}
    \caption{Average Precision by ablation (mean across datasets and seeds). Blue bars indicate the top ranks are consistently achieved by cosine variants.}
    \label{fig:ablation-ap}
  \end{subfigure}\hfill
  \begin{subfigure}[t]{0.49\linewidth}
    \centering
    \includegraphics[width=\linewidth]{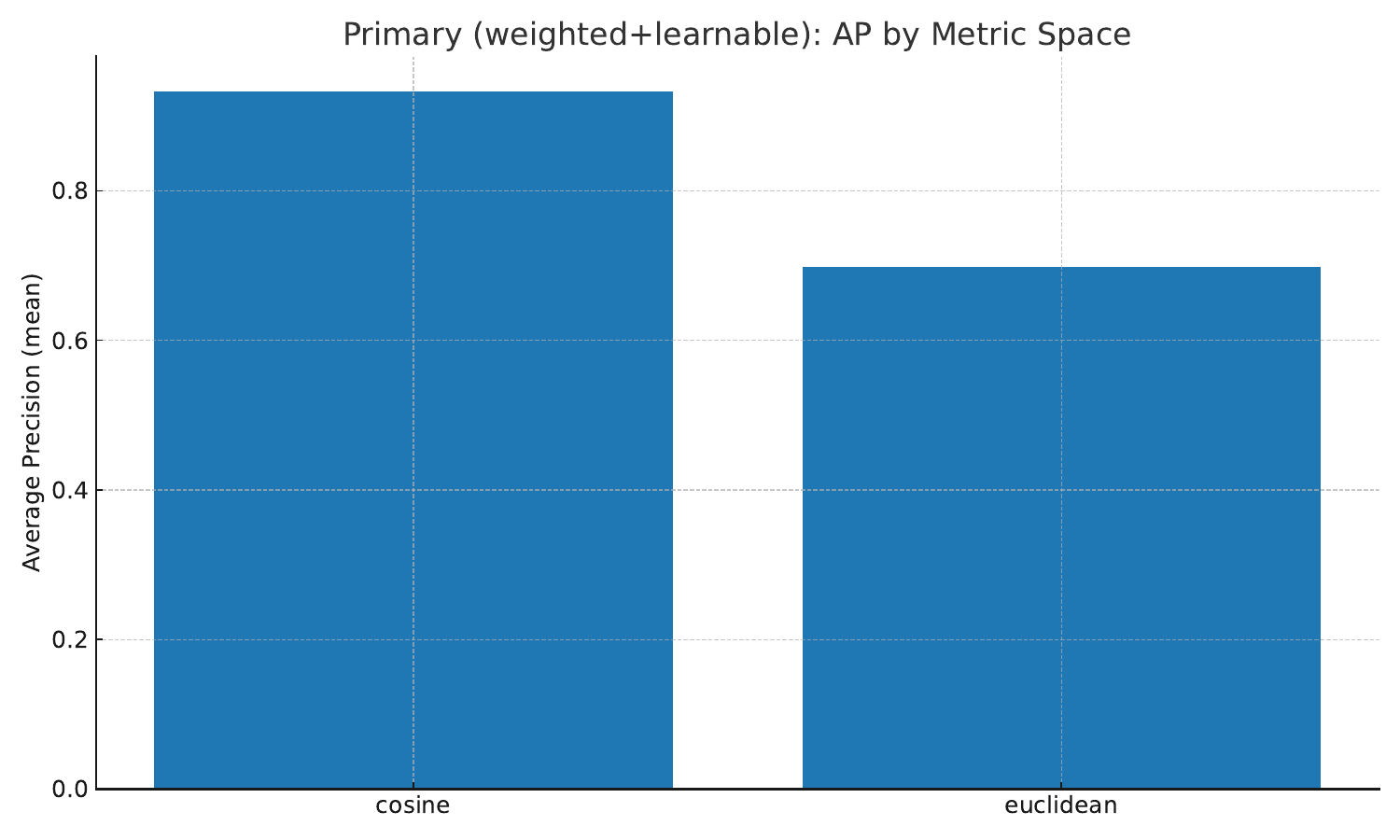}
    \caption{Primary configuration (weighted + learnable margin): AP by metric space (overall). Cosine outperforms Euclidean.}
    \label{fig:primary-overall}
  \end{subfigure}

  \caption{Ablations and overall primary configuration results.}
  \label{fig:ablation-primary-1x2}
\end{figure}

Moreover, we study how the choice of \emph{embedding geometry} and \emph{margin mechanism} affects positive--unlabeled (PU) classification when using a single positive prototype. We compare (i) an \textbf{angular} model that scores by cosine similarity between L2-normalized embeddings and a normalized prototype (vMF-style), and (ii) a \textbf{Euclidean} model that scores by the negative squared distance to an (unnormalized) prototype (Gaussian-style). The hypothesis is that the scale-invariance of cosine provides a more robust inductive bias than scale-sensitive Euclidean distances in the presence of heterogeneous unlabeled data. For each geometry we test: (a) \textbf{BCE}; (b) \textbf{BCE + margin-aware weighting} (higher weights near the decision boundary); and for cosine only (c) \textbf{BCE + uniformity regularization} on unlabeled embeddings. Margins are either \textbf{learnable} (softplus of a scalar) or \textbf{fixed} (0.5 or 1.0). Evaluation reports AP/AUC and F1. All other factors (splits, optimizer, schedule) are held fixed across ablations.

Our primary configuration is \emph{weighted loss + learnable margin} in each geometry. Figure~\ref{fig:primary-overall} shows that \textbf{cosine} outperforms \textbf{Euclidean} in AP \emph{overall}, and Figure~\ref{fig:primary-per-ds} shows the same ordering \emph{for every dataset}. Table~\ref{tab:primary-cos-l2} summarizes means~$\pm$~std and paired $\Delta$AP (cosine$-$L2) with 95\% bootstrap CIs.  Figure~\ref{fig:ablation-ap} ranks \textbf{cosine} variants at the top (weighted with learnable or fixed margin and cosine+uniformity). Margin-aware weighting helps both geometries, but yields a larger gain for cosine, consistent with bounded angular distances being less sensitive to embedding-norm variability. Euclidean narrows the gap when embeddings are normalized at evaluation, yet remains behind cosine on AP/AUC. For prototype-based PU learning, \textbf{angular geometry with a learnable margin} is the most reliable choice. We therefore the use of cosine~+~weighted~+~learnable-margin as the default configuration is justified in the experiments.

\section{Conclusion}

We introduced \textbf{AngularPU}, a geometrically motivated framework for positive-unlabeled learning that casts the classification task as angular separation on the hypersphere. By leveraging a von Mises-Fisher likelihood with an optional cosine uniformity regularizer, AngularPU provides an elegant and efficient alternative to pseudo-labeling and contrastive methods that dominate the field. Our formulation is end-to-end, does not require prior estimation, and avoids the instability of multi-stage pipelines. Extensive experiments across four diverse datasets—ranging from natural images to neuroimaging—demonstrate that AngularPU achieves state-of-the-art or near state-of-the-art performance in F1 and recall, two metrics particularly critical for PU scenarios. An important characteristic of AngularPU is its tendency toward conservative rejection of negatives, which yields consistently high recall across datasets. In many PU scenarios — especially those involving rare or costly positives — this behavior is preferable to maximizing overall accuracy. While this may slightly lower precision in certain benchmarks, it aligns with the real-world utility of PU classifiers, where retaining positives outweighs the risk of admitting some false positives. Our ablation and sensitivity studies further highlight the individual contribution of each component and confirm the model's robustness to hyperparameter variation. While AngularPU achieves strong results without complex heuristics, it assumes that positive embeddings form a single dominant directional mode. Our experiments suggest the method remains effective when the positive class comprises nearby subclusters. Although we did not include an additional large-scale benchmark, the computational profile of AngularPU is favorable for scaling. Training complexity is dominated by a single encoder forward/backward pass, with an $\mathcal{O}(|U|^2)$ term only in the regularizer, computed over unlabeled batches. This pairwise term is efficiently implemented via batched matrix operations. While the per-batch cost grows quadratically with batch size, total runtime scales linearly with dataset size under fixed batch configurations. In our experiments, the method remained numerically stable and tractable when increasing the size of the unlabeled set by up to an order of magnitude. These observations, combined with the absence of iterative pseudo-labeling or momentum queues, suggest that AngularPU is well-suited to large-scale PU scenarios without architectural modifications. Future work will explore explicit vMF mixtures to better handle more complex or strongly multi-modal positive distributions. Another direction is adapting the angular formulation to semi-supervised or open-set settings where negative sampling becomes partially available. Overall, our work underscores the value of directional geometry in PU learning and opens the door to a broader class of probabilistic embedding-based methods.

\bibliographystyle{unsrt}  
\bibliography{references}  
\appendix

\section{PU Learning with von Mises--Fisher Distributions on the Hypersphere}
\label{app:vMF}

We consider binary classification in a PU setting on the unit hypersphere $\mathbb{S}^{d-1}$:
\begin{itemize}
    \item $y=1$: $z \sim \mathrm{vMF}(\mu, \kappa)$ with mean direction $\mu \in \mathbb{S}^{d-1}$, concentration $\kappa>0$ and density $p(z|1) = C_d(\kappa) e^{\kappa \mu^\top z}$
    \item $y=0$: $z \sim \mathrm{Uniform}(\mathbb{S}^{d-1})$ with density $p(z|0) = U_d$
\end{itemize}
Let $\pi_1$ and $\pi_0=1-\pi_1$ be the class priors.

\begin{theorem}[Bayes-optimal decision rule]
The Bayes-optimal classifier assigns $y=1$ if
\begin{equation}
    \kappa \,\mu^\top z > T, \quad T = -\log\frac{\pi_1}{\pi_0} - \log\frac{C_d(\kappa)}{U_d}.
\end{equation}
\end{theorem}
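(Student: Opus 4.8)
The statement is the standard Bayes-optimal rule for a two-class problem, specialized to the vMF--uniform model, so the plan is to write out the posterior odds and simplify. First I would recall that, for $0/1$ loss, the Bayes-optimal classifier assigns $y=1$ precisely when $\Pr(Y{=}1\mid z) > \Pr(Y{=}0\mid z)$, equivalently when the posterior log-odds $\log\frac{\Pr(Y{=}1\mid z)}{\Pr(Y{=}0\mid z)}$ is positive. Next I would apply Bayes' rule to rewrite the log-odds as the sum of the prior log-odds and the log-likelihood ratio:
\begin{equation}
\log\frac{\Pr(Y{=}1\mid z)}{\Pr(Y{=}0\mid z)}
= \log\frac{\pi_1}{\pi_0} + \log\frac{p(z\mid 1)}{p(z\mid 0)}.
\end{equation}

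Then I would substitute the two densities, $p(z\mid 1) = C_d(\kappa)\,e^{\kappa\mu^\top z}$ and $p(z\mid 0) = U_d$, so that the likelihood ratio becomes $\log\frac{C_d(\kappa)\,e^{\kappa\mu^\top z}}{U_d} = \kappa\,\mu^\top z + \log\frac{C_d(\kappa)}{U_d}$. Collecting terms, the decision $y=1$ is taken iff $\kappa\,\mu^\top z + \log\frac{C_d(\kappa)}{U_d} + \log\frac{\pi_1}{\pi_0} > 0$, i.e. iff $\kappa\,\mu^\top z > T$ with $T = -\log\frac{\pi_1}{\pi_0} - \log\frac{C_d(\kappa)}{U_d}$, which is exactly the claimed threshold. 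This mirrors the computation already sketched for Proposition~\ref{prop:map-vmf}, with the only difference being that the threshold is here expressed on the scale of $\kappa\,\mu^\top z$ rather than $\mu^\top z$; dividing through by $\kappa>0$ recovers the form in that proposition.

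There is essentially no hard step here: the argument is a direct substitution into the standard Bayes rule, and the only things to be careful about are bookkeeping of signs in the threshold constant and noting that $C_d(\kappa)$ and $U_d$ are strictly positive so the logarithms are well-defined and no inequality flips. One could optionally remark that, since $U_d$ is a constant, the conclusion is unchanged if the negative density is merely rotation-invariant on $\mathbb{S}^{d-1}$ rather than exactly uniform (as in the Remark following Proposition~\ref{prop:map-vmf}), and that the boundary set $\{\kappa\,\mu^\top z = T\}$ has measure zero so the tie-breaking convention is immaterial. Full constants and the explicit form of $C_d(\kappa)$ in terms of modified Bessel functions can be deferred to the detailed derivation in this appendix.
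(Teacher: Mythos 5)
Your proposal is correct and follows exactly the same route as the paper's proof: apply Bayes' rule, take the log-odds, substitute the vMF and uniform densities, and collect the constant terms into the threshold $T$. The additional remarks on rotation-invariant negatives and the measure-zero boundary are consistent with the paper's own Remark and Lemma on isotropic negatives.
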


\begin{proof}
Bayes' rule gives:
\[
\frac{P(1|z)}{P(0|z)} = \frac{p(z|1)\pi_1}{p(z|0)\pi_0}
\]
Substituting the densities:
\[
\log \frac{P(1|z)}{P(0|z)} = \kappa \mu^\top z + \log\frac{\pi_1}{\pi_0} + \log\frac{C_d(\kappa)}{U_d}.
\]
The classifier predicts $y=1$ when this quantity $>0$, which yields the stated threshold $T$.
\end{proof}

\begin{lemma}[Isotropic negatives]\label{lem:isotropic}
If $p(z\mid Y{=}0)=g(\|z\|)\,$ is rotation-invariant on $\mathbb S^{d-1}$,
then $\log\frac{p(z\mid 1)}{p(z\mid 0)}=\kappa\,\mu^\top z + c$ for a constant $c$ independent of $z$,
so the Bayes rule remains a threshold on $\mu^\top z$ (the constant $\tau$ changes accordingly).
\end{lemma}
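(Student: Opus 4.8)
The plan is to exploit the single structural fact that distinguishes the sphere from the ambient space: every point of $\mathbb{S}^{d-1}$ has the same Euclidean norm, so any density on the sphere that is written as a function of $\|z\|$ — or, more robustly, any density that is invariant under the orthogonal group $O(d)$ — must be constant. First I would make the hypothesis precise: "rotation-invariant on $\mathbb{S}^{d-1}$" means $p(Rz\mid Y{=}0)=p(z\mid Y{=}0)$ for every $R\in O(d)$. Since $O(d)$ acts transitively on $\mathbb{S}^{d-1}$ (any unit vector can be rotated to any other), the value of $p(\cdot\mid Y{=}0)$ is the same at every point; call it $U_d'$. The notation $g(\|z\|)$ says the same thing more bluntly, because $\|z\|\equiv 1$ on the sphere forces $p(z\mid Y{=}0)=g(1)$, a constant.

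Next I would substitute the vMF density $p(z\mid Y{=}1)=C_d(\kappa)\exp(\kappa\,\mu^\top z)$ together with the constant $U_d'$ into the log-likelihood ratio:
\[
\log\frac{p(z\mid 1)}{p(z\mid 0)} \;=\; \kappa\,\mu^\top z + \log C_d(\kappa) - \log U_d',
\]
which is exactly the claimed affine form $\kappa\,\mu^\top z + c$ with $c=\log C_d(\kappa)-\log U_d'$ independent of $z$. Then, following the Bayes-optimal computation already established in this appendix (and in Proposition~\ref{prop:map-vmf}), the posterior log-odds is $\log\frac{p(z\mid1)}{p(z\mid0)}+\log\frac{\pi}{1-\pi}=\kappa\,\mu^\top z+c+\log\frac{\pi}{1-\pi}$. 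Requiring this to be nonnegative and solving for $\mu^\top z$ yields the threshold rule $\mu^\top z\ge\tau$ with $\tau=-\frac{1}{\kappa}\!\left(c+\log\frac{\pi}{1-\pi}\right)$, i.e.\ the same decision geometry as the uniform case with only the constant $\tau$ updated.

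I do not expect a genuine obstacle here; the only point that deserves care is conceptual rather than computational — spelling out why "rotation-invariant on $\mathbb{S}^{d-1}$" collapses to constancy (transitivity of the $O(d)$-action), so that the superficially more general radial profile $g(\|z\|)$ contributes nothing once the radius is pinned to $1$. After that, the result is a one-line substitution into the log-ratio plus the Bayes argument already carried out above, so the proof should be short.
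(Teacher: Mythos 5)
Your proposal is correct and is exactly the argument the paper intends: the appendix actually states Lemma~\ref{lem:isotropic} without a written proof, and the natural justification is the one you give — rotation-invariance (equivalently, the radial form $g(\|z\|)$ with $\|z\|\equiv 1$) forces $p(z\mid Y{=}0)$ to be a constant, after which the log-ratio computation is the same substitution already carried out in the Bayes-optimal theorem just above. Your extra care in deriving constancy from transitivity of the orthogonal group's action on $\mathbb S^{d-1}$ is the only nontrivial step, and you handle it correctly.
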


\begin{remark}
Since $|\mu^\top z| \le 1$, the score $\kappa \mu^\top z$ lies in $[-\kappa, \kappa]$, so $T$ must be in this range for nontrivial classification.
\end{remark}

\section{Consistency of the Positive Prototype}
\label{app:lem_pos_prot}

Let $\hat{\mu}_t$ be the positive prototype estimate from labeled positives $P$ at iteration $t$.

\begin{lemma}
If $z_i \stackrel{\text{i.i.d.}}{\sim} \mathrm{vMF}(\mu, \kappa)$, the MLE $\hat{\mu}_t$ satisfies
\[
\hat{\mu}_t \xrightarrow{p} \mu \quad \text{as } t \to \infty.
\]
\end{lemma}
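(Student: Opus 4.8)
The plan is to prove this in three moves: (i) identify the MLE of the vMF mean direction in closed form, (ii) apply the law of large numbers to the sample mean, and (iii) push the limit through the normalization map by the continuous mapping theorem. Throughout, $\kappa$ (and the encoder $f_\theta$) are held fixed, so this reduces to a clean parametric estimation statement on $\mathbb{S}^{d-1}$, exactly matching the classical directional-statistics result cited in the lemma.

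First I would write the log-likelihood of $z_1,\dots,z_n \stackrel{\text{i.i.d.}}{\sim} \mathrm{vMF}(\mu,\kappa)$ as $\ell_n(\mu) = n\log C_d(\kappa) + \kappa\sum_{i=1}^n \mu^\top z_i$, to be maximized subject to $\|\mu\| = 1$. Only the linear term $\mu^\top\bigl(\sum_i z_i\bigr)$ depends on $\mu$, and by Cauchy--Schwarz it is maximized over the unit sphere at $\hat\mu_n = \bar R_n / \|\bar R_n\|$ with $\bar R_n := \tfrac1n\sum_{i=1}^n z_i$, whenever $\bar R_n \neq 0$. Since the $z_i$ are i.i.d.\ and bounded (they lie on $\mathbb{S}^{d-1}$), the strong law of large numbers gives $\bar R_n \to \mathbb{E}[z_1]$ almost surely, hence in probability. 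The structural fact I need is the vMF mean formula: by rotational symmetry about $\mu$, $\mathbb{E}[z_1] = A_d(\kappa)\,\mu$ where $A_d(\kappa) = I_{d/2}(\kappa)/I_{d/2-1}(\kappa) \in (0,1)$ for every $\kappa > 0$. In particular the limit is a nonzero vector, so $\Pr(\bar R_n = 0) \to 0$ and $\hat\mu_n$ is eventually well-defined; the vanishing-probability event where it is undefined does not affect convergence in probability.

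Finally, since the map $x \mapsto x/\|x\|$ is continuous on $\mathbb{R}^d \setminus \{0\}$ and the limit point $A_d(\kappa)\mu$ lies in that set, the continuous mapping theorem yields $\hat\mu_n = \bar R_n/\|\bar R_n\| \xrightarrow{p} A_d(\kappa)\mu/\|A_d(\kappa)\mu\| = \mu$, using $A_d(\kappa) > 0$ and $\|\mu\| = 1$. This completes the argument.

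The only real obstacle is the bookkeeping around $\mathbb{E}[z_1] \neq 0$: one must confirm $A_d(\kappa) > 0$ for every $\kappa > 0$ (standard in directional statistics, e.g.\ \citep{mardia2009directional,sra2015dirml}), since positivity of the mean resultant length is precisely what keeps the normalization map continuous at the limit and excludes the degenerate uniform-like regime $\kappa = 0$. Everything else is the law of large numbers plus the continuous mapping theorem. If a convergence rate were wanted, a delta-method expansion of the normalization around $A_d(\kappa)\mu$ would give $\sqrt{n}$-asymptotic normality of $\hat\mu_n$ in the tangent space to the sphere at $\mu$, but that goes beyond the stated claim and is not needed here.
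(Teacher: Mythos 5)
Your proposal is correct and follows essentially the same route as the paper's own proof: identify the MLE as the normalized sample mean, invoke the law of large numbers together with $\mathbb{E}[z_1]=A_d(\kappa)\mu$, and normalize. You additionally supply the details the paper leaves implicit---the Cauchy--Schwarz derivation of the MLE, the positivity of $A_d(\kappa)$ ensuring a nonzero limit, and the continuous mapping theorem for $x\mapsto x/\|x\|$---which makes your version a more complete rendering of the same argument.
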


\begin{proof}
Maximizing the vMF log-likelihood w.r.t. $\mu$ aligns $\mu$ with the normalized sample mean
\[
\hat{\mu}_t = \frac{\bar{z}_t}{\|\bar{z}_t\|}, \quad \bar{z}_t = \frac{1}{|P|}\sum_{i \in P} z_i.
\]
For vMF, $\mathbb{E}[z_i] = A_d(\kappa)\mu$ with mean resultant length $A_d(\kappa)$.  
By the Law of Large Numbers, $\bar{z}_t \xrightarrow{p} A_d(\kappa)\mu$, and normalization yields $\hat{\mu}_t \to \mu$.
\end{proof}

\section{Cosine Uniformity Regularizer}
\label{app:cos_uniform}

Given unlabeled embeddings $z_i \in \mathbb{S}^{d-1}$, define
\begin{equation}
    \mathcal{L}_{\mathrm{reg}} =
    \log\left( \frac{1}{|U|(|U|-1)} \sum_{i \neq j} e^{t\, z_i^\top z_j} \right),
\end{equation}
with temperature $t>0$.

\begin{proposition}
Minimizing $\mathcal{L}_{\mathrm{reg}}$ encourages the unlabeled embeddings to approximate a uniform distribution on $\mathbb{S}^{d-1}$.
\end{proposition}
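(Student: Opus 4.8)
The plan is to reduce this finite-sample claim to a population-level variational characterization of the uniform measure on $\mathbb{S}^{d-1}$, using that the kernel $K_t(u,v) = e^{t\,u^\top v}$ is, up to a positive multiplicative constant, a Gaussian radial kernel on the sphere. First I would observe that since $\log$ is strictly increasing, minimizing $\mathcal{L}_{\mathrm{reg}}$ over configurations $\{z_1,\dots,z_n\}\subset\mathbb{S}^{d-1}$ is equivalent to minimizing the pairwise energy $E_n(\{z_i\}) = \frac{1}{n(n-1)}\sum_{i\neq j} e^{t\,z_i^\top z_j}$. Writing the empirical measure $\rho_n = \frac1n\sum_i \delta_{z_i}$, this energy is a $U$-statistic approximation to the quadratic functional $\mathcal{E}(\rho) = \iint_{\mathbb{S}^{d-1}\times\mathbb{S}^{d-1}} e^{t\,u^\top v}\, d\rho(u)\,d\rho(v)$, so it suffices to show $\mathcal{E}$ is uniquely minimized over Borel probability measures on $\mathbb{S}^{d-1}$ by the normalized surface measure $\sigma_{d-1}$, and then to transfer this back to point configurations.

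Next I would establish that $K_t$ is strictly positive definite on $\mathbb{S}^{d-1}$. From $\|u-v\|^2 = 2-2u^\top v$ for $u,v\in\mathbb{S}^{d-1}$ one gets $e^{t\,u^\top v} = e^{t}\,e^{-(t/2)\|u-v\|^2}$, i.e. $K_t$ is a positive multiple of the Gaussian kernel, which is strictly positive definite on $\mathbb{R}^d$ and hence on the sphere; equivalently, expanding $e^{t\,u^\top v} = \sum_{k\ge 0} \frac{t^k}{k!}(u^\top v)^k$ and using that each power $(u^\top v)^k$ has nonnegative Gegenbauer coefficients shows every spherical-harmonic Fourier coefficient of $K_t$ is strictly positive. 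Strict positive definiteness makes $\rho \mapsto \mathcal{E}(\rho)$ strictly convex on the set of probability measures, so it has at most one minimizer; since $\mathcal{E}$ is invariant under the isometry group $O(d)$ acting on $\rho$, uniqueness forces the minimizer to be $O(d)$-invariant, and the only such probability measure is $\sigma_{d-1}$. A direct computation, expanding $\mathcal{E}$ against the harmonic expansion, confirms $\sigma_{d-1}$ attains the minimum (only the degree-zero term survives).

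Finally I would transfer this to the discrete regularizer: for finite $n$ the minimizers of $E_n$ are the $n$-point energy-minimizing configurations for $K_t$, and by the standard principle of equidistribution for continuous strictly positive definite kernels on compact sets \citep{wang2020understanding} their empirical measures $\rho_n$ converge weakly to $\sigma_{d-1}$ as $n\to\infty$; equivalently $\mathcal{L}_{\mathrm{reg}}$ evaluated at a minimizing configuration converges to $\log\mathcal{E}(\sigma_{d-1})$, the infimum of the population objective. Hence driving $\mathcal{L}_{\mathrm{reg}}$ toward its minimum pushes the unlabeled embeddings toward a uniform arrangement on the hypersphere.

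I expect the main obstacle to be the last step — making precise the sense in which a near-minimizer of the finite-$n$ objective is \emph{close} to uniform. The clean statements are asymptotic (weak convergence of empirical measures of exact energy minimizers) or purely variational (uniqueness of the population minimizer); a non-asymptotic guarantee would require a quantitative-uniqueness estimate, e.g. bounding a spherical-cap discrepancy of $\rho_n$ by the energy gap $\mathcal{E}(\rho_n) - \mathcal{E}(\sigma_{d-1})$ using the strictly positive spectral coefficients of $K_t$ from the positive-definiteness step. I would either invoke the asymptotic statement of \citep{wang2020understanding} directly, or, if a quantitative version is desired, prove such an energy-gap-to-discrepancy inequality from the harmonic expansion.
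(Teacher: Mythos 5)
Your proposal is correct, but it is a genuinely different — and far more substantive — argument than the one the paper gives. The paper's proof is a three-sentence qualitative observation: $e^{t\,z_i^\top z_j}$ is large for high-similarity pairs, the log--mean--exp emphasizes such pairs, hence minimizing the regularizer "spreads embeddings more evenly." It never establishes that the uniform distribution is the (unique) minimizer, which is the actual content of the word "uniform" in the statement; indeed, the main-text version of this proposition retreats to the weaker claim that the regularizer merely "promotes angular dispersion \ldots without assuming exact hyperspherical uniformity." Your route supplies exactly the missing variational argument: the identity $e^{t\,u^\top v}=e^{t}e^{-(t/2)\|u-v\|^2}$ on $\mathbb S^{d-1}$ makes the kernel a strictly positive definite Gaussian kernel, so the population energy $\mathcal E(\rho)=\iint e^{t\,u^\top v}\,d\rho\,d\rho$ is strictly convex on probability measures, its unique minimizer is $O(d)$-invariant and hence equals $\sigma_{d-1}$, and empirical measures of discrete energy minimizers equidistribute. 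This is essentially the Wang--Isola uniformity argument, which the paper cites only as motivation in the related-work section but never actually deploys in the proof. What your approach buys is a precise and provable version of the proposition (with the uniform law identified as the unique population minimizer, and a clear spectral route to a quantitative energy-gap-to-discrepancy bound); what the paper's approach buys is brevity and an honest match to the deliberately hedged phrasing "encourages \ldots to approximate." Your own flagged caveat — that the finite-$n$, near-minimizer statement is only asymptotic or requires a quantitative uniqueness estimate — is the right one, and is the only point at which your argument, like any rigorous version of this claim, stops short of a fully non-asymptotic guarantee.
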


\begin{proof}
The cosine similarity $z_i^\top z_j$ is maximal when $z_i = z_j$, making $e^{t z_i^\top z_j}$ large. The log--mean--exp form emphasizes high-similarity pairs. Minimizing $\mathcal{L}_{\mathrm{reg}}$ therefore reduces high-similarity occurrences, spreading embeddings more evenly and reducing false positive clustering near $\mu$.
\end{proof}

\section{Concentration and Bounds on the Regularizer}
\label{app:reg_effect}

Throughout this section we write the log–mean–exp temperature as $\beta$ (in the main text $\beta \equiv t$).

\begin{lemma}[Cosine concentration on the sphere: self-contained proof]\label{lem:cosine-self}
Let $z_1,\dots,z_M \stackrel{\text{i.i.d.}}{\sim}\mathrm{Unif}(\mathbb S^{d-1})$ and define $X_{ij}=z_i^\top z_j$ for $i\neq j$.
Then for any fixed pair $(i,j)$:
\begin{enumerate}
\item $\mathbb{E}[X_{ij}]=0$ and $\mathrm{Var}(X_{ij})=\tfrac{1}{d}$.
\item (Sub-Gaussian mgf.) For all $\lambda\in\mathbb{R}$,
\[
\mathbb{E}\big[\exp\{\lambda X_{ij}\}\big]\;\le\;\exp\!\Big(\frac{\lambda^2}{2d}\Big).
\]
Equivalently, $X_{ij}$ is sub-Gaussian with parameter $1/\sqrt{d}$.
\item (Tails.) For any $t\in[0,1)$,
\[
\Pr\!\big(|X_{ij}|\ge t\big)\;\le\;2\exp\!\Big(-\tfrac{d\,t^2}{2}\Big).
\]
\end{enumerate}
\end{lemma}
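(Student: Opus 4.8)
The plan is to reduce all three claims to the law of a single coordinate and then run a moment-comparison argument. First I would condition on $z_j$ and use the rotational invariance of the distribution of $z_i$: for any fixed unit vector $z_j$, the inner product $z_i^\top z_j$ has the same law as $e_1^\top z_i=(z_i)_1=:W$, the first coordinate of a uniformly random point on $\mathbb S^{d-1}$; hence it suffices to prove (i)--(iii) for $W$. Item (i) is then immediate: the reflection $W\mapsto -W$ is a measure-preserving symmetry, so $\mathbb E[W]=0$, and since the coordinates of a uniform vector are exchangeable, $\mathbb E[W^2]=\tfrac1d\sum_{k=1}^d\mathbb E[(z_i)_k^2]=\tfrac1d\,\mathbb E\|z_i\|^2=\tfrac1d$.

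The heart of the argument is the mgf bound (ii), which I would prove by a termwise comparison of power series. Writing $z_i=g/\|g\|$ with $g\sim N(0,I_d)$ and using that the direction $g/\|g\|$ is independent of the radius $\|g\|$, one gets, for every $k\ge 0$,
\[
\mathbb E[W^{2k}]=\frac{\mathbb E[g_1^{2k}]}{\mathbb E[\|g\|^{2k}]}=\frac{(2k-1)!!}{\prod_{j=0}^{k-1}(d+2j)},
\]
while all odd moments of $W$ vanish by symmetry. (Alternatively, these moments can be read off the $\mathrm{Beta}\big(\tfrac{d-1}{2},\tfrac{d-1}{2}\big)$ law of $(W+1)/2$.) Expanding $\mathbb E[e^{\lambda W}]=\sum_{k\ge0}\frac{\lambda^{2k}}{(2k)!}\mathbb E[W^{2k}]$ and using the identity $(2k)!=2^k k!\,(2k-1)!!$, the coefficient of $\lambda^{2k}$ equals $\big(2^k k!\prod_{j=0}^{k-1}(d+2j)\big)^{-1}$, which is at most $\big(k!\,(2d)^k\big)^{-1}$ because $d+2j\ge d$ for all $j\ge0$. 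Summing this bound termwise — legitimate since every term is nonnegative — gives $\mathbb E[e^{\lambda W}]\le\sum_{k\ge0}\frac1{k!}\big(\tfrac{\lambda^2}{2d}\big)^k=e^{\lambda^2/(2d)}$, i.e. $W$, and therefore $X_{ij}$, is sub-Gaussian with parameter $1/\sqrt d$.

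Item (iii) then follows from the standard Chernoff bound: for $t\ge0$, $\Pr(X_{ij}\ge t)\le\inf_{\lambda>0}e^{-\lambda t}\,\mathbb E[e^{\lambda X_{ij}}]\le\inf_{\lambda>0}e^{-\lambda t+\lambda^2/(2d)}=e^{-dt^2/2}$, optimized at $\lambda=dt$; the lower tail is identical by the $W\mapsto -W$ symmetry, and a union bound supplies the factor $2$. The hypothesis $t\in[0,1)$ is cosmetic, since $|X_{ij}|\le1$ makes the inequality trivial for $t\ge1$. I expect the only genuine obstacle to be establishing the even-moment identity cleanly and justifying the termwise comparison of the two series; the Gaussian-quotient route makes the former short, and nonnegativity of all summands makes the latter rigorous, after which everything is routine.
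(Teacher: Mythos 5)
Your proposal is correct and follows essentially the same route as the paper's proof: reduce to the first coordinate by rotational invariance, obtain the exact even moments $\mathbb E[W^{2k}]=(2k-1)!!\big/\prod_{j=0}^{k-1}(d+2j)$, compare the mgf series termwise against $e^{\lambda^2/(2d)}$ using $d+2j\ge d$, and finish with Chernoff at $\lambda=dt$. The only (harmless) difference is that you derive the moment identity via the Gaussian quotient $z=g/\|g\|$ and independence of direction and radius, whereas the paper integrates the exact marginal density $c_d(1-t^2)^{(d-3)/2}$ against the Beta function; both yield the identical formula and the rest of the argument coincides.
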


\begin{proof}
\textbf{Step 1: Reduce to a single spherical coordinate.}
Fix $i\neq j$. Conditional on $z_i=u\in\mathbb S^{d-1}$, rotational invariance gives
$X_{ij}\,\big|\,z_i=u \;\overset{d}{=}\; \langle z, u\rangle$ with $z\sim\mathrm{Unif}(\mathbb S^{d-1})$.
Hence it suffices to study $T=\langle z, e_1\rangle=z_1$, the first coordinate of a uniform point on the sphere; all claims then hold for $X_{ij}$ by conditioning.

\textbf{Step 2: Exact density and even moments of $T$.}
It is classical that $T$ has density
\[
f_d(t)=c_d\,(1-t^2)^{\frac{d-3}{2}}\mathbf 1_{[-1,1]}(t),
\qquad
c_d=\frac{\Gamma(\tfrac d2)}{\sqrt{\pi}\,\Gamma(\tfrac{d-1}{2})}.
\]
For $k\in\mathbb N$, using symmetry and the Beta function,
\[
\mathbb E[T^{2k}]
=2c_d\!\int_{0}^{1}\! t^{2k}(1-t^2)^{\frac{d-3}{2}}\,dt
=c_d\!\int_0^1\! u^{k-\frac{1}{2}}(1-u)^{\frac{d-3}{2}}\,du
= c_d\,\frac{B\!\left(k+\tfrac{1}{2},\,\tfrac{d-1}{2}\right)}{1},
\]
where we substituted $u=t^2$. Using $B(x,y)=\frac{\Gamma(x)\Gamma(y)}{\Gamma(x+y)}$ and the identity
$\Gamma\!\left(k+\tfrac{1}{2}\right)=\frac{(2k)!}{4^k k!}\sqrt{\pi}$,
we obtain
\[
\mathbb E[T^{2k}]
= \frac{\Gamma(\tfrac d2)}{\sqrt{\pi}\Gamma(\tfrac{d-1}{2})}
\cdot \frac{\Gamma(k+\tfrac{1}{2})\,\Gamma(\tfrac{d-1}{2})}{\Gamma(k+\tfrac d2)}
= \frac{\Gamma(\tfrac d2)}{\Gamma(k+\tfrac d2)}\cdot \frac{(2k)!}{4^k k!}
= \frac{(2k-1)!!}{\prod_{s=0}^{k-1}(d+2s)}.
\]
In particular, $\mathbb E[T]=0$ and for $k=1$, $\mathbb E[T^2]=1/d$, proving (i).

\textbf{Step 3: A dimension-explicit mgf bound.}
Expanding the mgf and using that odd moments vanish,
\[
\mathbb E\big[e^{\lambda T}\big]
=\sum_{k=0}^{\infty}\frac{\lambda^{2k}}{(2k)!}\,\mathbb E[T^{2k}]
=\sum_{k=0}^{\infty}\frac{\lambda^{2k}}{(2k)!}\cdot
\frac{(2k-1)!!}{\prod_{s=0}^{k-1}(d+2s)}
=\sum_{k=0}^{\infty}\frac{1}{k!}\cdot
\frac{1}{2^k\,\prod_{s=0}^{k-1}(d+2s)}\,\lambda^{2k}.
\]
Since $\prod_{s=0}^{k-1}(d+2s)\ge d^k$ for all $k\ge1$, we get the elementary bound
\[
\mathbb E\big[e^{\lambda T}\big]
\le \sum_{k=0}^{\infty}\frac{1}{k!}\Big(\frac{\lambda^2}{2d}\Big)^{\!k}
= \exp\!\Big(\frac{\lambda^2}{2d}\Big),
\]
which proves (ii) for $T$, hence for $X_{ij}$ by Step~1.

\textbf{Step 4: Tails by Chernoff.}
For $t>0$ and any $\lambda>0$, $\Pr(T\ge t)\le e^{-\lambda t}\,\mathbb E[e^{\lambda T}]
\le \exp\!\big(-\lambda t+\tfrac{\lambda^2}{2d}\big)$.
Optimizing at $\lambda=dt$ yields $\Pr(T\ge t)\le \exp(-\tfrac{d t^2}{2})$.
Apply the same bound to $-T$ and union bound to get (iii).
\end{proof}

\begin{remark}[Sharper constant]
Using standard concentration on the sphere (Lévy's lemma),
one can tighten $d$ to $d-1$ in (ii)–(iii). We keep the elementary $d$-based proof here for self-containment.
\end{remark}

\begin{corollary}[Maximum pairwise cosine]\label{cor:max-self}
Let $K=\binom{M}{2}$. With probability at least $1-\delta$,
\[
\max_{i<j}|X_{ij}|\;\le\;\sqrt{\frac{2\log(2K/\delta)}{\,d\,}}.
\]
\end{corollary}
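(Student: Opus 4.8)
The plan is to combine the single–pair sub-Gaussian tail bound of Lemma~\ref{lem:cosine-self}(iii) with a union bound over the $K=\binom{M}{2}$ index pairs. The key observation is that a union bound requires no independence, so the fact that the $X_{ij}$ share indices (and are therefore dependent) plays no role whatsoever; all we need is a uniform per-pair tail estimate, which Lemma~\ref{lem:cosine-self} already supplies.

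First I would fix a threshold $t\in[0,1)$ and apply Lemma~\ref{lem:cosine-self}(iii) to each of the $K$ pairs, giving $\Pr(|X_{ij}|\ge t)\le 2\exp(-dt^2/2)$. Summing these bounds yields
\[
\Pr\Big(\max_{i<j}|X_{ij}|\ge t\Big)\;\le\;\sum_{i<j}\Pr\big(|X_{ij}|\ge t\big)\;\le\;2K\exp\!\Big(-\frac{dt^2}{2}\Big).
\]
Then I would calibrate $t$ by setting the right-hand side equal to $\delta$: solving $2K\exp(-dt^2/2)=\delta$ gives $dt^2/2=\log(2K/\delta)$, i.e. $t=\sqrt{2\log(2K/\delta)/d}$. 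Substituting this choice back, the probability that $\max_{i<j}|X_{ij}|\ge t$ is at most $\delta$, so the complementary event holds with probability at least $1-\delta$, which is exactly the claim (the stated non-strict inequality follows a fortiori).

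The only point that deserves a sentence of care is the admissibility hypothesis $t\in[0,1)$ built into Lemma~\ref{lem:cosine-self}(iii). If the calibrated value $t=\sqrt{2\log(2K/\delta)/d}$ exceeds $1$—which can only happen in the degenerate regime $d\le 2\log(2K/\delta)$—the asserted inequality is vacuously true, since $|X_{ij}|\le 1$ holds deterministically for inner products of unit vectors, so the bound is satisfied with probability $1$. In the complementary (and only interesting) regime the Chernoff-plus-union-bound argument above applies verbatim. Consequently there is no genuine obstacle here: the entire content of the corollary is already packaged in Lemma~\ref{lem:cosine-self}, and the proof is a routine calibration of a union bound.
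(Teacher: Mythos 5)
Your proof is correct and follows exactly the paper's approach: a union bound over the $K$ pairs applied to the tail estimate of Lemma~\ref{lem:cosine-self}(iii), followed by calibrating $t$ so the total failure probability equals $\delta$. Your extra remark handling the regime where the calibrated $t$ exceeds $1$ (so the lemma's hypothesis $t\in[0,1)$ fails but the bound is vacuous since $|X_{ij}|\le 1$) is a sensible addition the paper leaves implicit.
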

\begin{proof}
Apply a union bound to Lemma~\ref{lem:cosine-self}(iii) over the $K$ pairs.
\end{proof}

\begin{lemma}[$\pi\text{-sensitivity of the LME regularizer under a vMF--Uniform mixture}$]\label{lem:pi-sensitivity}
Let $Z$ be drawn from the mixture $P_\pi = \pi\,\mathrm{vMF}(\mu,\kappa) + (1-\pi)\,\mathrm{Unif}(\mathbb S^{d-1})$, and let $Z'$ be an independent copy.
For $\beta>0$, define the pairwise log–mean–exp functional
\[
\phi(\beta)\;=\;\log\,\mathbb E\!\left[\exp\{\beta\, Z^\top Z'\}\right].
\]
Then
\[
\phi(\beta)
\;\le\;
\log\!\Big( (1-\pi^2)\,e^{\frac{\beta^2}{2d}} \;+\; \pi^2\,e^{\beta} \Big)
\;\;=\;\;
\frac{\beta^2}{2d} + \log\!\Big( 1 - \pi^2 + \pi^2\,e^{\beta - \frac{\beta^2}{2d}}\Big).
\]
In particular, for any unlabeled sample $U=\{z_i\}_{i=1}^M$ drawn i.i.d.\ from $P_\pi$ and
\[
L_{\mathrm{reg}}(U)
\;=\;
\log\!\Bigg(\frac{1}{\binom{M}{2}}\sum_{i<j} e^{\beta\, z_i^\top z_j}\Bigg),
\]
we have the expectation bound
\[
\mathbb E\big[L_{\mathrm{reg}}(U)\big] \;\le\; \phi(\beta).
\]
\end{lemma}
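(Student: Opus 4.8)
The plan is to prove the two displayed inequalities in turn. For the pairwise bound on $\phi(\beta)$, I would condition on which mixture component produced each of $Z$ and $Z'$: let $B,B'$ be independent $\mathrm{Bernoulli}(\pi)$ indicators ($1$ for the vMF component, $0$ for the uniform one), so that $\mathbb{E}[e^{\beta Z^\top Z'}]$ decomposes as a convex combination of four conditional expectations with weights $(1-\pi)^2$, $\pi(1-\pi)$, $(1-\pi)\pi$, and $\pi^2$. The central observation is that whenever at least one of $Z,Z'$ is the uniform draw, I can condition on the other point $=v$ (an arbitrary, possibly random, unit vector) and use rotational invariance of $\mathrm{Unif}(\mathbb{S}^{d-1})$ to get $Z^\top v \overset{d}{=} T$, the first coordinate of a uniform point; Lemma~\ref{lem:cosine-self}(ii) then bounds that conditional mgf by $e^{\beta^2/(2d)}$, \emph{uniformly in} $v$, hence also after averaging over the other point. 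This handles the uniform--uniform case and both cross cases at once. For the remaining vMF--vMF case there is no such reduction, but $Z^\top Z'\le 1$ holds deterministically on the sphere, so (using $\beta>0$) that conditional mgf is at most $e^{\beta}$.

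Summing the four weighted bounds and using $(1-\pi)^2 + 2\pi(1-\pi) = 1-\pi^2$ gives $\mathbb{E}[e^{\beta Z^\top Z'}] \le (1-\pi^2)\,e^{\beta^2/(2d)} + \pi^2 e^{\beta}$; taking logarithms yields the first claimed inequality, and pulling the factor $e^{\beta^2/(2d)}$ out of the logarithm gives the equivalent form $\tfrac{\beta^2}{2d} + \log\!\big(1-\pi^2+\pi^2 e^{\beta-\beta^2/(2d)}\big)$.

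For the sample-level bound, I would apply Jensen's inequality: since $\log$ is concave, $\mathbb{E}[L_{\mathrm{reg}}(U)] \le \log \mathbb{E}\big[\binom{M}{2}^{-1}\sum_{i<j} e^{\beta z_i^\top z_j}\big]$. Because the $z_i$ are i.i.d.\ from $P_\pi$, for every pair $i\neq j$ the couple $(z_i,z_j)$ has the law of $(Z,Z')$, so each summand has expectation $e^{\phi(\beta)}$; the empirical average therefore has expectation $e^{\phi(\beta)}$, and the right-hand side collapses to exactly $\phi(\beta)$, as required.

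The only mildly delicate point is the treatment of the mixed cases in the first part: one must notice that the rotational-invariance reduction needs only \emph{one} of the two arguments to be uniform, which is precisely what allows the $(0,0)$, $(0,1)$, and $(1,0)$ contributions to be pooled with total weight $1-\pi^2$ against the common sub-Gaussian bound $e^{\beta^2/(2d)}$. The four-way decomposition, the trivial bound $Z^\top Z'\le 1$, and the Jensen step are all routine.
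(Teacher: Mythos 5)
Your proposal is correct and follows essentially the same route as the paper's proof: decompose $\mathbb{E}[e^{\beta Z^\top Z'}]$ by mixture component, bound every term involving at least one uniform draw by the sub-Gaussian mgf bound $e^{\beta^2/(2d)}$ from Lemma~\ref{lem:cosine-self}(ii) (pooling the weights to $1-\pi^2$), use the trivial bound $e^{\beta}$ for the vMF--vMF term, and finish the sample-level claim with Jensen plus the fact that each pair $(z_i,z_j)$ is distributed as $(Z,Z')$. The only cosmetic difference is that you write the decomposition as four terms via Bernoulli indicators while the paper writes three, which changes nothing.
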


\begin{proof}
Write the mixture decomposition for an independent pair $(Z,Z')$:
\[
\mathbb E\!\left[e^{\beta Z^\top Z'}\right]
= \pi^2\,\mathbb E\!\left[e^{\beta V^\top V'}\right]
+ 2\pi(1-\pi)\,\mathbb E\!\left[e^{\beta V^\top U}\right]
+ (1-\pi)^2\,\mathbb E\!\left[e^{\beta U^\top U'}\right],
\]
where $V,V' \sim \mathrm{vMF}(\mu,\kappa)$ i.i.d.\ and $U,U' \sim \mathrm{Unif}(\mathbb S^{d-1})$ i.i.d., all mutually independent.

\emph{Uniform and cross terms.} For any fixed unit vector $x$, the coordinate $\langle x, U\rangle$ is sub-Gaussian with parameter $1/\sqrt{d}$; hence
$\mathbb E[e^{\beta \langle x, U\rangle}] \le e^{\beta^2/(2d)}$. Conditioning on $V$ (or $U$) and integrating,
\[
\mathbb E\!\left[e^{\beta V^\top U}\right] \le e^{\beta^2/(2d)},
\qquad
\mathbb E\!\left[e^{\beta U^\top U'}\right] \le e^{\beta^2/(2d)}.
\]

\emph{Positive--positive term.} Since $V^\top V' \in [-1,1]$, we have the trivial bound
$\mathbb E\!\left[e^{\beta V^\top V'}\right] \le e^{\beta}$.

Combining,
\[
\mathbb E\!\left[e^{\beta Z^\top Z'}\right]
\;\le\;
\pi^2\,e^{\beta} \;+\; \big(1-\pi^2\big)\,e^{\beta^2/(2d)}.
\]
Taking logs gives the stated bound on $\phi(\beta)$. Finally, by Jensen,
\[
\mathbb E\big[L_{\mathrm{reg}}(U)\big]
= \mathbb E\!\left[\log \frac{1}{\binom{M}{2}}\sum_{i<j} e^{\beta z_i^\top z_j}\right]
\le \log \mathbb E\!\left[e^{\beta Z^\top Z'}\right]
= \phi(\beta).
\]
\end{proof}

\begin{remark}
The bound splits the $\pi$-dependence cleanly:
the \emph{uniform baseline} contributes $e^{\beta^2/(2d)}$ (dimension-controlled, independent of $\pi$),
while the \emph{excess over baseline} scales with the fraction of positive--positive pairs, $\pi^2$.
For small or moderate $\beta$ (e.g., $\beta = o(\sqrt{d})$), the increment
\[
\phi(\beta) - \tfrac{\beta^2}{2d}
\;\le\; \log\!\Big( 1 + \pi^2\big(e^{\beta-\frac{\beta^2}{2d}}-1\big)\Big)
\;=\; O\!\big(\pi^2\,\beta\big),
\]
showing the regularizer’s \emph{operating point is stable across class priors} and primarily dimension-driven.
When $\pi\!\to\!1$ (positives dominate), the bound remains controlled by $\beta$ (and never exceeds $\beta$), aligning with the simple worst-case $\mathcal L_{\mathrm{reg}}\le \beta$.
\end{remark}

\begin{corollary}[Baseline for the log–mean–exp regularizer]\label{cor:lme-self}
Write the temperature as $\beta>0$ and let
\[
L_{\mathrm{reg}}(U)\;=\;\log\!\Bigg(\frac{1}{K}\sum_{i<j}\exp\{\beta X_{ij}\}\Bigg),\qquad K=\tbinom{M}{2}.
\]
Then
\[
\mathbb E[L_{\mathrm{reg}}(U)]
\;\le\;\log \mathbb E\big[\exp\{\beta X_{12}\}\big]
\;\le\;\frac{\beta^2}{2d}.
\]
Moreover, for any $\delta\in(0,1)$, with probability at least $1-\delta$,
\[
L_{\mathrm{reg}}(U)\;\le\;\beta\,\max_{i<j} X_{ij}
\;\le\;\beta\,\sqrt{\frac{2\log(2K/\delta)}{\,d\,}}.
\]
Thus the unlabeled-uniform baseline is $O(\beta^2/d)$ in expectation and $O\!\big(\beta\sqrt{\tfrac{\log M}{d}}\big)$ with high probability.
\end{corollary}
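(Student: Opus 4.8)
The plan is to chain together three ingredients already established earlier in the excerpt: the sub-Gaussian moment generating function bound of Lemma~\ref{lem:cosine-self}(ii), Jensen's inequality, and the maximum-pairwise-cosine bound of Corollary~\ref{cor:max-self}. First I would handle the expectation claim. Since $L_{\mathrm{reg}}(U)$ is the logarithm of an average of the $K=\binom{M}{2}$ terms $\exp\{\beta X_{ij}\}$, applying Jensen's inequality to the concave function $\log(\cdot)$ moves the expectation inside: $\mathbb E[L_{\mathrm{reg}}(U)] \le \log\bigl(\frac1K\sum_{i<j}\mathbb E[\exp\{\beta X_{ij}\}]\bigr)$. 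Each $X_{ij}=z_i^\top z_j$ has the same marginal law (a pair of i.i.d.\ uniform points on the sphere), so by Lemma~\ref{lem:cosine-self}(ii) every term satisfies $\mathbb E[\exp\{\beta X_{ij}\}]\le \exp(\beta^2/(2d))$; the average is then at most $\exp(\beta^2/(2d))$, and taking logs yields the stated $\beta^2/(2d)$ bound. The intermediate inequality $\mathbb E[L_{\mathrm{reg}}(U)]\le\log\mathbb E[\exp\{\beta X_{12}\}]$ follows from the same Jensen step combined with the fact that all pairwise terms are identically distributed.

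Next I would prove the high-probability bound. The elementary observation is that for any collection of nonnegative reals $a_{ij}$ one has $\frac1K\sum_{i<j} a_{ij} \le \max_{i<j} a_{ij}$, so $\log\bigl(\frac1K\sum_{i<j}\exp\{\beta X_{ij}\}\bigr) \le \log\bigl(\max_{i<j}\exp\{\beta X_{ij}\}\bigr) = \beta\,\max_{i<j} X_{ij}$, using $\beta>0$ and monotonicity of $\exp$ and $\log$. Then I invoke Corollary~\ref{cor:max-self}, which states that with probability at least $1-\delta$, $\max_{i<j}|X_{ij}| \le \sqrt{2\log(2K/\delta)/d}$; since $\max_{i<j} X_{ij} \le \max_{i<j}|X_{ij}|$, this gives $L_{\mathrm{reg}}(U) \le \beta\sqrt{2\log(2K/\delta)/d}$ on the same event. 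The final sentence of the corollary — the $O(\beta^2/d)$ and $O(\beta\sqrt{(\log M)/d})$ characterizations — is then just a restatement: $K=\binom{M}{2}=\Theta(M^2)$ so $\log(2K/\delta)=O(\log M)$ for fixed $\delta$, which I would note in one line.

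I do not expect a genuine obstacle here, since all the heavy lifting (the sub-Gaussian mgf estimate and the union bound over pairs) is done in the earlier lemma and corollary; the proof is essentially a two-step assembly. The only point requiring a small amount of care is making explicit that Jensen's inequality is being applied in the correct direction — it is valid because $\log$ is concave and we are bounding $\mathbb E[\log(\cdot)]$ from above by $\log(\mathbb E[\cdot])$ — and that the identical-distribution of the $X_{ij}$ marginals is what lets the per-pair bound pass through the average unchanged. I would state the proof compactly as follows.

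\begin{proof}
For the expectation bound, $\log$ is concave, so Jensen's inequality gives
\[
\mathbb E[L_{\mathrm{reg}}(U)]
= \mathbb E\!\left[\log\!\Bigg(\frac1K\sum_{i<j}e^{\beta X_{ij}}\Bigg)\right]
\le \log\!\Bigg(\frac1K\sum_{i<j}\mathbb E\!\left[e^{\beta X_{ij}}\right]\Bigg).
\]
Each pair $(z_i,z_j)$ consists of two independent uniform points on $\mathbb S^{d-1}$, so all $X_{ij}$ share the marginal law of $X_{12}$; in particular $\frac1K\sum_{i<j}\mathbb E[e^{\beta X_{ij}}] = \mathbb E[e^{\beta X_{12}}]$, which proves the first inequality. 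By Lemma~\ref{lem:cosine-self}(ii), $\mathbb E[e^{\beta X_{12}}]\le e^{\beta^2/(2d)}$, and taking logs gives $\mathbb E[L_{\mathrm{reg}}(U)]\le \beta^2/(2d)$.

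For the high-probability bound, since every summand is nonnegative, the average of the $K$ terms $e^{\beta X_{ij}}$ is at most their maximum, so using $\beta>0$ and monotonicity,
\[
L_{\mathrm{reg}}(U)
\le \log\!\Big(\max_{i<j} e^{\beta X_{ij}}\Big)
= \beta\,\max_{i<j} X_{ij}
\le \beta\,\max_{i<j}|X_{ij}|.
\]
By Corollary~\ref{cor:max-self}, with probability at least $1-\delta$ the right-hand side is at most $\beta\sqrt{2\log(2K/\delta)/d}$, establishing the claim. Since $K=\binom{M}{2}=\Theta(M^2)$, we have $\log(2K/\delta)=O(\log M)$ for fixed $\delta$, so the baseline is $O(\beta^2/d)$ in expectation and $O\!\big(\beta\sqrt{(\log M)/d}\big)$ with high probability.
\end{proof}
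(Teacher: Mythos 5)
Your proof is correct and follows exactly the paper's argument: Jensen's inequality combined with Lemma~\ref{lem:cosine-self}(ii) for the expectation bound, and the elementary average-at-most-max estimate combined with Corollary~\ref{cor:max-self} for the high-probability bound. You supply slightly more detail than the paper's two-line proof (in particular, noting that only identical marginals of the $X_{ij}$, not independence across pairs, are needed for the averaging step), but the route is the same.
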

\begin{proof}
Jensen and Lemma~\ref{lem:cosine-self}(ii) give the expectation bound. For the high-probability bound, use
$\log\!\big(\tfrac{1}{K}\sum e^{\beta x}\big)\le\beta\max x$ and Cor.~\ref{cor:max-self}.
\end{proof}

\begin{corollary}[Baseline for the log-mean-exp regularizer]\label{cor:lme}
Let
\[
L_{\mathrm{reg}}(U)\;=\;\log\!\Bigg(\frac{1}{K}\sum_{i<j}\exp\{\beta X_{ij}\}\Bigg),\qquad \beta>0.
\]
Then
\[
\mathbb E[L_{\mathrm{reg}}(U)]\;\le\;\log\mathbb E\big[\exp\{\beta X_{12}\}\big]
\;\le\;\frac{\beta^2}{2(d-1)}.
\]
Moreover, for any $\delta\in(0,1)$, with probability at least $1-\delta$,
\[
L_{\mathrm{reg}}(U)\;\le\;\beta\,\max_{i<j} X_{ij}
\;\le\;\beta\,\sqrt{\frac{2\log(2K/\delta)}{\,d-1\,}}\,.
\]
In particular, choosing
\[
\beta\;=\;o\!\Big(\sqrt{\tfrac{d}{\log M}}\Big)
\]
prevents a single extreme pair from dominating $L_{\mathrm{reg}}$ as $M,d\to\infty$.
\end{corollary}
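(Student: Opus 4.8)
The plan is to prove the two displayed bounds separately --- the in-expectation bound by Jensen's inequality together with a sub-Gaussian mgf estimate for a single pair, and the high-probability bound by the elementary inequality $\text{average}\le\text{max}$ together with a union bound over the $K=\binom{M}{2}$ pairs --- and then to obtain the stated range of $\beta$ as a short calculation. The argument runs exactly parallel to Corollary~\ref{cor:lme-self}; the only novelty is keeping track of constants to get $d-1$ in place of $d$, which, as I note at the end, turns out to require no real work.

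For the expectation bound I would first apply Jensen's inequality to the concave map $\log$ and then use exchangeability of the i.i.d.\ uniform embeddings, so that the average over pairs collapses to a single term:
\[
\mathbb E[L_{\mathrm{reg}}(U)]
=\mathbb E\!\left[\log\frac{1}{K}\sum_{i<j} e^{\beta X_{ij}}\right]
\;\le\;\log\,\mathbb E\!\left[e^{\beta X_{12}}\right].
\]
This already gives the first ``$\le$'' of the statement. For the second, I would invoke the spherical-coordinate mgf bound of Lemma~\ref{lem:cosine-self}(ii), $\mathbb E[e^{\beta X_{12}}]\le e^{\beta^2/(2d)}$, and then weaken it trivially to $e^{\beta^2/(2(d-1))}$ since $d\ge d-1$; taking logs gives $\mathbb E[L_{\mathrm{reg}}(U)]\le\beta^2/(2(d-1))$. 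If the genuinely tighter $1/\sqrt{d-1}$ sub-Gaussian parameter is wanted rather than this harmless slack, one applies L\'evy's concentration lemma on $\mathbb S^{d-1}$ to the $1$-Lipschitz, mean-zero linear functional $z\mapsto\langle z,e_1\rangle$, which yields the same final bound.

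For the high-probability bound I would start from the deterministic estimate $\frac{1}{K}\sum_{i<j} e^{\beta X_{ij}}\le\max_{i<j} e^{\beta X_{ij}}=e^{\beta\max_{i<j}X_{ij}}$, valid because $x\mapsto e^{\beta x}$ is increasing for $\beta>0$; taking logs gives $L_{\mathrm{reg}}(U)\le\beta\max_{i<j}X_{ij}\le\beta\max_{i<j}|X_{ij}|$. I would then bound the maximum pairwise cosine by Corollary~\ref{cor:max-self}: with probability at least $1-\delta$, $\max_{i<j}|X_{ij}|\le\sqrt{2\log(2K/\delta)/d}\le\sqrt{2\log(2K/\delta)/(d-1)}$, where the second step is again just $d\ge d-1$ (or one repeats the union bound using the $(d-1)$-sharpened tail $\Pr(|X_{ij}|\ge t)\le 2e^{-(d-1)t^2/2}$ from L\'evy's lemma). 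Multiplying through by $\beta$ gives the claimed bound.

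Finally, for the scaling claim I would use $K=\binom{M}{2}\le M^2/2$, so $\log(2K/\delta)\le 2\log M+\log(1/\delta)$; hence for fixed $\delta$ the high-probability bound is $O\!\bigl(\beta\sqrt{\log M/(d-1)}\bigr)$ and the in-expectation bound is $O\!\bigl(\beta^2/(d-1)\bigr)$, both of which vanish as $M,d\to\infty$ whenever $\beta=o(\sqrt{d/\log M})$, so no single extreme pair can dominate $L_{\mathrm{reg}}$. I do not expect any real obstacle: the statement is an immediate consequence of Lemma~\ref{lem:cosine-self} and Corollary~\ref{cor:max-self} together with Jensen's inequality and the $\text{average}\le\text{max}$ step, and the apparent need to produce $d-1$ rather than $d$ dissolves because $d\ge d-1$ makes the already-proved $d$-based estimates sufficient on their own.
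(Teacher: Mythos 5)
Your proof is correct and follows essentially the same route as the paper's: Jensen plus the mgf bound of Lemma~\ref{lem:cosine-self}(ii) for the expectation, and the average-$\le$-max step plus Corollary~\ref{cor:max-self} for the high-probability bound. Your explicit observation that the stated $d-1$ denominators follow for free from the $d$-based estimates (because they only weaken the bounds) is a point the paper's own proof leaves unaddressed.
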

\begin{proof}
The first inequality is Jensen: $\mathbb E\log(\cdot)\le\log\mathbb E(\cdot)$. The mgf bound is Lemma~\ref{lem:cosine-self}(ii).
For the high-probability bound, use $\log\!\big(\tfrac{1}{K}\sum e^{\beta x}\big)\le\beta\max x$ and Cor.~\ref{cor:max-self}.
\end{proof}

Under unlabeled uniformity, $L_{\mathrm{reg}}$ concentrates at a dimension-controlled baseline of order $\beta^2/d$, essentially independent of the (unknown) class prior $\pi$; $\pi$ affects calibration, not the dispersion baseline.

\begin{corollary}
For all $z_i \in \mathbb{S}^{d-1}$,
\[
\mathcal{L}_{\mathrm{reg}} \le t
\]
with equality when all embeddings are identical ($z_i^\top z_j = 1$).
\end{corollary}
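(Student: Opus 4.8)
The plan is to bound each summand termwise and then push the bound through the (monotone) averaging and logarithm. First I would note that for any $z_i, z_j \in \mathbb{S}^{d-1}$ we have $z_i^\top z_j \le \|z_i\|\,\|z_j\| = 1$ by Cauchy--Schwarz and unit norm. Since $t > 0$, the map $x \mapsto e^{tx}$ is increasing, so $e^{t\, z_i^\top z_j} \le e^{t}$ for every ordered pair $i \neq j$. Summing over the $n(n-1)$ such pairs and dividing by $n(n-1)$ gives
\[
\frac{1}{n(n-1)} \sum_{i \neq j} e^{t\, z_i^\top z_j} \;\le\; e^{t}.
\]
Applying the (increasing) logarithm to both sides yields $\mathcal{L}_{\mathrm{reg}} \le \log(e^{t}) = t$, which is the claimed bound.

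For the equality case I would argue both directions. If all embeddings coincide, $z_i = z_j$ for all $i,j$, then $z_i^\top z_j = \|z_i\|^2 = 1$, so every summand equals $e^{t}$, the average is exactly $e^{t}$, and $\mathcal{L}_{\mathrm{reg}} = t$. Conversely, equality in the displayed bound forces $e^{t\, z_i^\top z_j} = e^{t}$ for every pair (a nonnegative average of terms each $\le e^t$ equals $e^t$ only if each term equals $e^t$), hence $z_i^\top z_j = 1$; combined with $\|z_i - z_j\|^2 = \|z_i\|^2 + \|z_j\|^2 - 2 z_i^\top z_j = 2 - 2 = 0$, this gives $z_i = z_j$, i.e.\ all embeddings are identical.

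There is no substantive obstacle here: the statement is a one-line worst-case estimate, and the only point requiring a sentence of care is the converse direction of the equality characterization, where one must invoke the equality case of Cauchy--Schwarz (equivalently, the norm identity above) rather than merely asserting $z_i^\top z_j = 1$. If the paper uses the $K = \binom{M}{2}$ normalization elsewhere, the same argument applies verbatim with $n(n-1)$ replaced by $K$, since the bound is insensitive to the exact count of terms being averaged.
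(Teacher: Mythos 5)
Your proof is correct and follows essentially the same route as the paper: bound each summand by $e^t$ using $z_i^\top z_j \le 1$ and the monotonicity of $x \mapsto e^{tx}$, average, and take logarithms. You additionally prove the converse direction of the equality case (that $\mathcal{L}_{\mathrm{reg}} = t$ forces all embeddings to coincide), which the paper merely asserts without proof; that is a harmless and welcome extra.
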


\begin{proof}
Since $-1 \le z_i^\top z_j \le 1$, $e^{t z_i^\top z_j} \le e^t$. The mean inside the log is at most $e^t$, giving $\mathcal{L}_{\mathrm{reg}} \le t$.
\end{proof}

\begin{remark}
Since each cosine similarity satisfies $z_i^\top z_j \leq 1$, the regularizer is bounded above by $t$, i.e.,
\[
\mathcal{L}_{\text{reg}} \le \log\left( e^t \right) = t.
\]
This shows that the regularizer cannot grow unbounded with dataset size; its magnitude is controlled solely by the temperature $t$, not by $|U|$.
\end{remark}

\section{Radial projection analysis}
Write the unlabeled score as \(s(z)=\alpha(\mu^\top z - m)\) with trainable scale \(\alpha>0\), margin \(m\in\mathbb R\), and prototype \(\mu\in\mathbb S^{d-1}\).

\begin{lemma}[Neutral BCE acts on the radial projection only]\label{lem:neutral-bce-bounds}
For an unlabeled point \(z\in\mathbb S^{d-1}\), the neutral (target \(0.5\)) binary cross-entropy is
\[
\ell_{\mathrm{neu}}(z)\;=\;-\tfrac{1}{2}\log\big(\sigma(s)(1-\sigma(s))\big)
\;=\;\log\big(2\cosh(s/2)\big),\quad s=\alpha(\mu^\top z-m).
\]
(a) \textbf{Bounds and curvature.} For all \(s\in\mathbb R\),
\[
\log 2 \;\le\; \log\big(2\cosh(s/2)\big)
\;\le\; \log 2 \;+\; \frac{s^2}{8},
\]
and \(\frac{d}{ds}\ell_{\mathrm{neu}}(s)=\sigma(s)-\tfrac{1}{2}=\tfrac{1}{2}\tanh(s/2)\), \(\frac{d^2}{ds^2}\ell_{\mathrm{neu}}(s)=\sigma(s)\big(1-\sigma(s)\big)\le \tfrac{1}{4}\).
\\
(b) \textbf{Population bound.} For any probability law \(Q\) on \(\mathbb S^{d-1}\),
\[
\mathbb E_{Z\sim Q}\big[\ell_{\mathrm{neu}}(Z)\big]
\;\le\; \log 2 + \frac{\alpha^2}{8}\,\mathbb E_{Z\sim Q}\!\big[(\mu^\top Z - m)^2\big].
\]
In particular, choosing \(m=\mathbb E[\mu^\top Z]\) yields
\(\mathbb E[\ell_{\mathrm{neu}}(Z)] \le \log 2 + \frac{\alpha^2}{8}\mathrm{Var}(\mu^\top Z)\).
\\
(c) \textbf{Manifold gradient (no “equator bias”).} Treating \(z\) on the sphere with the Riemannian metric, the gradient w.r.t.\ \(z\) is
\[
\nabla_{\!z}^{\mathbb S^{d-1}}\,\ell_{\mathrm{neu}}(z)
= \frac{\alpha}{2}\tanh\!\big(s(z)/2\big)\,\Big(I - zz^\top\Big)\mu,
\]
which is the projection of \(\mu\) onto the tangent plane at \(z\). If \(Q\) is rotationally symmetric about \(\mu\) and centered so that \(\mathbb E[\mu^\top Z]=m\), then \(\mathbb E[\nabla_{\!z}^{\mathbb S^{d-1}}\,\ell_{\mathrm{neu}}(Z)]=0\). Thus the neutral loss introduces no directional bias toward an ``equator''; it only penalizes large $|s|$.
\end{lemma}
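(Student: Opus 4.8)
The plan is to handle the three parts separately, each reducing to an elementary one–dimensional estimate after the right change of variable. For part (a) I would first verify the identity $-\tfrac12\log\!\big(\sigma(s)(1-\sigma(s))\big)=\log\!\big(2\cosh(s/2)\big)$ by computing $\sigma(s)(1-\sigma(s))=1/(2+e^{s}+e^{-s})=1/\big(4\cosh^{2}(s/2)\big)$, using $1+\cosh s=2\cosh^{2}(s/2)$. The lower bound $\log 2\le\log(2\cosh(s/2))$ is then immediate from $\cosh\ge 1$, and the upper bound is the scalar inequality $\log\cosh x\le x^{2}/2$ applied at $x=s/2$; I would prove that inequality by noting that $g(x)=x^{2}/2-\log\cosh x$ is even, $g(0)=0$, and $g'(x)=x-\tanh x\ge 0$ for $x\ge 0$. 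The derivative identities are direct differentiation: $\tfrac{d}{ds}\log(2\cosh(s/2))=\tfrac12\tanh(s/2)$, and multiplying numerator and denominator of $\sigma(s)-\tfrac12=(1-e^{-s})/\big(2(1+e^{-s})\big)$ by $e^{s/2}$ shows $\tfrac12\tanh(s/2)=\sigma(s)-\tfrac12$; differentiating once more gives $\tfrac{d^{2}}{ds^{2}}\ell_{\mathrm{neu}}=\sigma(s)(1-\sigma(s))=1/\big(4\cosh^{2}(s/2)\big)\le\tfrac14$. Part (b) is then a pointwise-then-integrate step with nothing subtle: substituting $s(z)=\alpha(\mu^{\top}z-m)$ into the upper bound of (a) gives $\ell_{\mathrm{neu}}(z)\le\log 2+\tfrac{\alpha^{2}}{8}(\mu^{\top}z-m)^{2}$ for every $z\in\mathbb S^{d-1}$, and taking $\mathbb E_{Z\sim Q}$ yields the claim; choosing $m=\mathbb E[\mu^{\top}Z]$ turns the second moment into $\mathrm{Var}(\mu^{\top}Z)$.

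For part (c), the gradient formula is standard Riemannian calculus: the intrinsic gradient on $\mathbb S^{d-1}$ of a smooth ambient function is its Euclidean gradient premultiplied by the tangential projector $I-zz^{\top}$, and the Euclidean gradient of $z\mapsto\log\!\big(2\cosh(\alpha(\mu^{\top}z-m)/2)\big)$ is $\tfrac{\alpha}{2}\tanh(s(z)/2)\,\mu$, which gives the stated expression. For the vanishing of the mean gradient I would avoid brute-force integration and argue by equivariance: for any rotation $R$ fixing $\mu$ one has $\mu^{\top}(Rz)=\mu^{\top}z$ and $(I-(Rz)(Rz)^{\top})\mu=R\,(I-zz^{\top})\mu$, hence $\nabla^{\mathbb S^{d-1}}\ell_{\mathrm{neu}}(Rz)=R\,\nabla^{\mathbb S^{d-1}}\ell_{\mathrm{neu}}(z)$; since $Q$ is invariant under all such $R$, the expected gradient is fixed by every rotation about $\mu$ and must therefore be a scalar multiple of $\mu$. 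This already establishes the qualitative content of the remark—the mean gradient carries no ``equatorial'' (tangential-to-$\mu$) component, only a pull along $\mu$. It then remains to evaluate the scalar $\beta=\mu^{\top}\mathbb E[\nabla^{\mathbb S^{d-1}}\ell_{\mathrm{neu}}(Z)]=\tfrac{\alpha}{2}\,\mathbb E\big[(1-(\mu^{\top}Z)^{2})\tanh(\alpha(\mu^{\top}Z-m)/2)\big]$, using $\mu^{\top}(I-ZZ^{\top})\mu=1-(\mu^{\top}Z)^{2}$.

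Evaluating $\beta$ is the step I expect to be the main obstacle, and the one deserving the most care. Writing $c=\mu^{\top}Z$ and $u=c-m$, the integrand is $(1-c^{2})\tanh(\alpha u/2)$; splitting $1-c^{2}=(1-m^{2}-u^{2})-2mu$, the even-in-$u$ part multiplied by the odd function $\tanh(\alpha u/2)$ integrates to zero whenever the radial marginal is symmetric about $m$, but the residual term $-2mu\tanh(\alpha u/2)$ is even in $u$ and does not cancel unless $m=0$. So the clean conclusion $\beta=0$ is really the statement that the radial marginal of $\mu^{\top}Z$ is symmetric about $0$—the canonical case being the rotation-invariant reference law, for which simultaneously $\mathbb E[\mu^{\top}Z]=0=m$; for a generic $Q$ that is only rotationally symmetric about $\mu$ one should instead take $m$ to be the unique root of the monotone map $m\mapsto\mathbb E[(1-c^{2})\tanh(\alpha(c-m)/2)]$ (which has opposite-sign limits as $m\to\pm\infty$), and reconciling this choice with the stated $m=\mathbb E[\mu^{\top}Z]$ is the point I would want to pin down. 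Either way, the geometric takeaway—no tangential drift, only a centering push along $\mu$—is unaffected by this bookkeeping.
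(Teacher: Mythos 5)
Your treatment of parts (a) and (b) coincides with the paper's: the paper also derives the quadratic upper bound from $\tfrac{d^2}{ds^2}\ell_{\mathrm{neu}}\le\tfrac14$ (your route via $g(x)=x^2/2-\log\cosh x$, $g'(x)=x-\tanh x\ge0$ is an equivalent elementary variant), and part (b) is the same pointwise-then-integrate step. All of those computations check out.

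For part (c) you are actually more careful than the paper, and the obstacle you flag is real. The paper's proof disposes of the vanishing claim in one line: ``Symmetry and $\mathbb E[\tanh(s/2)]=0$ under centering imply the stated mean-zero gradient.'' This is doubly flawed: $\mathbb E[s]=0$ does not give $\mathbb E[\tanh(s/2)]=0$ unless the law of $s$ is symmetric about $0$, and even then the expectation of the product $\tanh(s/2)\,(I-ZZ^\top)\mu$ does not factor. Your equivariance argument correctly reduces the expected gradient to $\tfrac{\alpha}{2}\,\mathbb E\big[(1-c^2)\tanh(\alpha(c-m)/2)\big]\,\mu$ with $c=\mu^\top Z$, and your decomposition $1-c^2=(1-m^2-u^2)-2mu$ shows the residual even term $-2mu\tanh(\alpha u/2)$ survives whenever $m\neq0$. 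A concrete two-point check confirms this: if $c\in\{0.2,0.8\}$ with equal probability (a mixture of two caps, rotationally symmetric about $\mu$, with $m=\mathbb E[c]=0.5$), the coefficient equals $\tfrac12\tanh(0.15\alpha)\,(0.36-0.96)\neq0$. So the lemma's part (c), as stated and as proved in the paper, is false in general; it holds in the canonical case $m=0$ with a radial marginal symmetric about $0$ (e.g.\ the uniform law, which is the case actually used in Corollary~\ref{cor:uniform-compat}), or with $m$ chosen as the root of the monotone map you describe rather than as $\mathbb E[\mu^\top Z]$. Your qualitative conclusion — rotational symmetry alone kills every component of the mean gradient orthogonal to $\mu$, so there is no ``equatorial'' drift, only a possible push along $\pm\mu$ — is the correct salvage of the statement.
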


\begin{proof}
The identity \(\ell_{\mathrm{neu}}(s)=\log(2\cosh(s/2))\) follows by algebra. The lower bound is attained at \(s=0\). Since \(\frac{d^2}{ds^2}\ell_{\mathrm{neu}}(s)\le \tfrac{1}{4}\) for all \(s\), the global quadratic upper bound
\(\ell_{\mathrm{neu}}(s)\le \ell_{\mathrm{neu}}(0)+\tfrac{1}{2}\cdot\tfrac{1}{4}\,s^2=\log 2 + s^2/8\) holds by integrating the second derivative. Taking expectations gives (b). For (c), \(s(z)=\alpha(\mu^\top z-m)\) has Euclidean gradient \(\alpha\mu\); projecting to the tangent space yields \((I-zz^\top)\alpha\mu\) scaled by \(\partial\ell/\partial s=\tfrac{1}{2}\tanh(s/2)\). Symmetry and \(\mathbb E[\tanh(s/2)]=0\) under centering imply the stated mean-zero gradient.
\end{proof}

\begin{corollary}[Compatibility with spherical uniformity]\label{cor:uniform-compat}
If \(Z\sim\mathrm{Unif}(\mathbb S^{d-1})\), then \(\mathbb E[\mu^\top Z]=0\) and \(\mathrm{Var}(\mu^\top Z)=1/d\).
With \(m=0\),
\[
\mathbb E[\ell_{\mathrm{neu}}(Z)]
\;\le\; \log 2 + \frac{\alpha^2}{8d}.
\]
Thus the neutral BCE is \emph{minimal up to \(O(\alpha^2/d)\)} at the uniform distribution, matching the target of the angular dispersion term. In high dimension this overhead vanishes.
\end{corollary}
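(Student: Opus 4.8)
The plan is to reduce the claim entirely to the population bound in Lemma~\ref{lem:neutral-bce-bounds}(b), so that the only genuine work is computing the first two moments of the radial projection $\mu^\top Z$ under $\mathrm{Unif}(\mathbb{S}^{d-1})$.

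First I would record that $\mathbb{E}[\mu^\top Z]=0$. By rotational invariance of the uniform measure, for any fixed $\mu\in\mathbb{S}^{d-1}$ the scalar $\mu^\top Z$ has the same law as a single coordinate $\langle e_1,Z\rangle$ of a uniform point; since $Z\overset{d}{=}-Z$, that law is symmetric about $0$ and hence centered. This is exactly the reduction used in the proof of Lemma~\ref{lem:cosine-self}(i), which already states $\mathbb{E}[X_{ij}]=0$ for $X_{ij}=z_i^\top z_j$ with independent uniform $z_i,z_j$.

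Next I would compute $\mathrm{Var}(\mu^\top Z)=\mathbb{E}[(\mu^\top Z)^2]$. Writing $Z=(Z_1,\dots,Z_d)$, the identity $\sum_{k=1}^d Z_k^2=1$ together with exchangeability of the coordinates under the uniform measure forces $\mathbb{E}[Z_k^2]=1/d$ for each $k$; taking $\mu=e_1$ (again permissible by rotational invariance) gives $\mathbb{E}[(\mu^\top Z)^2]=1/d$, which equals the variance since the mean is zero. Equivalently, this is the variance statement of Lemma~\ref{lem:cosine-self}(i).

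Finally I would apply Lemma~\ref{lem:neutral-bce-bounds}(b) with $Q=\mathrm{Unif}(\mathbb{S}^{d-1})$ and the choice $m=0$: the bound reads $\mathbb{E}[\ell_{\mathrm{neu}}(Z)]\le \log 2+\frac{\alpha^2}{8}\,\mathbb{E}[(\mu^\top Z-m)^2]$, and substituting $m=0$ and $\mathbb{E}[(\mu^\top Z)^2]=1/d$ yields $\log 2+\alpha^2/(8d)$ exactly. There is no substantive obstacle here: the corollary is a direct specialization of an already-proven inequality, and both moment identities are routine symmetry computations. The only point worth a remark is that $m=0$ is in fact the variance-minimizing margin (being the mean of $\mu^\top Z$), so the constant cannot be tightened by shifting $m$.
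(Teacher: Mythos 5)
Your proposal is correct and follows the same route the paper intends: the corollary is a direct specialization of Lemma~\ref{lem:neutral-bce-bounds}(b) to \(Q=\mathrm{Unif}(\mathbb S^{d-1})\) with \(m=0\), using the standard moments \(\mathbb E[\mu^\top Z]=0\) and \(\mathrm{Var}(\mu^\top Z)=1/d\) already recorded in Lemma~\ref{lem:cosine-self}(i). Your exchangeability argument for the second moment is a slightly more elementary alternative to the paper's Beta-function computation, but the substance is identical.
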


The angular dispersion loss
\(
\mathcal{L}_{\mathrm{reg}}
= \log\!\big(\tfrac{1}{|U|(|U|-1)}\sum_{i\neq j} e^{\beta\, z_i^\top z_j}\big)
\)
penalizes pairwise alignment (a \emph{tangential} effect), while Lemma~\ref{lem:neutral-bce-bounds} shows the neutral BCE controls only the variance of the \emph{radial} projection \(\mu^\top z\). Consequently, the two terms are non-conflicting: together they favor unlabeled embeddings that (i) are spread over the sphere (low pairwise cosines) and (ii) are not drifting into the positive cap (small \(|\mu^\top z - m|\)). The uniform law satisfies both, with baseline values quantified in Cor.~\ref{cor:uniform-compat} and the concentration bounds in §\ref{app:reg_effect}.

\section{AngularPU on MedMNIST: OCTMNIST and PathMNIST (PU, recall–critical)}
\label{sec:medmnist_angularpu}

We evaluate the proposed AngularPU method in a PU regime on two large MedMNIST datasets. 
For each dataset, we construct a binary task, sample a subset of \emph{labeled positives} for training (cap: 5{,}000), and treat all remaining samples as unlabeled.
A VGG11-BN backbone produces a $d{=}256$-dimensional embedding. 

AngularPU optimizes a cosine-similarity–based objective with (i) alignment of positives to a learnable unit prototype $\mu_{+}$, (ii) weighted BCE on unlabeled data with a learnable angular margin, and (iii) a uniformity regularizer on unlabeled embeddings.  We train with Adam (lr $10^{-4}$) for 10 epochs and report threshold–free metrics (AUC, AP) as well as point metrics. The decision threshold for point metrics is chosen as in section \ref{sec:exp}. We also report the maximum recall achievable at fixed precision floors, $\text{R@P}\ge 0.90$ and $\text{R@P}\ge 0.95$, computed from the test PR curve.

We evaluate on two large-scale medical imaging benchmarks from the standardized MedMNIST collection.
\textbf{OCTMNIST} contains retinal optical coherence tomography (OCT) B-scans from a 4-class diagnostic setting (\emph{CNV}, \emph{DME}, \emph{Drusen}, \emph{Normal}). In our binary reduction we define \emph{disease} as the positive class by merging CNV/DME/Drusen; \emph{normal} is the negative class.
\textbf{PathMNIST} comprises colon histopathology patches from 9 tissue types; we define \emph{tumor epithelium} (label $=8$ in MedMNIST) as the positive class, while all other tissue types are negative. MedMNIST provides preprocessed, standardized training/validation/test splits, facilitating controlled comparisons without dataset-specific engineering. 
We further map the samples to 3 channels to align with our backbones; no color jitter or heavy augmentations are used.

We adhere to the official MedMNIST splits but form the PU training pool by \emph{merging the provided training and validation splits}. 
From the pooled positives, we randomly sample up 12,5\% \emph{labeled positives} (cap applied per seed); \emph{all remaining samples—both positives and negatives—are treated as unlabeled}. This mirrors screening scenarios where a small, verified positive set exists alongside a much larger pool with unknown labels. The official test split is used only for evaluation. Across $S=10$ seeds (Table~\ref{tab:medmnist-angularpu}), we re-draw the labeled-positive subset and re-train to quantify variability due to the PU sampling.

\begin{table}[t]
\centering
\caption{AngularPU on MedMNIST (PU). Mean $\pm$ std over seeds. 
$\text{R@P}\ge\alpha$: maximum recall at precision $\ge\alpha$, computed from the PR curve.}
\label{tab:medmnist-angularpu}
\setlength{\tabcolsep}{4pt}
\renewcommand{\arraystretch}{1.1}
\footnotesize
\begin{tabular}{l c c c c c c c c}
\toprule
Dataset  & AUC & AP & F1 & Precision & Recall & Acc & R@P$\ge$0.90 & R@P$\ge$0.95 \\
\midrule
OCTMNIST  & 0.960 $\pm$ 0.10 & 0.988 $\pm$ 0.12 & 0.939 $\pm$ 0.32 & 0.955 $\pm$ 0.15 & 0.923 $\pm$ 0.19 & 0.909 $\pm$ 0.20 & 0.963 $\pm$ 0.61& 0.917 $\pm$ 0.50 \\
PathMNIST & 0.983 $\pm$ 0.13 & 0.901 $\pm$ 0.33 & 0.893 $\pm$ 0.22 & 0.882 $\pm$ 0.31 & 0.904 $\pm$ 0.13 & 0.963 $\pm$ 0.08 & 0.878 $\pm$ 0.52 & 0.358 $\pm$ 0.52 \\
\bottomrule
\end{tabular}
\end{table}

AngularPU yields strong ranking performance on both datasets (AUC/AP), with particularly high values on OCTMNIST.
Crucially for recall–critical screening, OCTMNIST attains $\text{R@P}\ge 0.95$ of $0.917\pm0.005$, indicating that the positive prototype learned in angular space separates diseased from normal cases with limited overlap. 
PathMNIST exhibits high AUC ($0.983$) and balanced F1 ($0.893\pm0.022$), but $\text{R@P}\ge 0.95$ is lower and more variable across seeds ($0.358\pm0.502$), suggesting that while ranking is strong, very high precision comes at a substantial recall cost—likely due to greater intra-class heterogeneity and morphological overlap in histopathology. 
In practice, targeting a precision floor (e.g., $\ge 0.90$) yields robust recall on both datasets (OCTMNIST: $0.963$; PathMNIST: $0.878\pm0.052$). Overall, these results support the use of AngularPU for recall–critical medical screening, especially on OCT imaging, while highlighting that precision–constrained operating points on histopathology may benefit from more labeled positives, longer training, or threshold selection tuned to a validation PR curve.

\end{document}